\documentclass{article}




\usepackage[final]{neurips_2024}


\usepackage[utf8]{inputenc} 
\usepackage[T1]{fontenc}    
\usepackage{url}            
\usepackage{nicefrac}       
\usepackage{microtype}      
\usepackage{xcolor}         
\usepackage{amsmath}
\usepackage{amsthm}
\usepackage{amssymb}
\usepackage{amsfonts}
\usepackage{graphicx}
\usepackage{bbm}
\usepackage{booktabs}
\usepackage{tcolorbox}
\usepackage{adjustbox}      
\usepackage{verbatim}
\usepackage[hidelinks,colorlinks=true,citecolor=blue,urlcolor=red]{hyperref}
\usepackage{natbib}
\usepackage{amsthm}
\usepackage{csquotes}
\usepackage{wrapfig}
\usepackage{caption}
\captionsetup{font=footnotesize}
\usepackage{enumitem}
\usepackage{subcaption}
\captionsetup[subfigure]{font={footnotesize}, skip=-1pt, singlelinecheck=false, justification=centering}
\usepackage{wrapfig}
\usepackage{mathtools}
\usepackage{multirow}
\usepackage{cleveref}

\usepackage[textwidth=3cm,bordercolor=orange,backgroundcolor=orange!20,linecolor=pink!50,textsize=scriptsize]{todonotes}

\newlist{todolist}{itemize}{2}
\setlist[todolist]{label=$\square$}

\usepackage{siunitx}
\DeclareSIUnit\px{px}

\usepackage{amsmath,amsfonts,bm}
\usepackage{dsfont}

\def\eqref#1{equation~\ref{#1}}

\def\1{\bm{1}}

\def\vu{{\bm{u}}}
\def\vv{{\bm{v}}}
\def\vw{{\mathbf{w}}}

\DeclareMathAlphabet{\mathsfit}{\encodingdefault}{\sfdefault}{m}{sl}
\SetMathAlphabet{\mathsfit}{bold}{\encodingdefault}{\sfdefault}{bx}{n}

\newcommand{\R}{\mathbb{R}}

\DeclareMathOperator{\sign}{sign}

\newcommand{\LL}{\mathcal{L}}
\providecommand{\nor}[1]{\bigl\|{#1}\bigr\|}

\newtheorem{theorem}{Theorem}

\newtheorem{proposition}[theorem]{Proposition}
\newtheorem{conjecture}[theorem]{Conjecture}

\usepackage{enumitem}
\setitemize{noitemsep,topsep=0pt,parsep=0pt,partopsep=0pt,leftmargin=*}

\newcommand{\myparagraph}{\textbf}

\graphicspath{ {images/} }

\title{Why Do We Need Weight Decay\\in Modern Deep Learning?}

%

\author{ Francesco D'Angelo\thanks{Equal contribution}, \ Maksym Andriushchenko,{\hspace{-1mm}$^*$} Aditya Varre, Nicolas Flammarion\\
    Theory of Machine Learning Lab\\
    EPFL, Lausanne, Switzerland\\
    \fontsize{7.7}{7.7}\texttt{\{francesco.dangelo,maksym.andriushchenko,aditya.varre,nicolas.flammarion\}@epfl.ch}
}

\begin{document}

\maketitle

\begin{abstract}
    Weight decay is a broadly used technique for training state-of-the-art deep networks from image classification to large language models. 
    Despite its widespread usage and being extensively studied in the classical literature, its role remains poorly understood for deep learning. In this work, we highlight that the role of weight decay in modern deep learning is different from its regularization effect studied in classical learning theory. For deep networks on vision tasks trained with multipass SGD, we show how weight decay modifies the optimization dynamics enhancing the ever-present implicit regularization of SGD via the \textit{loss stabilization mechanism}. In contrast, for large language models trained with nearly one-epoch training, we describe how weight decay balances the \textit{bias-variance tradeoff} in stochastic optimization leading to lower training loss and improved training stability. 
    Overall, we present a unifying perspective from ResNets on vision tasks to LLMs: weight decay is never useful as an explicit regularizer but instead changes the training dynamics in a desirable way.  The code is available at \url{https://github.com/tml-epfl/why-weight-decay} 
\end{abstract}

\section{Introduction}
\begin{wrapfigure}{t}{0.30\textwidth}
    \centering
    \vspace{-6mm}
    \includegraphics[width=0.30\textwidth]{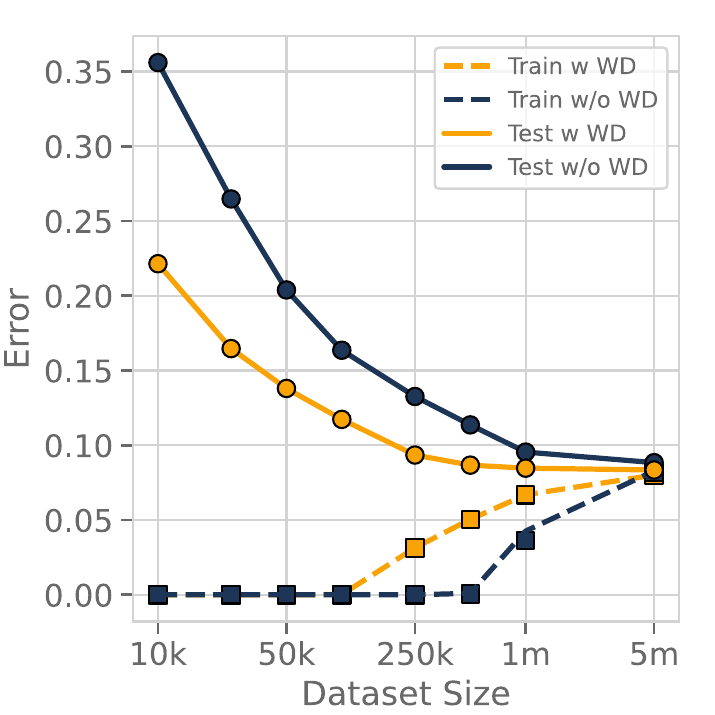}
    \caption{Test error vs. dataset size on CIFAR-10-5m for a \textit{fixed} number of training iteration. Weight decay is helpful in both: the over-training and the under-training, one-pass regime.}
    \vspace{-4mm}
    \label{fig:wd_error_vs_data}
\end{wrapfigure}
The training of modern neural networks broadly falls into two regimes: \emph{over-training}, which involves multiple passes through a dataset and necessitates effective regularization strategies to avoid overfitting; and  \emph{under-training}, characterized by fewer passes due to large amounts of training data and computational constraints~\citep{hoffmann2022training}.  Modern deep learning unequivocally embodies both training regimes: ResNet architectures on computer vision tasks \citep{he2016deep} serve as quintessential examples of the over-training regime, while the training of large language models \citep{brown2020language} stands as a hallmark of the under-training regime. 
Despite their differences, both regimes extensively adopt weight decay as a regularization technique, though its effectiveness and role remain subjects of ongoing debate. For the first regime, ~\citet{zhang2016understanding} showed that even when using weight decay, neural networks can still fully memorize the data, thus questioning its regularization properties. For the second, regularization is inherently unnecessary as the limited number of passes already prevents overfitting. These considerations raise important questions about the necessity and purpose of weight decay, introducing uncertainty about its widespread usage. 
To illustrate the effect of weight decay in the two regimes, we conduct a simple experiment. We train a ResNet18 on subsets of the {CIFAR-5m dataset}~\citep{nakkiran2020deep}  with sizes from $10\,000$ to $5$ mln. The computational budget of each training session is fixed to $5$ mln iterations, which amounts to a range of passes between $500$ and one.  
In the over-training regime (left in Fig.~\ref{fig:wd_error_vs_data}), weight decay does not prevent the models from achieving zero training error, but its presence still improves the test error. Attempting to explain this generalization benefit, recent works \citep{li2019exponential,li2020reconciling} bring forth the hypothesis that it is inadequate to think about weight decay as a capacity constraint since it still bears an effect on the training of scale-invariant models. 
As a result, understanding 
the effect of weight decay on the optimization dynamics becomes crucial to understanding generalization. Nevertheless, this line of work heavily relies on an \textit{effective learning rate} (ELR) which only emerges as a consequence of scale-invariance and therefore does not apply to general architectures.
In the under-training regime (right in Fig.~\ref{fig:wd_error_vs_data}), where the generalization gap vanishes, weight decay seem to facilitate faster training for slightly better accuracy. However, a characterization of the mechanisms through which weight decay impacts the training speed in this regime remains underexplored. 

Our work delves into the mechanisms underlying the benefits of weight decay by training established machine learning models in both regimes: ResNet on popular vision tasks (over-training) and Transformer on text data (under-training).
Towards this goal, we make the following contributions:
\begin{itemize}
\item In the over-training regime, we unveil the mechanism by which weight decay effectively reduces the generalization gap. We demonstrate that combining weight decay with large learning rates enables non-vanishing SGD noise, which through its implicit regularization controls the norm of the Jacobian leading to improved performance. Moreover, our investigation offers a thorough explanation for the effectiveness of employing exponential moving average and learning rate decay in combination with weight decay. 
\end{itemize}

\begin{itemize}
\item In the under-training regime, particularly for LLMs trained with one-pass Adam, we confirm experimentally that weight decay does not bring any regularization effect and is simply equivalent to a modified ELR. We explain the training curves commonly observed with weight decay: through this ELR, weight decay better modulates the bias-variance trade-off, resulting in lower loss. Additionally, we show that weight decay has another important practical benefit: enabling stable training with the \texttt{bfloat16} precision.
\end{itemize}
\vspace{-2mm}
\subsection{Related work}  



The concept of employing $\ell_2$ weight penalty traces back to studies on the stability of solutions for ill-posed problems~\citep{tikhonov1943stability}.
It has since been extensively explored in statistics~\citep{foster1961application, hoerl1962application, hoerl1970ridge}.
%
\citet{krogh1991simple} present one of the earliest systematic studies on weight decay tailored for \textit{neural networks}. 
Generalization bounds, such as those by \citet{shalev2014understanding}, suggest that weight decay can be \textit{sufficient} for generalization, although not necessary, e.g., due to the implicit regularization of gradient methods \citep{soudry2018implicit}. 
\citet{zhang2016understanding} argue that while weight decay improves test accuracy, the improvement is not substantial ($\approx1$-$2\%$ on ImageNet), indicating the key role of implicit regularization.
%
\citet{loshchilov2017decoupled} highlight the distinct effects of weight decay and $\ell_2$ regularization, particularly for Adam, suggesting that Adam combined with weight decay (AdamW) leads to superior regularization and simpler hyperparameter tuning.
For GPT-3 training, \citet{brown2020language} suggest that they include weight decay to provide \textit{a small amount of regularization}, although we believe it is not the primary reason as we discuss in Sec.~\ref{sec:llm}. 
Multiple works have focused on weight decay as a tool influencing optimization dynamics. 
\citet{van2017l2} emphasizes that weight decay's impact on scale-invariant networks is primarily seen in terms of an effective learning rate.
\citet{zhang2018three} propose three mechanisms of weight decay regularization: 
(1) increasing the effective learning rate for scale-invariant networks, although as we discuss, the same holds for networks beyond scale invariance
(2) approximating the regularization of the input Jacobian for an optimizer inspired by second-order methods, (3) inducing a specific dampening effect in this optimizer.
\citet{li2019exponential, li2020reconciling} explore the optimization properties of scale-invariant deep networks for which the effective learning rate can be derived. 
\citet{lewkowycz2020training} suggest that the best generalization is achieved with the smallest $\lambda$ although it necessitates longer training.  Additionally, \citet{lewkowycz2021decay} propose a criterion for detecting when to decay the learning rate based on the evolution of the weight norm. 
\citet{bjorck2021understanding} explore the effect of decoupling weight decay, especially in the early stage of training. 
\citet{li2022robust} make BERT architecture scale-invariant to enhance training stability and make it more compatible with standard SGD.
Recently, \citet{kosson2023rotational} show a mechanism through which weight decay balances rotational updates across different layers that motivates a new optimizer.

The seminal paper of \citet{krizhevsky2012imagenet} that introduced AlexNet suggest that weight decay serves not only as  a regularizer but also reduces the model's training error, functioning as an \textit{optimization tool}.
%
In recent work, \citet{hoffmann2022training} briefly observe that weight decay enhances the training performance of Adam for training LLMs, but only after  $\approx 80\%$ of the total iterations. However, they do not provide an explanation for this behavior, a point we delve into in Sec.~\ref{sec:llm}.
\section{Weight decay in the over-training regime}
\label{sec:overparam_deep_learning}
In this section, we delve into the influence of weight decay in the over-training regime, with a specific focus on image classification tasks. We focus on the training of ResNet models~\citep{he2016deep} using SGD on Tiny-ImageNet \citep{wu2017tiny} and report additional experiments in appendix \ref{app:over_training} for VGG, Resnet32 and scale-invariant Resnet architectures on CIFAR10 and CIFAR100 \citep{krizhevsky2009learning}.


%


\begin{figure*}
\centering
    \begin{subfigure}{0.24\textwidth}
    \includegraphics[width=\textwidth]{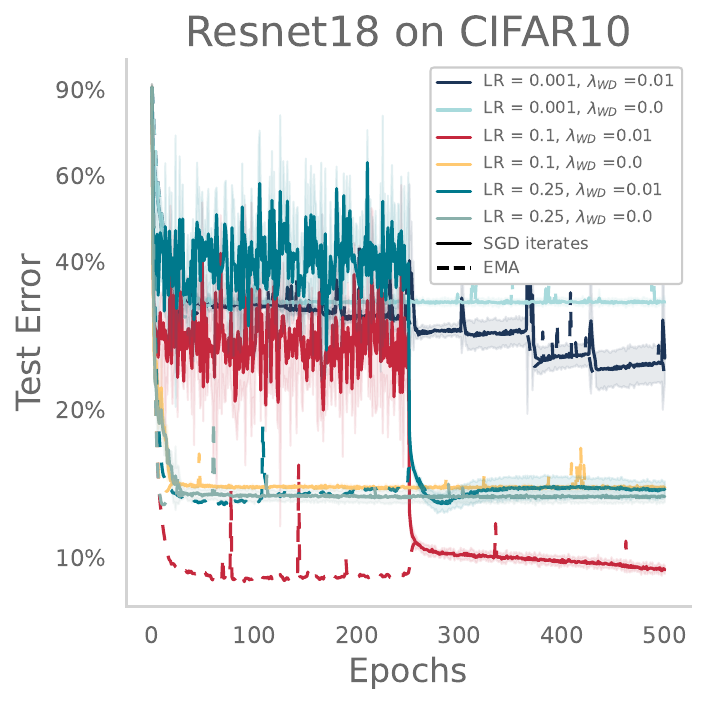}
    \caption{}
    \label{fig:cifar10_test_e}
    \end{subfigure}
    \begin{subfigure}{0.24\textwidth}
    \includegraphics[width=\textwidth]{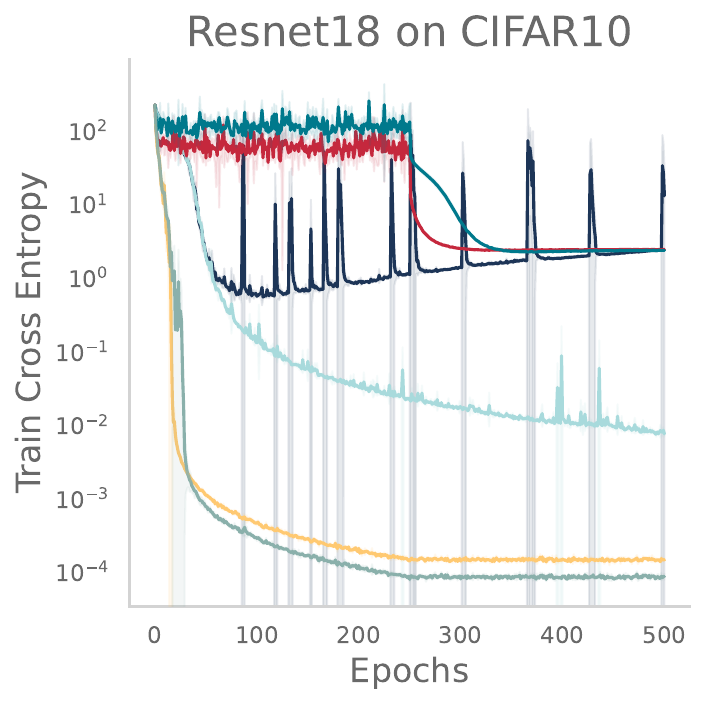}
    \caption{}
    \label{fig:cifar10_CE}
    \end{subfigure}
    \begin{subfigure}{0.24\textwidth}
    \includegraphics[width=\textwidth]{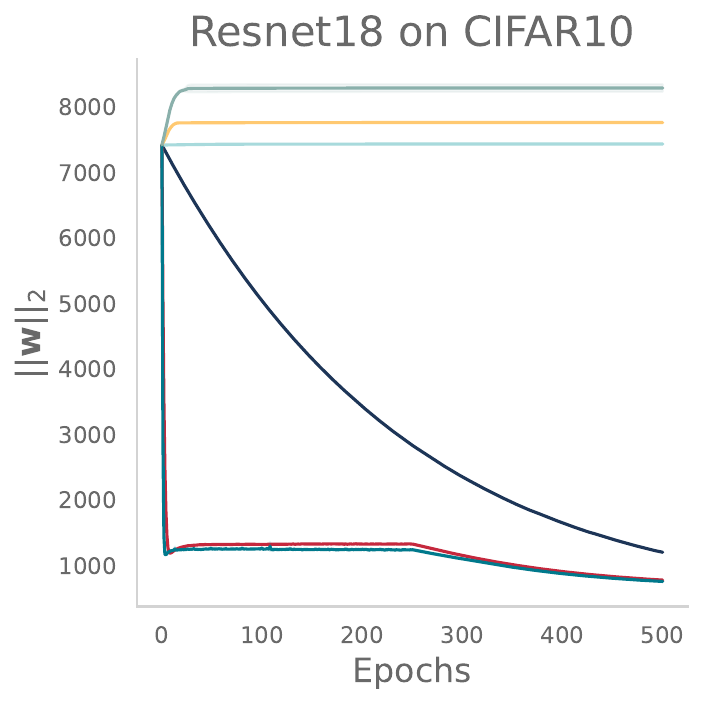}
    \caption{}
    \label{fig:cifar10_L2}
    \end{subfigure}
    \begin{subfigure}{0.24\textwidth}
    \includegraphics[width=\textwidth]{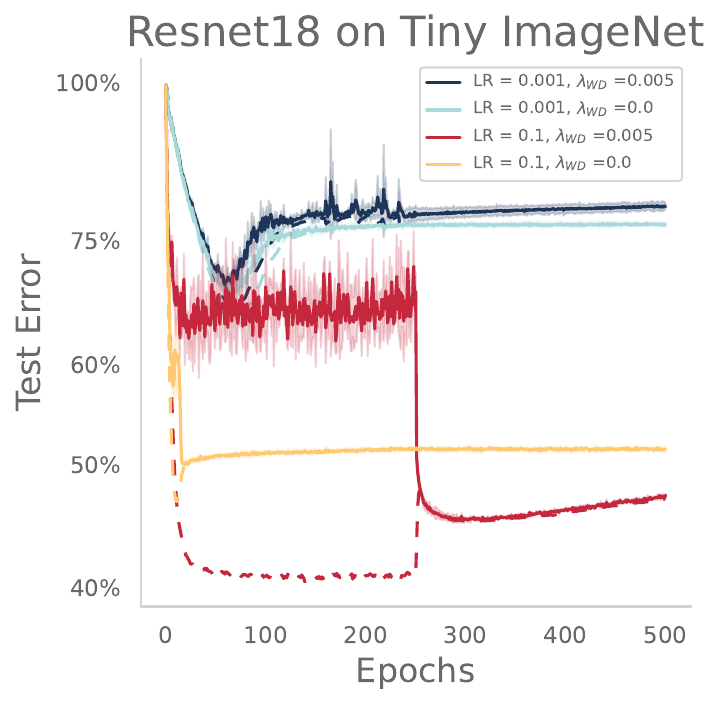}
    \caption{}
    \label{fig:timgn_test_e}
    \end{subfigure}
\caption{\textbf{Training with and w/o weight decay.} We report the test error for Resnet18 on CIFAR-10 (\ref{fig:cifar10_test_e}) and Tiny-ImageNet (\ref{fig:timgn_test_e}) trained with and without weight decay and with small and large learning rates.  We also include the correspondent EMA, represented by dashed lines. After the first 250 epochs the learning rate is decayed to $\eta = 10^{-3}$ for all the curves. We report also the L2 norm of the parameters (\ref{fig:cifar10_L2}) and Train CE (\ref{fig:cifar10_CE}) which after the decay converges to the same value for all the runs with the same $\lambda$. }
\label{fig:main_train_CE}
\vspace{-2mm}
\end{figure*}

\myparagraph{Notations and setup.} Let $(x_i,y_i)_{i = 1}^{n}$ be the training inputs and labels where $x_i \in \R^{d},~y_i \in \R^{c}$, and $c$ is number of classes. Let $h: \R^{p} \times \R^{d} \to \R^{c}$ be the hypothesis class of neural network and for any parameter $\vw \in \R^{p}$ where the function $h_{\vw}( \cdot): \mathbb{R}^{d} \to \R^{c}$ represents the network predictions. 
The training loss $\LL$ and the $\ell_2$-regularized  loss $\LL_{\lambda}$, for $\lambda\geq 0$, are given by: 
\begin{align*}
\begin{aligned}
    \LL(\vw) = \frac{1}{N} \sum_{i=1}^{N} \ell \left(y_i,  h_{\vw}(x_i) \right) \quad \LL_{\lambda}(\vw) = \LL(\vw) + \frac{\lambda}{2} \nor{\vw}^2. 
\end{aligned}
\end{align*}
where $\ell(\cdot,\cdot):\R^c \times \R^c \to \R$ denotes the cross-entropy (CE) loss function. With $i_t \sim \mathbb{U}([N])$, the SGD algorithm on $\LL_{\lambda}$ (here with batch size 1 and with replacement) with a learning rate (LR) $\eta$ is
\begin{align} \label{eq:sgd}
    \vw_{t+1} = \vw_{t} - \eta \nabla_{\vw} \ell\left( y_{i_t}, h({\vw_t}, x_{i_t}) \right)  - \eta \lambda \vw_{t}. 
\end{align}
 Along the training we track three different iterates: (1) \textbf{large-LR} denoted by $\vw_t$ which use a large constant LR to exploit the SGD noise, (2) \textbf{fine-tuning} $\tilde{\vw}_t$ which starting from $\vw_t$ use a small LR and (3) the \textbf{exponential moving average} (EMA) $\bar{\vw}_t$ along the large-LR iterates.

\subsection{Loss stabilization and weight decay}

%
To understand whether minimizing the regularized objective $\LL_{\lambda}$ alone ensures optimal generalization, we compare test errors in Fig.~\ref{fig:cifar10_test_e} across various settings. Although both large and small LRs minimize the regularized objective, the evidence that optimal performance is achieved exclusively with large LRs indicates that \textit{the objective alone is insufficient to explain the benefits of WD or ensure generalization}.\footnote{The red, blue and green curves have the same CE~\ref{fig:cifar10_CE} and norm~\ref{fig:cifar10_L2}, hence same $\LL_{\lambda}$ but different test error~\ref{fig:cifar10_test_e}.}
This experiment reaffirms the widely acknowledged consensus that \textit{implicit regularization induced by the LR is crucial}~\citep{keskar2016large,li2019towards,andriushchenko2022sgd}. Despite revealing an interplay between weight decay and large initial LR,
the understanding of the corresponding dynamics remains limited. In this section, our goal is to comprehensively understand these dynamics, particularly to elucidate the difference in generalization between training with and without weight decay and using different learning rates, as observed  in Fig.~\ref{fig:cifar10_test_e}. 
Given the regularization of the $\ell_2$ norm of parameters, it is natural to wonder whether weight decay's improvement primarily stems from its ability to control the norm of the  trained model.
The experiment in Fig.~\ref{fig:cifar10_L2} clearly illustrates that distinct training trajectories, while resulting in the same final $\ell_2$ norm for parameters, can yield different levels of generalization stating that 
\textit{the $\ell_2$-norm of the learned model's parameters is inconsequential}.
This observation suggests that once the norm is constrained by weight decay, the critical factor influencing the model's generalization is the subsequent choice of LR.
%
We should note that neural networks can be explicitly made 
scale-invariant by means of normalization layers and small architectural changes. \citet{li2019exponential} have used this setting to reveal that the training dynamics has an effective learning rate which, depending on the L2-norm of the parameters, reduces the effect of WD to merely a scheduler for the learning rate. 
Our analysis presents a broader perspective that does not depend on scale invariance. At the core of our examination are the unique properties of exponentially tailed loss functions, such as CE: when the data is separable and WD is not applied, the infimum of the loss is at infinity, leading to the unbounded growth of the weight norm~\citep{ji2019implicit,soudry2018implicit}. 
The application of WD, by inhibiting this growth, prevents the decrease of CE loss, which in turn, significantly alters the optimization dynamics. 

Indeed, examining the parameter norm evolution in Fig.~\ref{fig:cifar10_L2}, we notice how it rapidly decreases to stabilize within a small, approximately constant interval. Similarly, the Train CE in Fig.~\ref{fig:cifar10_CE} displays a stabilization effect beyond which it cannot decrease without a reduction in the learning rate. We hypothesize that WD enables an optimization dynamic akin to that on the surface of a sphere of certain radius thus triggering the following essential mechanism: 
\vspace{-2pt}
\begin{center}
    \textit{
    Constraining the parameter norm hinders the decrease of the CE, thereby enabling non-vanishing noise in SGD. This allows SGD implicit regularization to unfold and steer the optimization trajectory.
    }
\end{center} 
\vspace{-2pt}
\vspace{-4pt}
Next, we empirically characterize this implicit regularization mechanism.

\subsection{The noise driven process} 
\label{sec:noisy_process}

 The long-held belief that the implicit regularization property of SGD is pivotal to the generalization capabilities of Neural Networks has been a cornerstone in the field of deep learning~\citep{keskar2016large}. Many theoretical studies~\citep{blanc2020implicit,li2021happens,damian2021label}, attempting to understand this phenomenon, draw upon the essential finding that, in the case of regression and when Gaussian noise is added to the labels, the shape of the covariance of the stochastic gradients matches the shape of the Hessian.
This allows \citet{damian2021label} and \citet{pillaud2022label} to show that the trajectory of the SGD iterates closely tracks the solution of a regularized problem.  
%
In our analysis, we conjecture a similar result; the dynamics of SGD with CE, closely track a regularized process. The important difference in our statement is that unlike~\citet{blanc2020implicit} and \citet{damian2021label}, we do not need to add noise to the labels at each iteration. Instead, weight decay, in combination with large-LR induces a label noise-like behavior via loss stabilization~\citep{andriushchenko2022sgd}.
%
%
To better understand the interplay between weight decay, loss stabilization and the noise of SGD, it is convenient to consider the binary classification case where $y_i \in \{0,1\}$. We also define the Jacobian of the network as $J(x_i,\vw) \coloneq \nabla h_{\vw}(x_i) \in \mathbb{R}^{p}$ and its norm averaged across the dataset: $\nor{J(\vw)}_F^2  = \frac{1}{N}\sum_{i=1}^{N} \mathrm{Tr} \left( \nabla h_{\vw}(x_i) \nabla h_{\vw}(x_i)^\top \right)$.
Denoting the noise of the gradient by  $g_t = \nabla_{\vw} \mathcal{L}(\vw_{t}) - \nabla_{\vw} \ell\left( y_{i_t}, h({\vw_t}, x_{i_t}) \right) $, the SGD update in \eqref{eq:sgd} becomes: 
\begin{align} \label{eq:sgd_expl} 
\vw_{t+1} = ( 1 - \eta \lambda) \vw_t - \eta \nabla \LL (\vw_t) + \eta g_t  \, . 
\end{align}
Furthermore, we consider a Gaussian approximation of the SGD noise, matching the first and second moment of $g_t$. A substantial body of research has built upon this approximation and verified its validity \citep{li2020reconciling, li2021validity, smith2020generalization, xie2020diffusion, li2021happens}. In particular \citet{li2021validity} demonstrated how modelling the SGD noise by a Gaussian is sufficient to understand its generalization.  The mean is zero due to the unbiased mini-batch gradients: $\bar{g}(\vw_t) \coloneq \mathbb{E}[g_t] = 0$ whereas for the second moment:
\begin{align}
\Sigma_{\vw_t} &:= \frac{1}{N} \sum_{i=1}^N \nabla \ell_i(\vw_t) \nabla \ell_i^\top(\vw_t) - \nabla \LL(\vw_t) \nabla \LL^\top(\vw_t)  \approx \frac{1}{N} \sum_{i=1}^N \nabla \ell_i(\vw_t) \nabla \ell_i^\top(\vw_t)  \nonumber \\
    &\approx \frac{1}{N} \sum_{i=1}^N (\ell'_i(\vw_t))^2 \nabla h_{\vw_t}(x_i) \nabla h_{\vw_t}(x_i)^\top \approx \frac{1}{N} \sigma^2_{\eta,\lambda}(\vw_t) \sum_{i=1}^N \nabla h_{\vw_t}(x_i) \nabla h_{\vw_t}(x_i)^\top \, .\label{eqn:sgd_cov}
\end{align}
In equation~\ref{eqn:sgd_cov}, we consider $\nabla \LL(\vw_t) \nabla \LL^\top(\vw_t)$ negligible compared to $\nabla \ell_i(\vw_t) \nabla \ell_i^\top(\vw_t)$ as the gradient noise variance dominates the square of the gradient noise mean. This fact has been used in previous works \citep{moripower22,zhu2018anisotropic, jastrzkebski2017three} and in particular \citet{jastrzkebski2017three} and \citet{saxe2019information} empirically verify it. Finally, we assume that the first derivative is approximately constant across all datapoints: $\ell'_i(\vw_t) \approx \ell'_j(\vw_t) \ \forall i,j$ and denote this common quantity as $\sigma_{\eta,\lambda}(\vw_t)$.  This last approximation is referred to as "decoupling approximation" and has been empirically verified for classification \citep{moripower22}. Furthermore, in App.~\ref{app:cov_approx} we performed additional experiments to verify the decoupling approximation by comparing the spectrum of the SGD covariance with and without this approximation during the large LR phase.

Altogether, these considerations lead to the following formulation of the SGD update: 
\begin{align}
    \vw_{t+1} \approx ( 1 - \eta \lambda)\vw_t - \eta \nabla \LL(\vw_t) - \frac{\eta}{N} \sigma_{\eta,\lambda}(\vw_t) \sum_{i = 1}^{N} \nabla h_{\vw_t}(x_i) \, \xi_i^t, \qquad \text{where } \xi_i^t\sim \mathcal{N}(0,\mathbb{I}).
\end{align} 
This series of approximations, allows us to define the quantity $\sigma_{\eta,\lambda}(\vw_t)$, which has a fundamental role in the characterization of the training dynamics. We refer to it as \emph{the scale of the noise} because it regulates its intensity. Although $\eta$ and $\lambda$ influence the noise scale indirectly through the trajectory of$\vw_t$, we explicitly highlight this dependence to emphasize our objective: to characterize the influence of WD and LR on the stochastic dynamics of SGD. 
We develop this characterization building upon the connection between the Jacobian of the network and the covariance of the SGD noise, which motivates the introduction of the following implicit regularization mechanism:

\begin{figure}[t]
    \centering
    \includegraphics[width=0.75\textwidth]{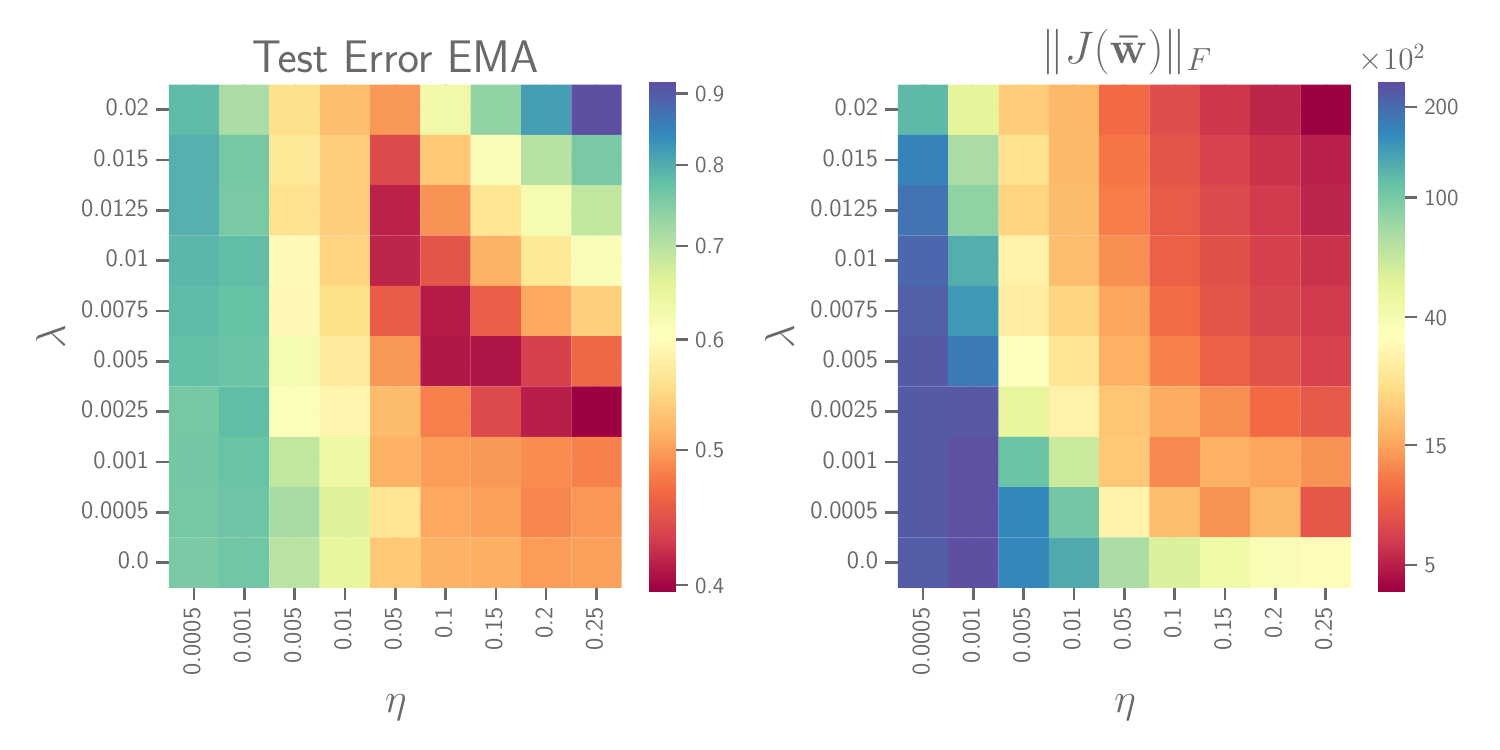}
    \caption{\textbf{Resnet18 on Tiny-ImageNet.} Heatmap of the test error and Jacobian norm for the EMA for all the different combinations of $\eta$ and $\lambda$.} 
    \label{fig:heatmap_main}
\end{figure}
\begin{conjecture} \label{conj:general} Consider the algorithm in Eq.~\ref{eq:sgd} with $\vw_{0}$ initialized from a distribution ${\mu_0\left(\mathbb{R}^{\left(p\right)}\right)}$. For any input $x$, let $\vw_{t}, h(\vw_{t},x)$ be the random variables that denote the iterate at time $t$ and its functional value. The stochastic process $ \left(h(\vw_{t},x)\right)_{t \in \mathbb{N}}$ converges to the stationary distribution $\mu^{\infty}_{\eta,\lambda}(x)$ with mean $\bar{\mu}_{\eta,\lambda}(x) = h \left( \vw_{\eta,\lambda}^{*}  , x \right)$ for which $\vw_{\eta,\lambda}^{*}$ is a first-order stationary point of the following regularized loss: 
\vspace{-3mm}
\begin{equation} 
\bar{\LL}_{\lambda}(\vw)   \coloneq  \LL_{\lambda}(\vw) + \eta\sigma^2_{\eta,\lambda} \nor{J(\vw)}_F^2.
\label{eqn:conj}
\end{equation}
\end{conjecture}
\vspace{-1mm}
The conjecture illustrates how varying noise levels correspond to distinct processes, wherein the mean of each solves a unique regularized problem.  Moreover, it describes how each noise level determines the strength of the regularization. Using the mean to formulate the conjecture is a natural choice; even at stationarity,\footnote{Assuming the existence of a stationary distribution, the iterates $\vw_{t}$ are eventually realizations from it.}  the values of the loss $\LL(\vw_t)$ and the regularizer $\nor{J(\vw)}_F^2$ would be dominated by the noise. To unveil the existence of an implicit regularization and to analyze the evolution of the distribution, we need to look at its summary statistics, in this case, the mean. While insights from Langevin dynamics suggest employing learning rate annealing to converge towards the mean of the stationary distribution, this approach introduces additional complexities which we discuss in Section~\ref{sec:ema_ft}. We instead consider an exponential moving average (EMA) $(\bar{\vw}_{t})_{t\geq 0}$ of the SGD iterates with parameter $\beta = 0.999$.

The most important implication of the conjecture is that the strength of the regularization  $\sigma_{\eta,\lambda}$ depends on both the LR $\eta$ and the WD parameter $\lambda$. 
Our experiments in Fig.~\ref{fig:heatmap_main}, provide empirical validation for this conjecture. When trained with different combinations of $\eta$ and $\lambda$, the EMA converges to models with different test performances. When fixing $\lambda$ there exists an optimal value of learning rate $\eta$ which gives the best test performance while the Jacobian norm monotonically increases. A similar picture can be drawn when fixing the learning rate.  

Therefore, given two solutions $\bar{\mu}_{\eta_{l}, \lambda_l}$ and $\bar{\mu}_{\eta_{s}, \lambda_s}$ for which $\text{Test error}(\bar{\mu}_{\eta_{l}, \lambda_l}) < \text{Test error}(\bar{\mu}_{\eta_{s}, \lambda_s})$; the difference in their performances can be explained with the difference in their regularization strengths  $\sigma_{\eta,\lambda}$. The solution $\bar{\mu}_{\eta_{l}, \lambda_{l}}$ benefits from better regularization and therefore endows better generalization properties. Furthermore, the heatmap for test error in Fig.~\ref{fig:heatmap_main} indicates that the minimum error is not achieved by a single combination of $\eta$ and $\lambda$, but rather along a contour where their product $\eta \times \lambda$ appears to be constant. 
This observation suggests an optimal trade-off between the learning rate and weight decay parameter, characterized by a curve in the parameter space where their product remains constant.
Characterizing this relationship might reveal a useful tool which practitioners can adopt to optimally select values of weight decay and learning rate. Fig.~\ref{fig:scatter_test_e}, \ref{fig:scatter_j_norm} confirm that the product $\eta \lambda$ is the quantity controlling the regularization; combinations of $\eta$ and $\lambda$ with the same product show similar test performances and Jacobian norm. For Tiny-Imagenet, the test error exhibits an optimal value for $\eta \lambda \sim 0.005$ where increases beyond this point lead to \emph{over-regularization} and decreases result in \emph{under-regularization}. Simultaneously, the Jacobian norm $\nor{J}_F$ consistently exhibits a monotonically decreasing trend. 
\begin{figure*}
\centering
       \begin{subfigure}{0.24\textwidth}
               \includegraphics[width=\linewidth]{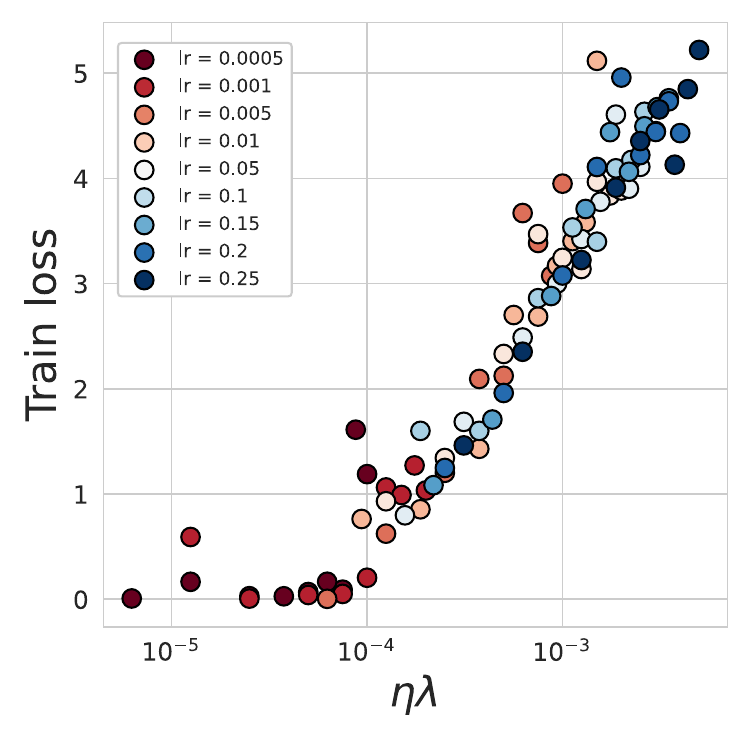}
               \caption{}
               \label{fig:train_loss_eta_lambda}
       \end{subfigure}
       \begin{subfigure}{0.24\textwidth}
               \includegraphics[width=\linewidth]{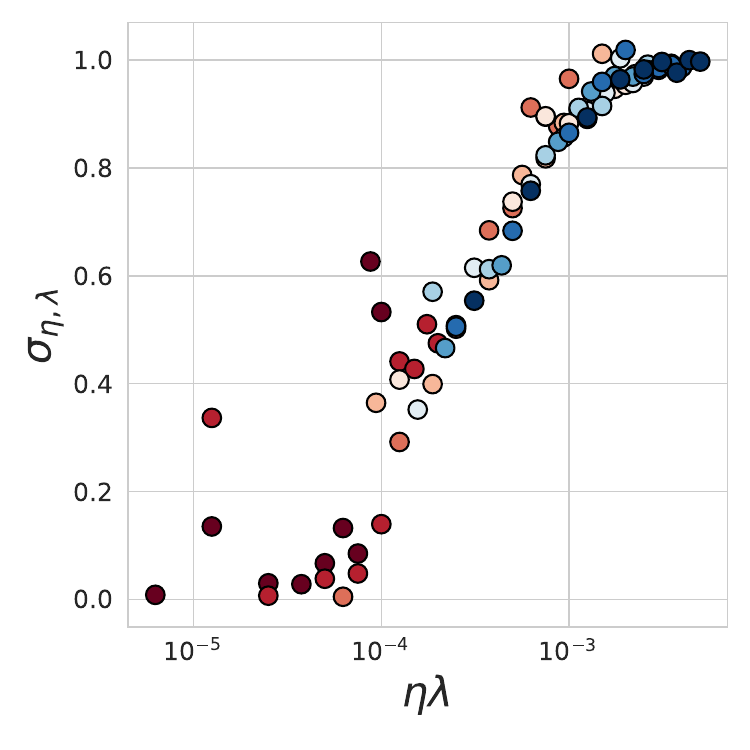}
               \caption{}
               \label{fig:ell_prime_eta_lambda}
       \end{subfigure}
       \begin{subfigure}{0.24\textwidth}
               \includegraphics[width=\linewidth]{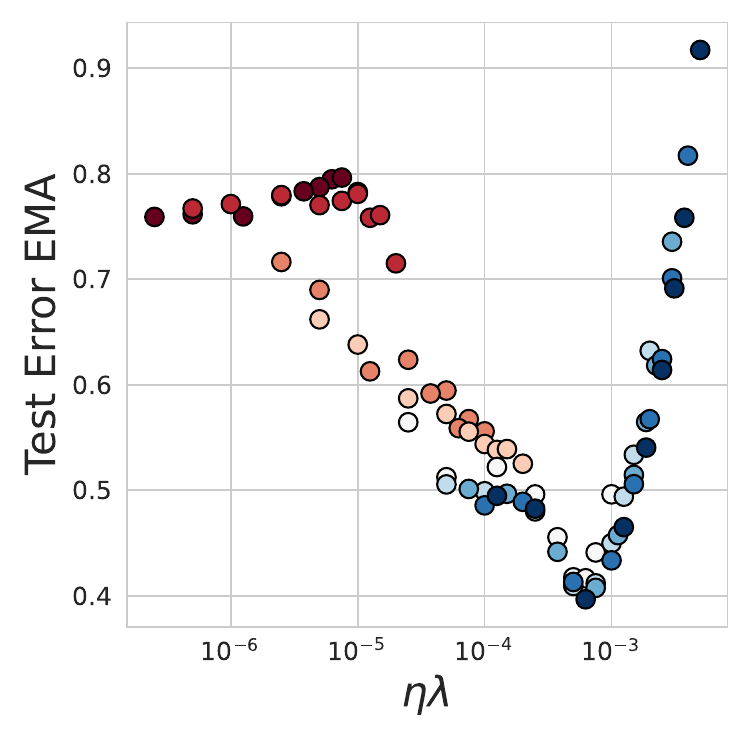}
               \caption{}
               \label{fig:scatter_test_e}
       \end{subfigure}
       \begin{subfigure}{0.24\textwidth}
               \includegraphics[width=\linewidth]{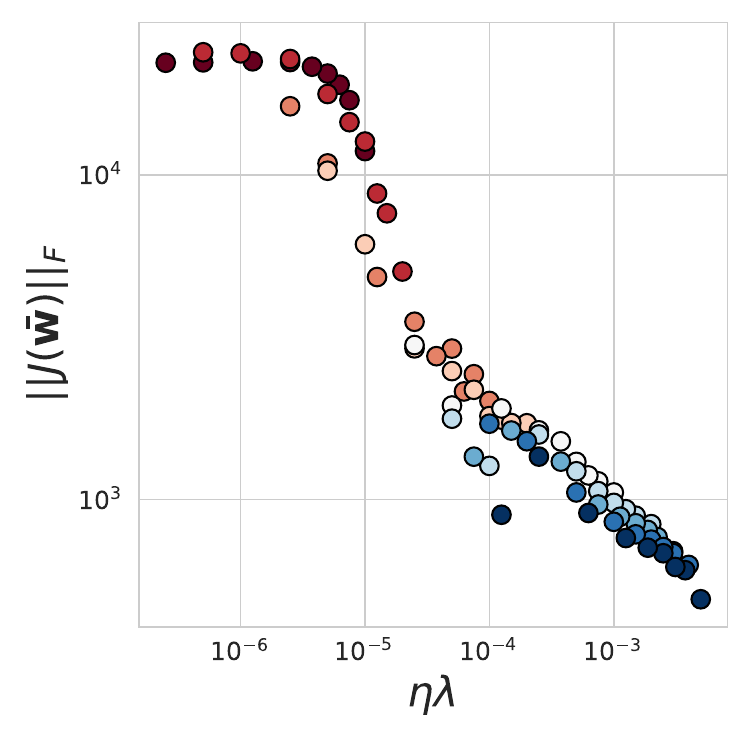}
               \caption{} 
               \label{fig:scatter_j_norm}
       \end{subfigure}
\caption{\textbf{Resnet18 on Tiny-ImageNet.} Training for $200$ epochs with different $\eta$ and $\lambda$; the scale of the noise monotonically increases with the train loss and $\eta \times \lambda$ Fig.~\ref{fig:train_loss_eta_lambda},~\ref{fig:ell_prime_eta_lambda}. The test error instead, presents an optimal value of $\eta \times \lambda$ Fig.~\ref{fig:scatter_test_e} while the Jacobian norm decreases monotonically Fig.~\ref{fig:scatter_j_norm}.}
\label{}
\vspace{-5mm}
\end{figure*}

To better understand the properties of the noise scale during training, we can observe in Fig. \ref{fig:train_loss_eta_lambda}, \ref{fig:ell_prime_eta_lambda} how higher values of the training loss given by larger $\eta \lambda$ correspond to higher values of the noise scale $\sigma_{\eta,\lambda}$. The latter is measured by computing the Frobenius norm of the first derivative of the loss with respect to the predictions\footnote{In the multi-class setting $\ell_i'(\vw_t) \in \mathbb{R}^c$; to quantify the scale, we compute its Frobenius norm.} averaged over all training datapoints $\frac{1}{N} \sum_{i=1}^N\nor{\ell'_i(\vw)}_F$.
The scale of the noise and therefore the strength of the regularization vanish when $\LL(\vw) \approx 0$ which happens for small values of $\eta \lambda$. 
Therefore, WD combined with a large LR stabilizes the CE to a larger value, preventing the noise from vanishing and regulating the implicit regularization. 
This fact further confirms Conjecture \ref{conj:general}, demonstrating that with smaller noise scales, the mean (EMA) tends towards a point where the Jacobian's norm is higher, compared to trajectories with larger noise scales. To further validate our conjecture, we created snapshot ensembles in the ResNet18 on CIFAR-10 setting, by averaging in function space along the SGD trajectory every 10 epochs for the combinations of learning rate (LR) and weight decay (WD) considered. To assess whether the mean of the stationary distribution in function space aligns closely with the EMA, where the Jacobian norm is regularized, we compared the performance of snapshot ensembles with that of the EMA. Additionally, we computed the Total Variation Distance between the softmax outputs of the ensemble and the EMA. The results are reported in App.~\ref{app:ens_conj_ver} and show a strong alignment. 

\myparagraph{The role of the dynamics of the norm.}
As discussed at the end of previous sub-section, after a rapid initial decrease of norm, the optimization resembles the dynamics of SGD projected onto a sphere. We stress that this is the crucial phase in training and the implicit regularization induced by SGD during this spherical optimization leads to better generalization.
To validate this observation and isolate it from the initial norm drop, we investigate the behavior of SGD on a sphere with scale-invariant networks~\citep{li2019exponential}. Scale invariance is chosen for its ease of LR tuning and for comparison with existing works~\citep{kodryan2022training,li2020reconciling}. Fig.~\ref{fig:resnet_cifar_sphere} depicts a similar phenomenon as Fig.~\ref{fig:heatmap_main}, where the test error vs. LR exhibits a U-shaped curve. While \citet{kodryan2022training} makes a similar observation, they do not provide an explanation. We demonstrate that the implicit regularization of the Jacobian norm is the key factor, elucidating its dependence on LR.

\myparagraph{Effective learning rate vs. high training loss.}
\citet{zhang2018three, van2017l2} explored the relationship between LR and WD, introducing the concept of effective LR $\eta_e = \eta / \nor{\vw}_2^2$. These studies highlight that WD, preventing unbounded growth of the norm, enables the training process to evolve with a higher effective LR. This hypothesis is justified only with scale-invariance (which does not hold for general architecture). Furthermore, the underlying mechanism by which a higher LR enhances generalization is understood only in limited settings \citep{li2019towards}.
We propose that a high LR, combined with WD, leads to an increase in $\sigma_{\eta,\lambda}$. This hypothesis allows us to fully characterize and understand the mechanism through which generalization is enhanced.  
%
%

\myparagraph{Mixing in the function space.} 
A simpler conjecture could have been stated in terms of the mixing of the iterates $(\vw_{t})_{t \geq 0}$ towards a solution of the regularized objective $\vw_{\eta}^{*}$.  However, \citet{li2020reconciling} shows empirical evidence against mixing in the parameter space, emphasizing the necessity of considering the function space. Hence, our conjecture is formulated to capture stationarity in function space. 


%

\myparagraph{On the benefit of normalization.} 
%
Our conjecture characterizes the mixing distribution but does not delve into the speed of the mixing process. In our experiments, we observe that normalization layers enable faster mixing. \citet{li2020reconciling} observes a similar phenomenon in the case of scale-invariant networks, specifically the fast equilibrium conjecture, which is addressed by \citet{li2022fast}. We note that this phenomenon persists even when the models are not exactly scale-invariant.
\subsection{EMA and Fine-tuning}
\label{sec:ema_ft}

The large-LR phase sets the stage for SGD's inherent biases to emerge but to actually exploit such bias, reducing the stochastic noise is necessary. This can be attained in two different ways: averaging (EMA) or decaying the learning rate (fine-tuning), both strategies are widely adopted in practice. This section illustrates the relation between the two and highlights the benefits of using one over the other and the implications for our analysis. 
From a practical standpoint, implementing EMA is more efficient than LR-decay because it does not require additional gradient iterations or hyperparameter tuning. While both methods enhance performance, their effectiveness is contingent on being combined with loss stabilization, supporting the hypothesis that the noisy dynamics is the underlying factor for their success. Although EMA shows only a slight advantage, our experiments in Fig.~\ref{fig:resnet18_cifar_test_e_main}, \ref{fig:cifar10_test_e}, \ref{fig:timgn_test_e} demonstrate that it consistently outperforms learning rate decay in various settings.
When empirically validating Conjecture~\ref{conj:general}, the EMA is useful to characterize the limit point (i.e., $t \to \infty$) but cannot adequately capture the dynamics throughout the entire trajectory. This limitation arises because different points along the trajectory are at different loss values, making the comparison of any relevant regularized quantities inconsistent. An approach to overcome this inconsistency is to project the iterate $\vw_{t}$ onto a manifold of constant loss. This can be achieved via early-stopped gradient flow~\citep{li2021happens} (SGD with small LR) on the CE loss with $\lambda=0$ where $\vw_t$ is projected to a nearby point $\tilde{\vw}_t$,  such that $\LL(\tilde{\vw}_t) \sim \text{const.}, \forall t$. In practice, this corresponds to fine-tuning via LR-decay.
%
%
Since after fine-tuning $\LL(\tilde{\vw}_t) \approx \LL(\tilde{\vw}_{t'}), \  \forall t,t'$ see Fig.~\ref{fig:resnet18_cifar_CE_main}, we can compare $\nor{J(\tilde{\vw}_t)}_F$ and $\nor{J(\tilde{\vw}_{t'})}_F$ and understand its evolution. In the experiments detailed in Fig.~\ref{fig:resnet18_cifar_jnorm_main}, we report $\nor{J}_F$ along the fine-tuned iterates $\tilde{\vw}_t$ and observe a decreasing trend, i.e., the sequence $\left\{ \nor{J(\tilde{\vw}_t)} \right\}_{t \geq 0}$ is decreasing.
This fact empirically validates that the entire trajectory of the iterates $(\vw_t)_{t\geq 0}$, closely following the trajectory of the fine-tuned iterates $(\tilde{\vw}_{t})_{t\geq 0}$, bias the model towards a regularized solution that might enhances generalization. This also explains why learning rate schedules, such as step-decay, which starts with a large value and then decrease it, can enhance generalization.
%

\begin{figure*}
\centering
       \begin{subfigure}{0.24\textwidth}
               \includegraphics[width=\linewidth]{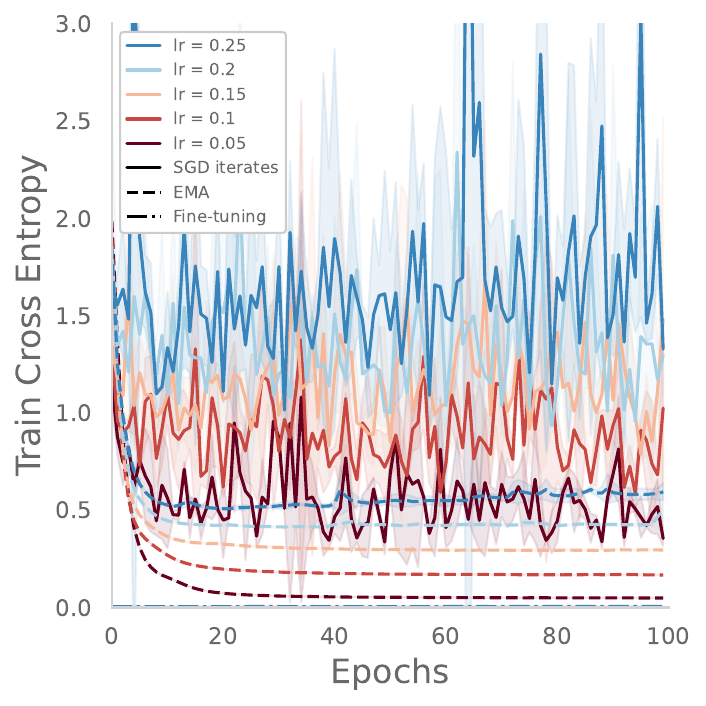}
               \caption{}
               \label{fig:resnet18_cifar_CE_main}
       \end{subfigure}
       \begin{subfigure}{0.24\textwidth}
               \includegraphics[width=\linewidth]{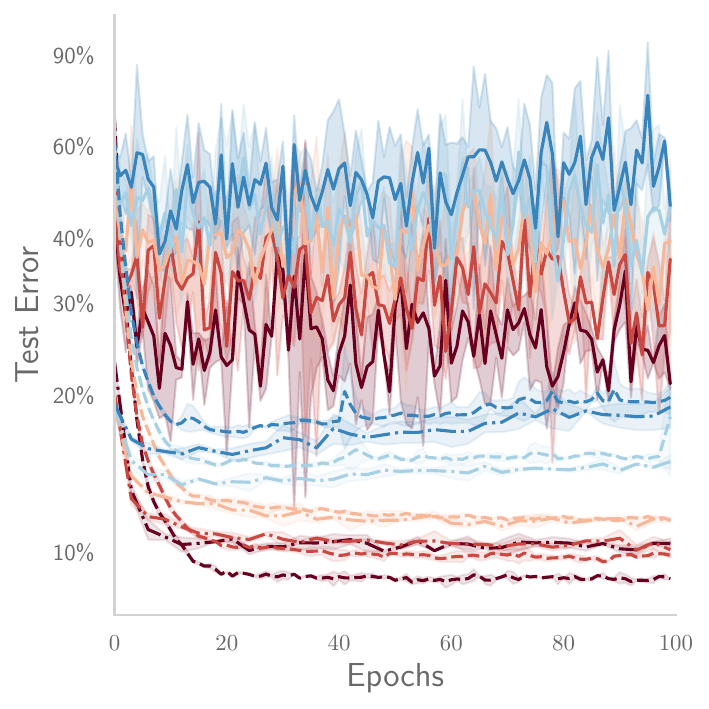}
               \caption{}
               \label{fig:resnet18_cifar_test_e_main}
       \end{subfigure}
       \begin{subfigure}{0.24\textwidth}
               \includegraphics[width=\linewidth]{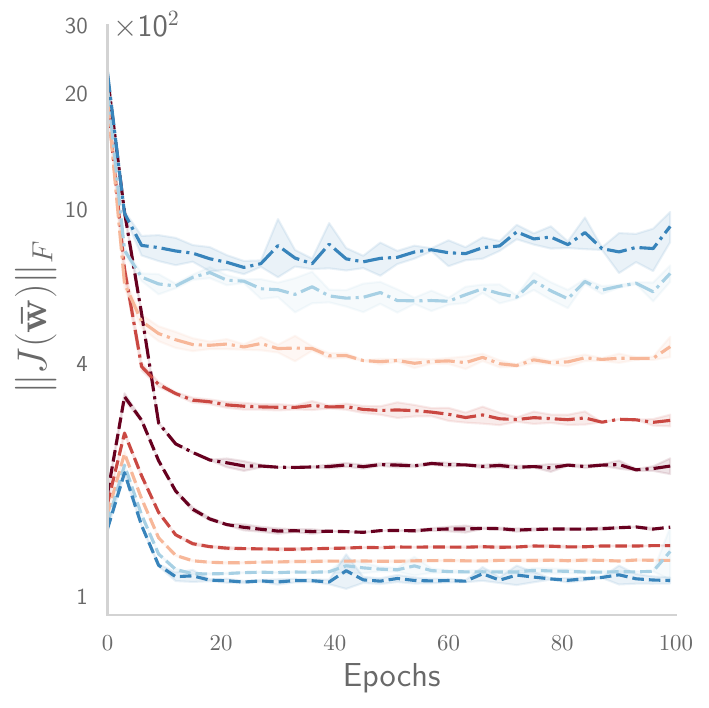}
               \caption{}
               \label{fig:resnet18_cifar_jnorm_main}
       \end{subfigure}
       \begin{subfigure}{0.24\textwidth}
               \includegraphics[width=\linewidth]{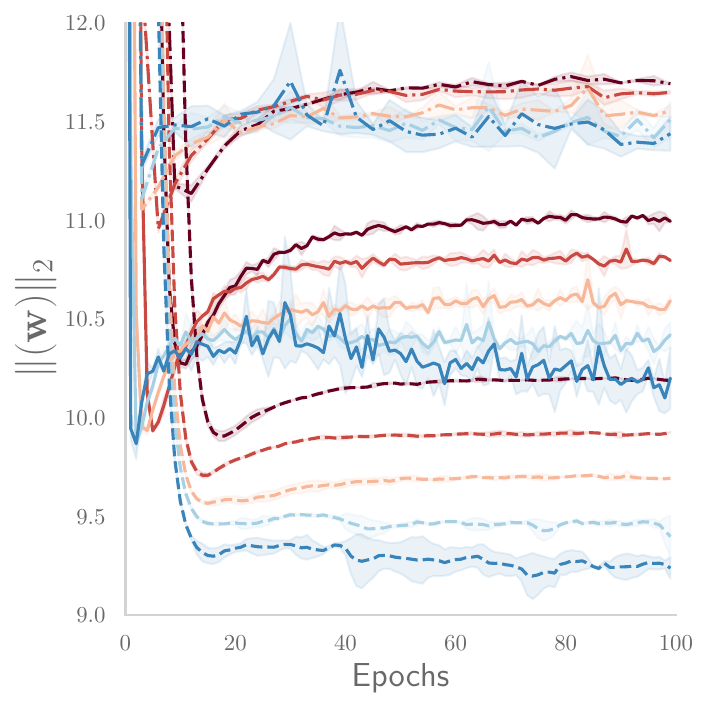}
               \caption{} 
               \label{fig:resnet18_cifar_l2_main}
       \end{subfigure}
\caption{\textbf{EMA vs Fine-tuning.} Training of standard Resnet18 on CIFAR-10 for 100 epochs fixing $\lambda = 0.0125$ and varying the learning rate. In Fig.~\ref{fig:resnet18_cifar_CE_main} we report different levels of loss stabilization, in Fig.~\ref{fig:resnet18_cifar_test_e_main} we report the test errors and in Fig.~\ref{fig:resnet18_cifar_jnorm_main} and Fig.~\ref{fig:resnet18_cifar_l2_main} the norm of the Jacobian and of the weights respectively. The quantities are measured for the SGD iterates, the EMA and the fine-tuning. The latter is performed for 100 epochs every 3 with $\eta = 10^{-3}$.  }
\label{fig:ema_ft}
\vspace{-5mm}
\end{figure*}

Despite providing a straightforward methodology to analyze the trajectory, LR-decay introduces additional complexities that cause deviations from the conjecture. Indeed, in Fig.~\ref{fig:resnet18_cifar_jnorm_main} we observe that the final points of the fine-tuned iterates report the opposite trend compared to the EMA (smaller $\eta$ lead to larger $\nor{J}_F$). This discrepancy is potentially due to the state-dependent nature of the SGD noise covariance in \eqref{eqn:sgd_cov}; decreasing the LR and removing WD can alter the stationary distribution and the regularized objective, leading to a different solution than anticipated by Conjecture~\ref{conj:general}.
\vspace{-3mm}
\section{Weight decay in the under-training regime}
\label{sec:llm}
\vspace{-2mm}
In this section, we investigate how WD enhances optimization in the under-training regime. 
Although the phenomenon is more general, we focus on LLMs for which one-epoch training is typically used. 

\begin{wrapfigure}{t}{0.36\textwidth}
    \centering
    \vspace{-4mm}
    \includegraphics[width=0.36\textwidth]{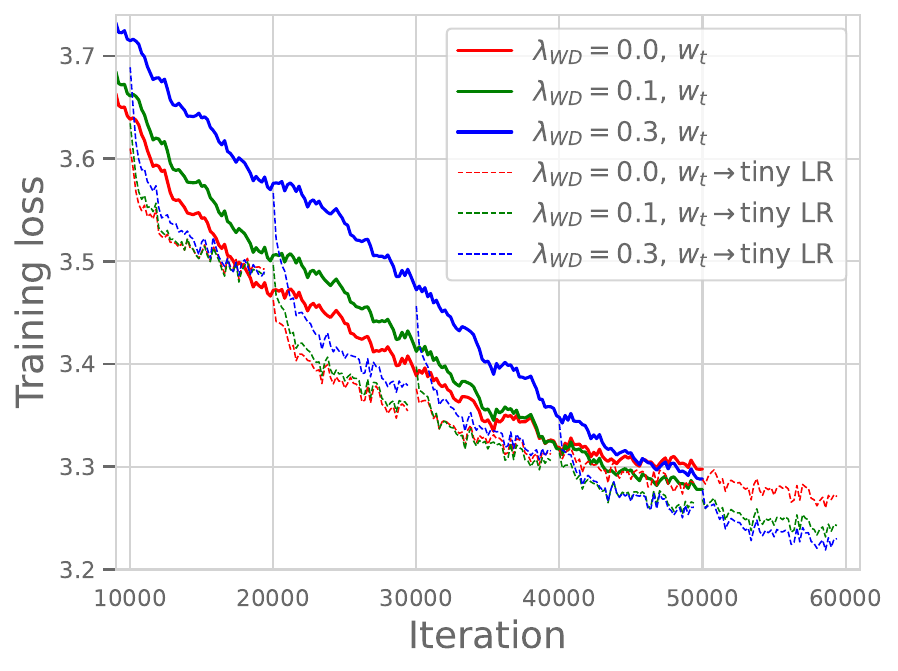} \\
    \vspace{-1mm}
    \caption{\textbf{GPT-2-124M on OpenWebText.} 
    We reproduce the improvement from WD as in \citet{hoffmann2022training} but at a much smaller scale. Performing fine-tuning with a tiny LR reveals that a higher starting training loss can still be a better point in terms of the final loss after fine-tuning. 
    }
    \vspace{-5mm}
    \label{fig:owt_gpt2small256}    
    \vspace{0mm}
\end{wrapfigure}
\myparagraph{Two key effects of weight decay in the under-training regime.}
WD is widely used in training state-of-the-art LLMs like GPT-3, Chinchilla, and Llama \citep{brown2020language, hoffmann2022training, touvron2023llama}.  
While \citet{brown2020language} suggest that WD offers "\textit{a small amount of regularization}," its necessity remains unclear in the context of \textit{one-pass} training where the population loss is directly minimized. 
As a sanity check, in Fig.~\ref{fig:owt_gpt2small256_correlation_train_val} in Appendix, we verify that the generalization gap is close to zero even for models trained without WD. 
%
Instead of the regularization effect, we suggest that the two most crucial effects of WD in the under-training regime are 
(1) better optimization of the training loss as briefly observed by \citet{hoffmann2022training}, 
(2) prevention of loss divergences under the \texttt{bfloat16} weight precision. 
We reproduce this phenomenon at a smaller scale with 124M parameters in Fig.~\ref{fig:owt_gpt2small256}: the final training loss is smaller for $\lambda$ equal to $0.1$ and $0.3$ compared to $0$.
We study both mechanisms which stand in contrast to the over-training regime of Sec.~\ref{sec:overparam_deep_learning}, where the primary concerns are not optimization and stability, but rather generalization. 

\myparagraph{Experimental setup.}
We use the \texttt{NanoGPT} repository \citep{nanogpt2023} 
for training GPT-2 models \citep{radford2019language} on OpenWebText. 
We train a 124M parameter model (known as GPT-2-Small) for $50\,000$ iterations with a batch size of $256$. 
For most experiments, we reduce the default context length from $1024$ to $256$ to ensure practicality within an academic budget.
Alternatively, we could have reduced the number of training iterations or batch size, but this would lead to insufficiently trained models. 
Unless mentioned otherwise, we train with AdamW using the default LR $0.0006$, a short 400-iteration LR warmup, gradient clipping with the $\ell_2$-threshold $1.0$, and $10\times$ cosine LR decay. 
We keep all other hyperparameters at their default values (see App.~\ref{sec:app_train_details}).

\myparagraph{Better optimization with WD is reproducible at a smaller scale.} The findings from \citet{hoffmann2022training} (Fig.~A7 therein) indicate that WD in AdamW leads to lower training loss 
($\approx0.02$ lower), primarily towards the end of training. 
The reduction of training loss directly translates to a better downstream performance and makes this observation practically relevant. Additionally, performing fine-tuning with a tiny LR reveals that a higher starting training loss can still be a better starting point in terms of the final loss after fine-tuning.
Moreover, in Fig.~\ref{fig:owt_gpt2small256_l2_reg}, we show that decoupling WD, as advocated by \citet{loshchilov2017decoupled}, is not necessary to achieve this effect: a simple $\ell_2$ penalty added to the loss suffices.
Lastly, in Fig.~\ref{fig:owt_gpt2small256_sgdm}, we show that a similar improvement in training loss is observed for \textit{SGD with momentum} suggesting that adaptive LRs are not key for this phenomenon. 

\myparagraph{Effective LR induced by weight decay in AdamW.}
%
We hypothesize that the use of WD for LLM training results in an increased effective LR, even in the absence of scale invariance of the training loss for modern transformer architectures. 
%
Here we show that WD in combination with sign SGD---utilized as a surrogate for Adam \citep{balles2018dissecting}---is equivalent to projected SGD on the sphere, with an effective LR $\eta_{\text{eff}}\propto \nicefrac{\eta_t}{\|\vw_t\|_2}$, similarly to \citet{van2017l2}. 
Consider the update rule of sign SGD on loss $\ell$ with WD:
%
\begin{equation*}
    \vw_{t+1} 
     = \left( 1 - \eta_{t} \lambda_{t} \right) \vw_{t}  -  \eta_{t}\sign(\nabla \ell_{t}(\vw_{t}))
     = \left( 1 - \eta_{t} \lambda_{t} \right) \nor{\vw_{t}}_2 \left[ \frac{\vw_{t}}{\nor{\vw_{t}}}_2 - \frac{\eta_{t} \cdot \sign (\nabla \ell_{t}(\vw_{t}))}{\left( 1 - \eta_{t} \lambda_{t} \right) \nor{\vw_{t}}_2} \right].
\end{equation*}
Considering the evolution of the direction $\tilde{\vw} := \nicefrac{\vw}{\nor{\vw}_2}$,
\begin{align*}
 \tilde{\vw}_{t+1} \propto \left[  \tilde{\vw}_{t} -  \frac{\eta_{t}}{\left( 1 - \eta_{t} \lambda_{t} \right) \nor{\vw_{t}}_2} \cdot \sign(\nabla \ell_{t}(\vw_{t})) \right].
\end{align*}
When $\sign(\nabla \ell_{t}(\vw_{t}))$ is determined  solely by the direction $\tilde{\vw}_{t}$, the evolution of the direction of weights becomes the primary matter. This scenario occurs when the function $\ell$ is scale-invariant or homogeneous. 
Observing the trend of the gradient norm and the  parameter norm $\nor{\vw_{t}}_2$ from Fig.~\ref{fig:owt_gpt2small256_grad_related_metrics}, we note an inverse relationship, i.e., the gradient norm is higher when the parameter norm is lower. This behavior is reminiscent of scale-invariant networks for which $\nabla \ell(\alpha \vw) = \frac{1}{\alpha} \nabla \ell(\alpha \vw) $, for any $\alpha \neq 0$. 
Thus, controlling parameter norms with WD allows implicit changes to the LR schedule which we verify experimentally next. 

\begin{figure*}[t]
    \centering
    \includegraphics[width=0.30\textwidth]{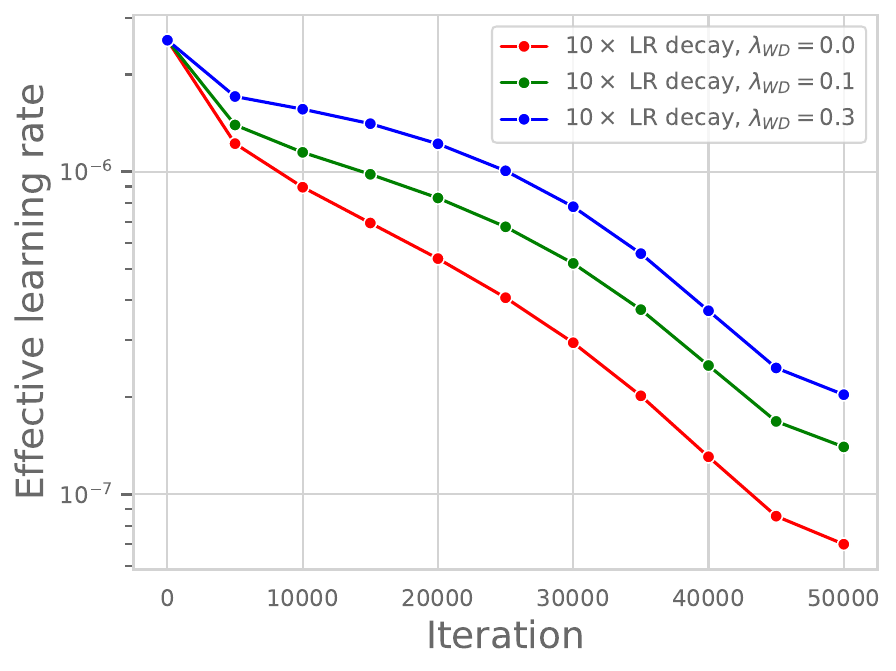}
    \includegraphics[width=0.30\textwidth]{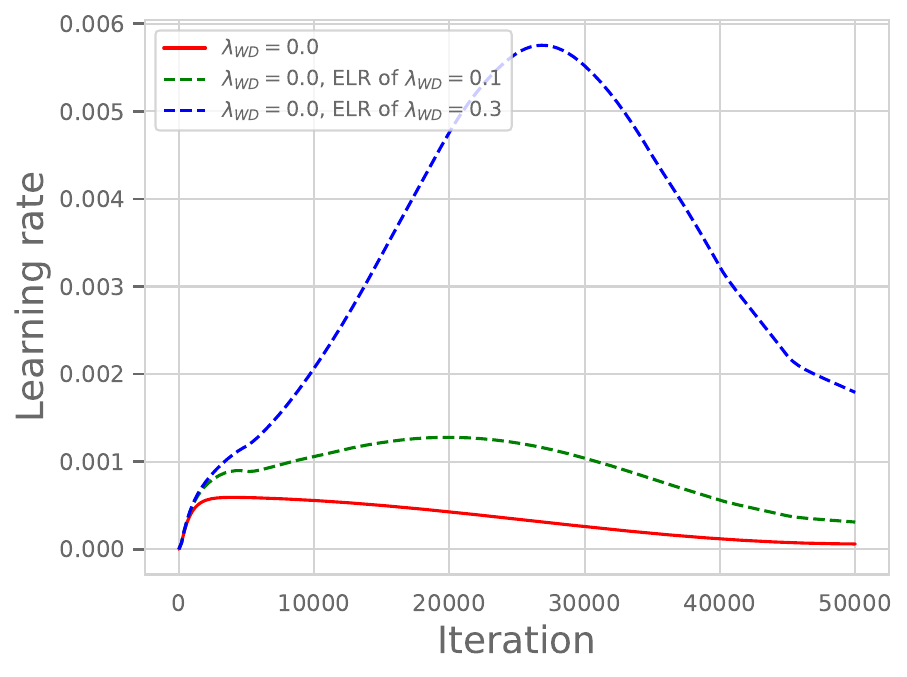}
    \includegraphics[width=0.30\textwidth]{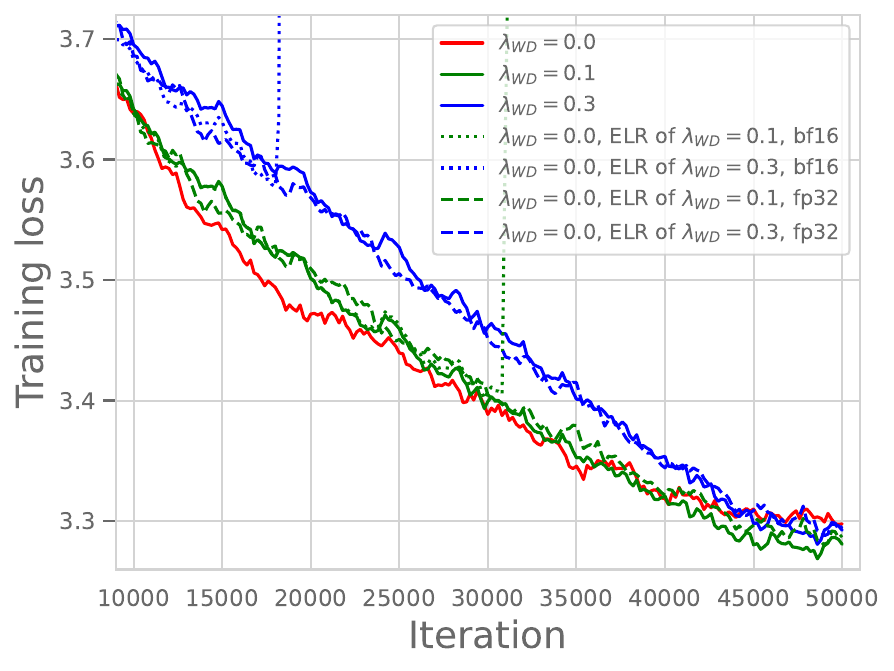}
    \vspace{-1mm}
    \caption{\textbf{GPT-2-124M on OpenWebText.}
        \textit{Left}: The effective LR $\eta_t / \|\vw_t\|_2$ for the models reported in Fig.~\ref{fig:owt_gpt2small256}.
        \textit{Middle}: The LR schedule that matches the effective LR $\eta_t / \|\vw_t\|_2$ of the runs with weight decay $0.1$ and $0.3$. 
        \textit{Right}: Matching the effective LR is sufficient to match the whole training dynamics of the loss if we avoid the loss spikes by using full precision (\texttt{float32} instead of \texttt{bfloat16}).
    }
    \vspace{-4mm}
    \label{fig:owt_gpt2small256_elr}
\end{figure*}

\myparagraph{Matching the effective LR \textit{without} weight decay.}
To verify the key role of the ELR $\nicefrac{\eta_t}{\|\vw_t\|_2}$, we train a model without WD but with ELR corresponding to models trained with $\lambda \in \{0.1, 0.3\}$ and report results in Fig.~\ref{fig:owt_gpt2small256_elr}. 
We observe that the \textit{whole} training dynamics is matched which confirms our hypothesis. 
This fully explains the observation from \citet{hoffmann2022training} about the advantage of AdamW over Adam: there exists an LR schedule (albeit a non-standard one) shown in Fig.~\ref{fig:owt_gpt2small256_elr} (\textit{middle}) that leads to the same loss profile as the original AdamW run. 
However, we note that this holds only for full \texttt{float32} precision, and models trained with \texttt{bfloat16} precision diverge in the middle of training. 
This experience suggests that WD is still necessary in practice to prevent loss divergence. 
We also note that matching the ELR $\nicefrac{\eta_t}{\|\vw_t\|_2}$ derived above for sign SGD instead of $\nicefrac{\eta_t}{\|\vw_t\|_2^2}$ for plain SGD \citep{zhang2018three, hoffer2018norm} is critical for AdamW. Otherwise, the runs diverge very early in training, even with \texttt{float32} parameter precision. 

\myparagraph{Explaining the training dynamics of AdamW.}
Classical optimization theory suggests that convergence of SGD-based algorithms primarily depends on two factors: the \textit{bias} term that influences the rate at which initial conditions are forgotten and the \textit{variance} term that results from noise in the gradient estimates \citep{moulines2011nonasymptotic}. 
We argue that these two factors, together with the observation about higher ELR for WD, can explain the loss profiles from Fig.~\ref{fig:owt_gpt2small256}. 
%
If we consider the simple case of SGD with a constant LR $\eta$ applied to a linear least-squares problem, the expected excess risk after $t$ iterations can be bounded as a sum of a bias and variance terms:
\begin{align*}
    \textrm{Excess Risk} \lesssim ( 1 - \eta \mu )^{t} \nor{ \vw_0 - \vw_*}^2 + \eta \sigma^2, 
\end{align*}
where $\sigma$ is a uniform bound on the variance of the noise of gradient estimates, $\mu$ a lower bound on the objective function's Hessian, $\vw_0$ the initial point and $\vw_*$ the optimum. 
For linear models, it is well-established that a larger LR accelerates the contraction of the bias term but has a detrimental impact on the variance term, ultimately leading the variance term to dominate. 
%
%
%
%
Coming back to the dynamics in Fig.~\ref{fig:owt_gpt2small256}, with a large ELR at the start, the convergence becomes primarily bottlenecked by the high variance term proprtional to the learning rate, leading to higher loss values in the presence of WD. 
Conversely, towards the end of training, when ELR and the variance term are reduced, we see that WD catches up and performs better at the end, thanks to its relatively higher ELR \textit{throughout the training} and thus better bias contraction. 
This perspective sheds light on the observation that EMA for LLMs is most advantageous when employed with large LRs \citep{sanyal2023understanding} as we also illustrate in Fig.~\ref{fig:owt_gpt2small256_with_averaging}. As the variance dominates in this case, variance reduction of the averaging helps. 
\myparagraph{Experiments with \texttt{bfloat16}.}
Training in reduced precision is essential for speeding up training and reducing GPU memory requirements \citep{kalamkar2019study}. 
We further elaborate on the fact that WD is not fully equivalent to higher ELR and remains necessary for stable \texttt{bfloat16} training. 
While \citet{scao2022bloom} observe that usage of \texttt{float16} can cause loss divergences, \texttt{bfloat16} is considered much more stable and is de-facto standard in LLM training. 
Although \texttt{bfloat16} shares the same floating-point exponent size as \texttt{float32} (thus, the \textit{range} of possible values is the same), it offers lower precision, with only $7$ bits for the fraction instead of $23$. 
\setlength{\intextsep}{8pt}%
\setlength{\columnsep}{8pt}%
\begin{wrapfigure}[17]{t}{0.36\textwidth}
    \vspace{-1mm}
    \centering
    \includegraphics[width=0.36\textwidth]{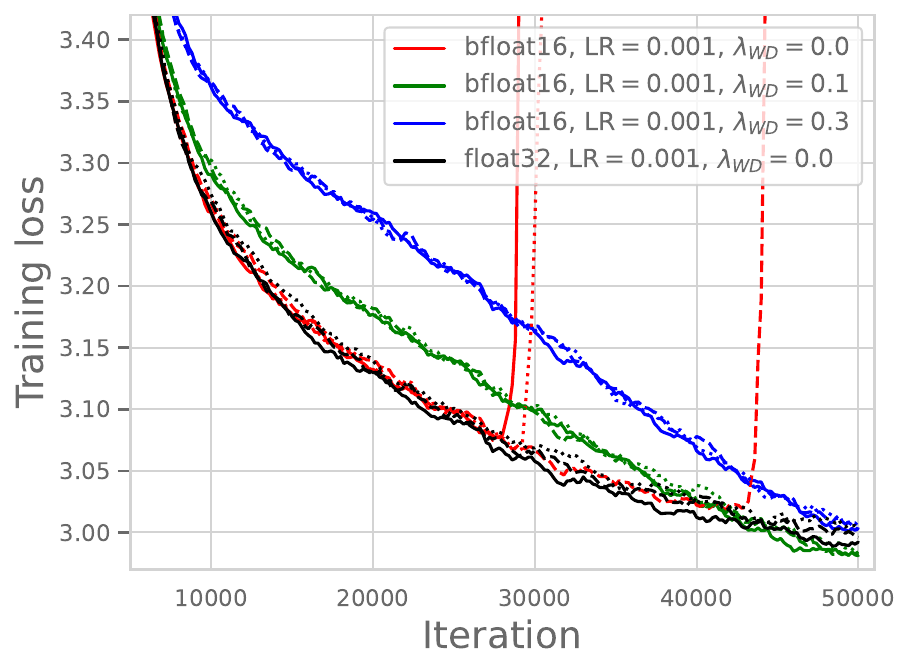}
    \vspace{-3mm}
    \caption{\textbf{GPT-2-124M on OpenWebText with context length 1024.} 
    Weight decay prevents divergence for LR $0.001$ and enables stable \texttt{bfloat16} training. The three random seeds are denoted with 
    ---, - - -, $\cdots$ lines.
    }
    \vspace{-7mm}
    \label{fig:owt_gpt2small_bfloat16}
\end{wrapfigure}
We observe that even more stable \texttt{bfloat16} data type can still exhibit late-training spikes that irreparably harm model performance \textit{in standard practical settings}, such as with a larger context length (e.g., $1024$ instead of $256$ as in the previous experiments). 
Therefore, we focus on this configuration for the experiments shown in Fig.~\ref{fig:owt_gpt2small_bfloat16}. 
Runs with a moderate LR $0.001$ (the default LR of Adam) without WD exhibit late-training divergence for \textit{all} random seeds when using \texttt{bfloat16}. By comparison, training with \texttt{float32} remains entirely stable.  
Importantly, we observe that the model \textit{does not recover} after the loss spikes which contrasts with the loss spikes described in the Edge of Stability phenomenon \citep{cohen2021gradient, cohen2022adaptive}. 
We emphasize that all these runs use gradient clipping with the standard $\ell_2$-threshold.  Finally, we observe that divergences can be prevented by reducing the LR, e.g., from $0.001$ to $0.0006$. 
However, this adjustment leads to slower training, as illustrated in Fig.~\ref{fig:owt_gpt2small_bfloat16_diff_lr} in the Appendix. Instead, the most effective approach is to use a higher LR of $0.001$ \textit{with WD}, which enables stable \texttt{bfloat16} training and yields a better final training loss.

\section{Conclusions}
\label{sec:conclusions}
In this paper, we demonstrate how weight decay, a single hyperparameter, can manifest three distinct effects across different training regimes:
it offers regularization when combined with stochastic noise, improves optimization of the training loss, and guarantees stability in low-precision training environments. 
In the over-training regime, the scale of the noise $\sigma_{\eta,\lambda}$ is the fundamental quantity governing the implicit regularization strength of SGD. 
Weight decay combined with large LR enables the noisy dynamics to evolve by maintaining the scale at a constant level. Techniques such as EMA or fine-tuning work by reducing noise, thereby allowing for the effective exploitation of the accumulated hidden regularization.
Coming to the under-training regime, AdamW \citep{loshchilov2017decoupled} was introduced as a \textit{regularization} method. Instead, we argue that it is effective as it modulates the ELR to attune the bias-variance tradeoff. In addition, it also improves the stability of training. 
In summary, weight decay is seldom valuable as an explicit regularizer; instead, its widespread adoption can be attributed to its ability to induce desirable changes in optimization dynamics.
We also acknowledge the limitations of our work: given our limited computational resources, we do not conduct truly large-scale experiments. Moreover, we do not prove new theoretical results. Instead, we strive to provide a clear experimental picture and formulate general explanations for the effectiveness of weight decay in different training regimes.  
\clearpage
\section*{Acknowledgements}
We thank Atli Kosson and Alex Damian for fruitful discussions and suggestions. 
M.A. was supported by the Google Fellowship and Open Phil AI Fellowship. 
A.V. was supported by the Swiss Data Science Center Fellowship. 
F.D. was supported by the Swiss National Science Foundation (grant number 212111)

\bibliography{references}
\bibliographystyle{neurips_2024}

\newpage
\appendix
\onecolumn

\section{An additional comparison with related works} \label{subsec:related-work} 
Our focus in \Cref{sec:overparam_deep_learning} is on an empirical illustration of the implicit regularization phenomenon, hence we refrain from attempting to prove this general conjecture, which we believe is a challenging task. The existing theoretical works~\cite{blanc2020implicit,li2021happens,damian2021label} present two major weaknesses; they are essentially limiting analysis and as such fail at capturing the \textit{entire optimization trajectory} and they primarily target regression tasks. 
The powerful mathematical framework for scale-invariant networks developed by \citet{li2019exponential,li2020reconciling} allows them to study in detail the benefits of normalization and its interplay with weight decay. By means of this framework, they state a fast equilibrium conjecture, which gives qualitative guarantees for the speed of convergence of the stochastic process to the stationary distribution in function space.
    They disentangle the evolution of the norm and the direction of the parameters and show how the evolution of the direction only depends on the intrinsic LR $\lambda_i = \eta \lambda$. However, a qualitative description of the stationary distribution, its dependence on this intrinsic LR and the relationship with generalization is missing~\cite{li2020reconciling}.
    We attempt to fill this gap by providing a qualitative depiction of the stationary distribution and its dependence on the intrinsic LR shading some light towards understanding the relationship with generalization.
The work of \citet{kodryan2022training} reports a similar observation, where the best test loss is achieved at a LR where the loss neither converges nor diverges but does not provide any explanation.
\begin{table}[h]
    \small
    \centering
    \caption{Comparison of our work with closely related works on regression and implicit regularization phenomenon induced by noise in  the algorithm. 
    }
    \begin{tabular}{cccc}
        \textbf{Paper} & \textbf{Loss function} & \textbf{Algorithm}  & \textbf{Implicit regularization} \\
        \hline
        \cite{damian2021label} \& & Squared loss  \& & \multirow{2}{*}{Label noise GD} &  \multirow{2}{*}{Trace of Hessian} \\
            \cite{li2021happens}     &  CE + label smoothing &  & \\
        \hline
        \cite{blanc2020implicit} & Squared loss & Label noise GD & Jacobian norm  \\
        \hline
        \cite{li2020reconciling} & Scale-invariant loss  & SGD  & - \\
        \hline
        \cite{andriushchenko2022sgd} & Squared loss & SGD with large LR  & Jacobian norm  \\
        \hline
        Our work & Regularized CE & SGD with large LR  & Jacobian norm \\
        \hline
    \end{tabular}
    \label{tab:example}
\end{table}

\section{Training details}
\label{sec:app_train_details}
Full experimental details are available in our public repository \url{https://github.com/tml-epfl/why-weight-decay} but we also list the main training details here.  
All the experiments are conducted for 3 different random seeds, the error-bars report one standard deviation. 
\myparagraph{CIFAR-10/100 experiments.}
We train a VGG network without BatchNorm and preactivation ResNet-18 on CIFAR-10 and ResNet-34 on CIFAR-100 without data augmentations. We use standard SGD \textit{without momentum} for all experiments. We note that $\ell_2$ regularization and weight decay are exactly the same in this case. We use the standard He initialization \citep{he2015delving} for all parameters. 
To make ResNets scale-invariant, we follow the approach of \citet{li2020reconciling} consisting of fixing the last layer, removing the learnable parameters of the normalization layers and adding a normalization layer in the skip connection. For the experiments in Fig.\ref{fig:cifar-part-a-main}, VGG is trained with LR = $0.1$ and LR = $0.01$ and weight decay parameter is fixed to be either $\lambda = 0.0$ or $\lambda = 0.008$. The ResNet-18 is trained with LR = $0.08$ and LR = $0.001$ and $\lambda = 0.0$ or $\lambda = 0.0125$. 
The ResNet-34 is trained with LR = $0.15$ and LR = $0.001$ and weight decay parameter $\lambda = 0.0$ or $\lambda = 0.01$. The total number of epochs is $1000$ in all experiments in Fig.\ref{fig:cifar-part-a-main} and all the LR are decayed at epoch 500 to $0.0001$. For the experiments in Fig.~\ref{fig:resnet_cifar_sphere} we use scale-invariant ResNet-18 and project the SGD iterates on the unitary sphere. We test the following LRs in the large-LR phase $(0.0001,0.0005, 0.00075, 0.001, 0.002, 0.003, 0.004, 0.005)$ to show different generalization performance. After 100 epochs all the learning rates are decayed to the same value $0.0001$. In Fig.~\ref{fig:resnet_cifar_sphere} we fine-tune every $2$ epochs for $100$ additional epochs with LR=$0.0001$. To measure the Norm of the Jacobian or the Trace of the Hessian we use a subset of $5000$ training datapoints. Each run requires approximately 2 GPU hours on an Nvidia A100 GPU.

\myparagraph{Tiny-ImageNet experiments.}
We train Resnet-18 without data augmentation. We use standard SGD \textit{without momentum} in all our experiments. We use the following learning rates $(0.0005, 0.0010, 0.0050, 0.0100, 0.0500, 0.1000, 0.1500, 0.2000, 0.2500)$ and weight decay parameter $ (0.0200, 0.0150, 0.0125, 0.0100, 0.0075, 0.0050, 0.0025, 0.0010, 0.0005, 0.0000) $. To measure the norm of the Jacobian we use a subset of the training data of $2500$ examples. Each run requires approximately 5GPU hour on A100.

\myparagraph{LLM experiments.}
We use the \texttt{NanoGPT} repository \citep{nanogpt2023} for training GPT-2 models \citep{radford2019language} on OpenWebText \citep{gokaslan2019ppenwebtext}.
All training documents are concatenated in a single stream from which a new batch is sampled with replacement on every iteration of training. 
We train a 124M parameter model known as GPT-2-small for $50\,000$ iterations instead of the default $600\,000$ to make grid searches over the learning rate and weight decay parameters more accessible within an academic budget. 
We mostly use the context length of $256$ for faster experiments except for Fig.~\ref{fig:owt_gpt2small_bfloat16} where we use the context length of $1024$ since we observed that a larger context length is crucial to observe loss divergences with moderate learning rates (such as $0.001$ for Adam). 
We train with AdamW \citep{loshchilov2017decoupled} using batch size $256$, default LR $0.0006$ (unless mentioned otherwise), $\beta_1=0.9$, $\beta_2=0.95$, a short 400-iteration LR warmup, and $10\times$ cosine LR decay.
For the runs with SGD with momentum, we use the learning rate $0.3$ and momentum parameter $0.9$ using the same LR schedule as for AdamW. 
We initialize all parameters with the standard deviation equal to $0.02$. 
We keep all other hyperparameters at their default values as in the \texttt{NanoGPT} repository. 
We perform all experiments on A100 Nvidia GPUs that support fast \texttt{bfloat16} training. 
Each training run of GPT-2-small for $50\,000$ iterations takes around $12$ hours on a single GPU.



\section{Weight decay for overparametrized deep networks: additional experiments and details}
\label{app:over_training}

\subsection{A graphical illustration of the fine-tuning phase} \label{app:graphic} Here, we plot an illustrative graphic in Fig.~\ref{fig:illustration} to give an idea of what happens during the fine-tuning phase. 
\begin{figure}[h]
    \centering
    \footnotesize
    \includegraphics[width=0.4\textwidth]{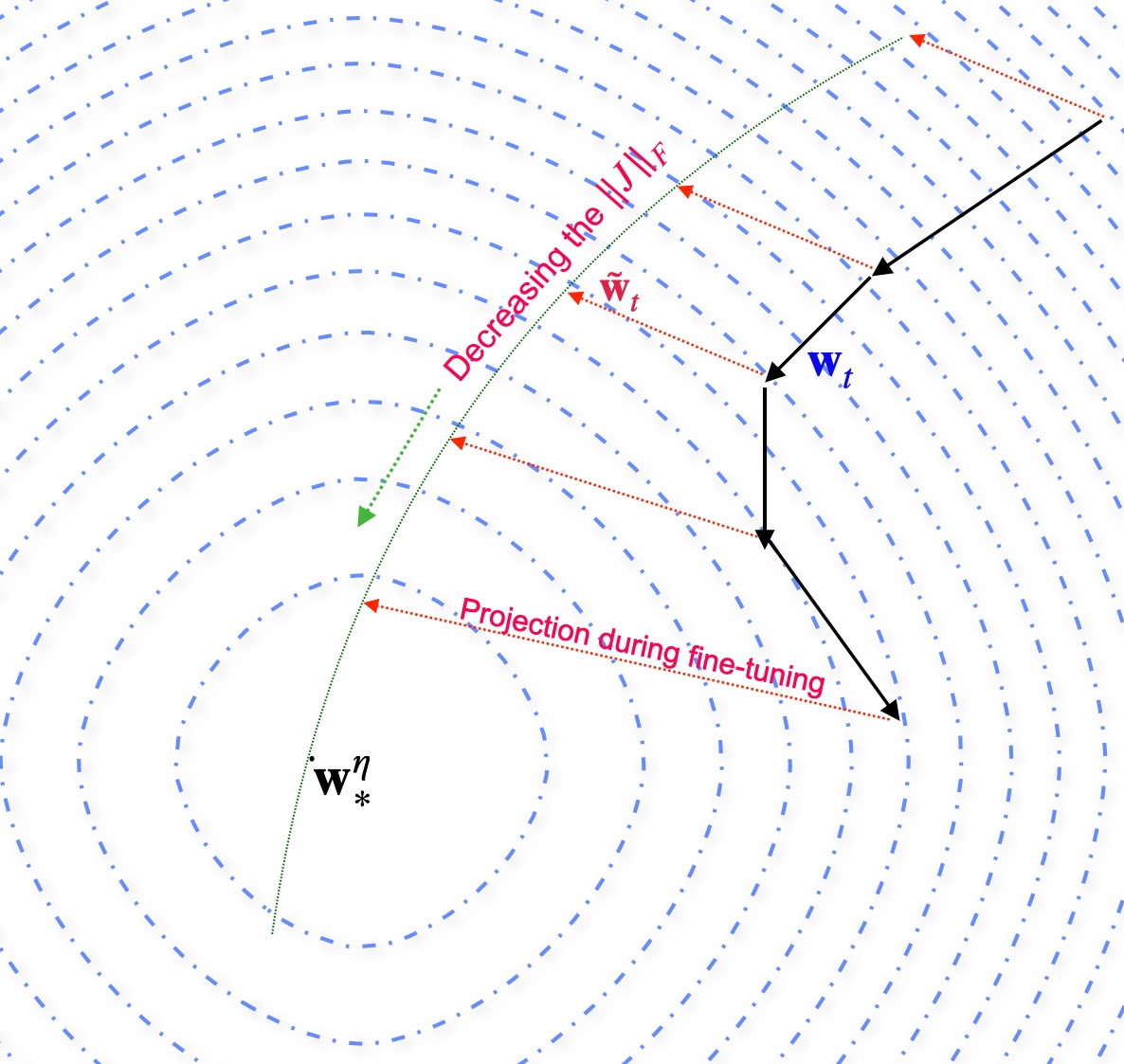}
    \vspace{-2mm}
    \caption{\textbf{A graphical illustration of the fine-tuning phase}.}
    \label{fig:illustration}
\end{figure}

\subsection{Supporting derivations}
\label{sec:supporting_derivations}


Here we prove that the scale of noise is well approximated by training loss in the case of binary classification instead of classification in the case of multiple classes. The proof follows the lines of~\citet{wojtowytsch2021stochastic_discrete}. 
\begin{proposition} \label{prop:noise-scale} Assume $\nor{\vw} \in \left[a,b\right]$, for any $x \in \mathcal{D}$,  $\nor{\nabla h\left(\vw,x\right)} \in \left[m,M\right]$ holds. For $n$ sufficiently large, there exists constants $c_1,c_2$ such that 
\begin{align*}
     \mathbb{E} \left[ \nor{g(\vw)}^2 \right] \leq c_2\LL(\vw)
\end{align*}
\end{proposition}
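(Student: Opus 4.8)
Write $\phi_i(\vw):=\partial_s\,\ell\bigl(y_i,s\bigr)\big|_{s=h(\vw,x_i)}$ for the per-sample residual, so that $\nabla_\vw\ell\bigl(y_i,h(\vw,x_i)\bigr)=\phi_i(\vw)\,\nabla_\vw h(\vw,x_i)$, and recall that for binary cross-entropy with a sigmoid read-out one has $\lvert\phi_i(\vw)\rvert=\lvert\sigma(h(\vw,x_i))-y_i\rvert\in(0,1)$. Since $g(\vw)=\nabla\LL(\vw)-\nabla_\vw\ell(y_{i},h(\vw,x_i))$ has conditional mean zero, $\E\,\nor{g(\vw)}^2=\tfrac1n\sum_{i=1}^n\nor{\nabla_\vw\ell(y_i,h(\vw,x_i))}^2-\nor{\nabla\LL(\vw)}^2$. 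The plan, following \citet{wojtowytsch2021stochastic_discrete}, is to (i) show that the logits $h(\vw,x_i)$ stay in a fixed compact interval; (ii) on that interval sandwich $\phi_i^2$ between two positive multiples of the per-sample loss $\ell_i:=\ell(y_i,h(\vw,x_i))$; (iii) combine this with $\nor{\nabla h_i}^2\in[m^2,M^2]$ to obtain $\tfrac1n\sum_i\nor{\nabla_\vw\ell_i}^2\asymp\LL(\vw)$, which already delivers the upper bound; and (iv) control the bias term $\nor{\nabla\LL(\vw)}^2$ for the lower bound.

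\textbf{Bounded logits and the pointwise sandwich.} Writing $h(\vw,x_i)=h(0,x_i)+\int_0^1\langle\nabla_\vw h(t\vw,x_i),\vw\rangle\,\diff t$ and using $\nor{\vw}\le b$ together with the bound $\nor{\nabla_\vw h(\cdot,x_i)}\le M$ on the ball of radius $b$ gives $\lvert h(\vw,x_i)\rvert\le C:=Mb+\max_i\lvert h(0,x_i)\rvert$; this $C$ is finite whenever $\mathcal D$ is bounded (or simply $C=Mb$ if $h$ vanishes at the origin, e.g.\ by homogeneity). Next, the map $(y,s)\mapsto\phi(y,s)^2/\ell(y,s)$ is continuous and strictly positive on $\{0,1\}\times\R$, because the cross-entropy loss never vanishes; for $y=1$ it equals $\sigma(-s)^2/\log(1+e^{-s})$, which tends to $0$ only as $\lvert s\rvert\to\infty$. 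Restricted to the compact set $\{0,1\}\times[-C,C]$ it therefore attains a positive minimum $\gamma_1$ and a finite maximum $\gamma_2$, so $\gamma_1\,\ell_i\le\phi_i^2\le\gamma_2\,\ell_i$ for every $i$ and every admissible $\vw$.

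\textbf{Summation and the upper bound.} Multiplying the last display by $\nor{\nabla h_i}^2\in[m^2,M^2]$ and averaging over $i$ yields $m^2\gamma_1\,\LL(\vw)\le\tfrac1n\sum_i\nor{\nabla_\vw\ell_i}^2\le M^2\gamma_2\,\LL(\vw)$. Since $\E\,\nor{g(\vw)}^2\le\tfrac1n\sum_i\nor{\nabla_\vw\ell_i}^2$, this already gives the claimed upper bound with $c_2=M^2\gamma_2$. As a by-product, the bounded-logit estimate also forces $\ell_i\in[\ell_{\min},\ell_{\max}]$ with $\ell_{\min}>0$, so $\LL(\vw)$ itself lies in a compact interval bounded away from $0$ on $\{a\le\nor{\vw}\le b\}$.

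\textbf{The lower bound and the main obstacle.} It remains to find $\delta\in(0,1)$ with $\nor{\nabla\LL(\vw)}^2\le(1-\delta)\,m^2\gamma_1\,\LL(\vw)$, for then $\E\,\nor{g(\vw)}^2\ge\delta\,m^2\gamma_1\,\LL(\vw)=:c_1\,\LL(\vw)$. This is the delicate part: the only cheap bound, $\nor{\nabla\LL}^2\le\tfrac1n\sum_i\nor{\nabla_\vw\ell_i}^2$, is attained with equality when all per-sample gradients coincide and is therefore useless, and this is exactly where the hypothesis ``$n$ sufficiently large'' enters. I would use the exact identity $\E\,\nor{g(\vw)}^2=\tfrac{1}{2n^2}\sum_{i,j}\nor{\nabla_\vw\ell_i-\nabla_\vw\ell_j}^2$ and argue that, because the network attains perfect accuracy with both classes represented, a positive fraction of the pairs $(i,j)$ have residuals $\phi_i,\phi_j$ of opposite sign and magnitude bounded below (again by the compactness argument of the previous step), so that under a mild non-degeneracy/incoherence assumption on the set $\{\nabla_\vw h_i\}$ these pairs cannot almost cancel and hence contribute $\Omega(\LL(\vw))$ on average. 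Since $\LL(\vw)$ was just shown to be bounded below by a positive constant on the admissible set, it in fact suffices to lower-bound $\E\,\nor{g(\vw)}^2$ by an absolute positive constant; converting the non-degeneracy of the per-sample gradients into such a quantitative estimate is the crux, and I expect it to be the only step requiring more than soft analysis.
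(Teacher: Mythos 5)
Your decomposition, your sandwich of the squared residual by the per-sample loss, and your upper bound follow exactly the paper's route: the paper writes $\E\nor{g}^2=\frac1n\sum_i(\ell_i')^2\nor{\nabla h_i}^2-\frac{1}{n^2}\nor{\sum_i \ell_i'\nabla h_i}^2$, uses the self-bounding property $(\ell')^2\le \ell$ with $\nor{\nabla h_i}\le M$ for the upper bound (your compactness argument just re-derives a constant $\gamma_2$ where the paper takes $1$), and for the lower bound invokes the existence of a constant $c$ with $(\ell')^2\ge c\,\ell$ on the bounded iterates — which is precisely your $\gamma_1$ obtained from bounded logits. So up to and including $c_2$, you and the paper are doing the same thing, and your version is if anything more explicit about why the constants exist.

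The divergence is in how the mean-gradient term is handled, and this is where your proposal stops short of the statement while the paper does not attempt what you attempt. The paper simply asserts that the second term $\frac{1}{n^2}\nor{\sum_i \ell_i'\nabla h_i}^2$ is ``decreasing with $O(n^{-2})$'' and can therefore be neglected for $n$ sufficiently large, so that the first term dominates and $\E\nor{g}^2\ge c\,m^2\,\LL(\vw)$; this is exactly the role of the ``$n$ sufficiently large'' hypothesis, and no non-degeneracy condition on $\{\nabla h_i\}$ or pairwise-cancellation analysis is carried out. Your instinct that this step is the delicate one is correct — the bound $\nor{\nabla\LL}^2=\frac{1}{n^2}\nor{\sum_i\nabla \ell_i}^2$ is only $O(n^{-2})$ if the per-sample gradient sum stays $O(1)$, which the paper assumes rather than proves — but your proposed fix via the identity $\E\nor{g}^2=\frac{1}{2n^2}\sum_{i,j}\nor{\nabla \ell_i-\nabla \ell_j}^2$ plus an incoherence assumption is left as an unproved sketch, so as written your argument does not deliver $c_1$. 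If you want to match the paper, state the neglect of the $O(n^{-2})$ term as an explicit approximation (the proposition is offered as a ``reasonable approximation'' supporting property (P1), not as a sharp theorem); if you want to do better than the paper, you would indeed need to make your non-degeneracy assumption precise and quantitative, which is the part you correctly identify as requiring more than soft analysis.
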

\begin{proof}
The noise in the case when the gradient is computed at $(x_i,y_i)$ is
\begin{align*}
    g(\vw) &= \ell^{'}(y_i, h(\vw, x_i))  \nabla h(\vw, x_i)  - \frac{1}{n} \sum_{i} \nabla  \ell^{'}(y_i, h(\vw, x_i))  \nabla h(\vw, x_i), 
\end{align*}
Taking the expectation over uniform sampling over $i$, we have,  
\begin{align} \label{eq:exp-scale}
        \mathbb{E} \nor{g}^2 &= \frac{1}{n} \sum_{i=1}^{n} \left(\ell^{'}(y_i, h(\vw, x_i)\right)^2  \nor{\nabla h(\vw, x_i)}^2 - \frac{1}{n^2}\nor{ \sum_{i} \nabla  \ell^{'}(y_i, h(\vw, x_i))  \nabla h(\vw, x_i) }^2
\end{align}
\underline{\textbf{Upper bound}}:
Using the self-bounding property of the binary CE, i.e., $ \left(\ell'^2\right) \leq l  $ and $\nor{\nabla h\left(\vw,x\right)}^2 \leq M^2$. 
\begin{align*}
    \mathbb{E} \nor{g}^2 \leq M^2 \frac{1}{n} \sum_{i=1}^{n} \ell(y_i, h(\vw, x_i)) = M^2 \LL(\vw). 
\end{align*}
\end{proof}

\underline{\textbf{Comment on the Lower bound}}:  Since the iterates are bound, we can assume there exists a constant $c$ such that  $ \left(\ell'^2\right) \geq c l $. as the second term in~\ref{eq:exp-scale} is decreasing with $O(n^{-2})$, we can assume that the first term is dominating and relevant and can lower bound the first term as, 
\begin{align*}
    \mathbb{E} \nor{g}^2 \geq c m^2 \frac{1}{n} \sum_{i=1}^{n} \ell(y_i, h(\vw, x_i)) = c m^2 \LL(\vw) . 
\end{align*}

\subsection{Additional figures for the over-training regime} \label{subsec:add-figs-overp}
In this section, we report additional experimental results related to Section~\ref{sec:overparam_deep_learning} in the main text. In Fig.~\ref{fig:jac_resnet_cifar} we report analogous results for the jacobian norm and test error of the EMA for Resnet18 on CIFAR10. In Fig.~\ref{fig:cifar-part-a-main} and \ref{fig:main_train_CE} we report the train CE for VGG and ResNet18 on CIFAR-10 and ResNet34 trained on CIFAR-100. We can observe how when weight decay is used in combination with large LR, the train CE stabilizes at some approximately constant level. 

\begin{figure*}[h]
\centering
        \begin{subfigure}{0.49\textwidth}
                \includegraphics[width=\linewidth]{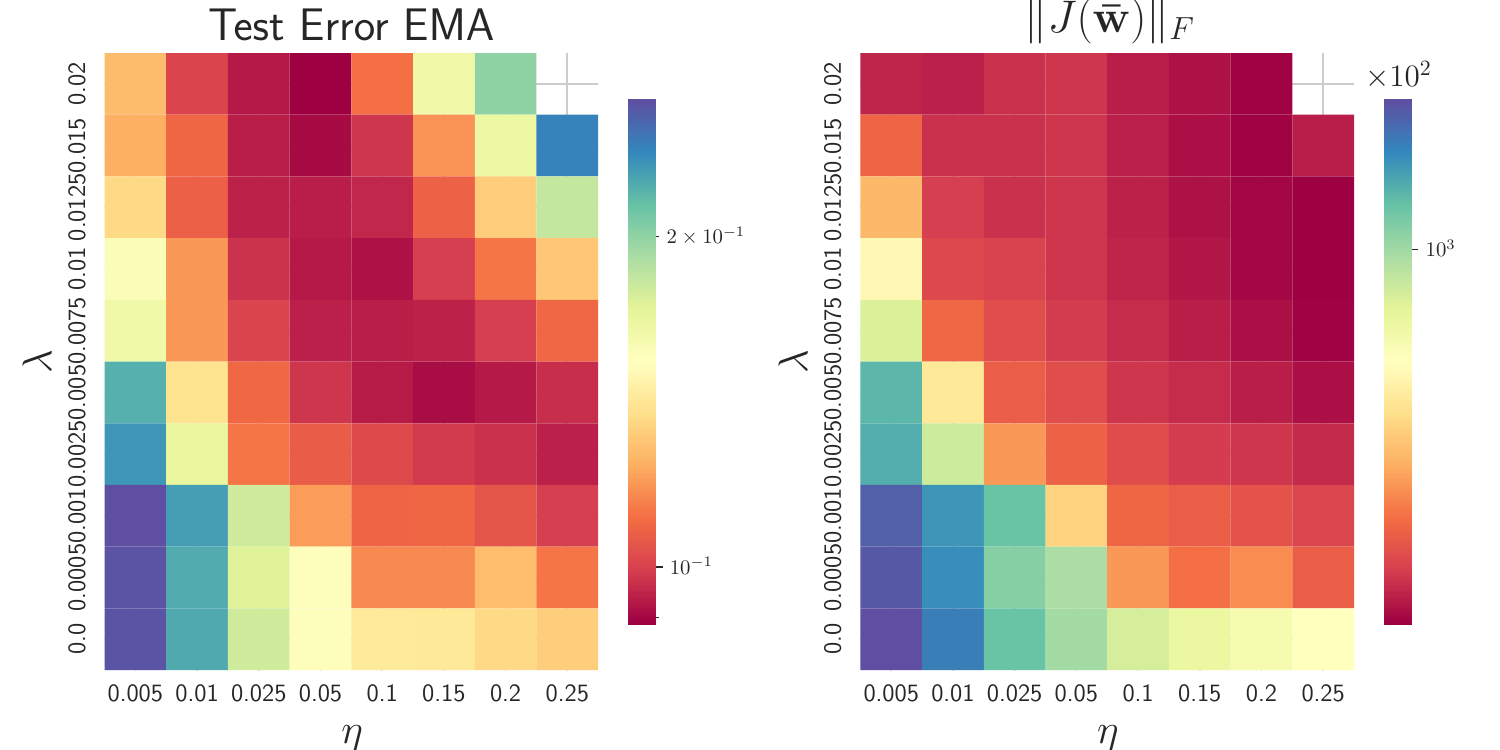}
                \caption{}
                \label{fig:app_heatmap}
        \end{subfigure}
        \begin{subfigure}{0.49\textwidth}
                \includegraphics[width=\linewidth]{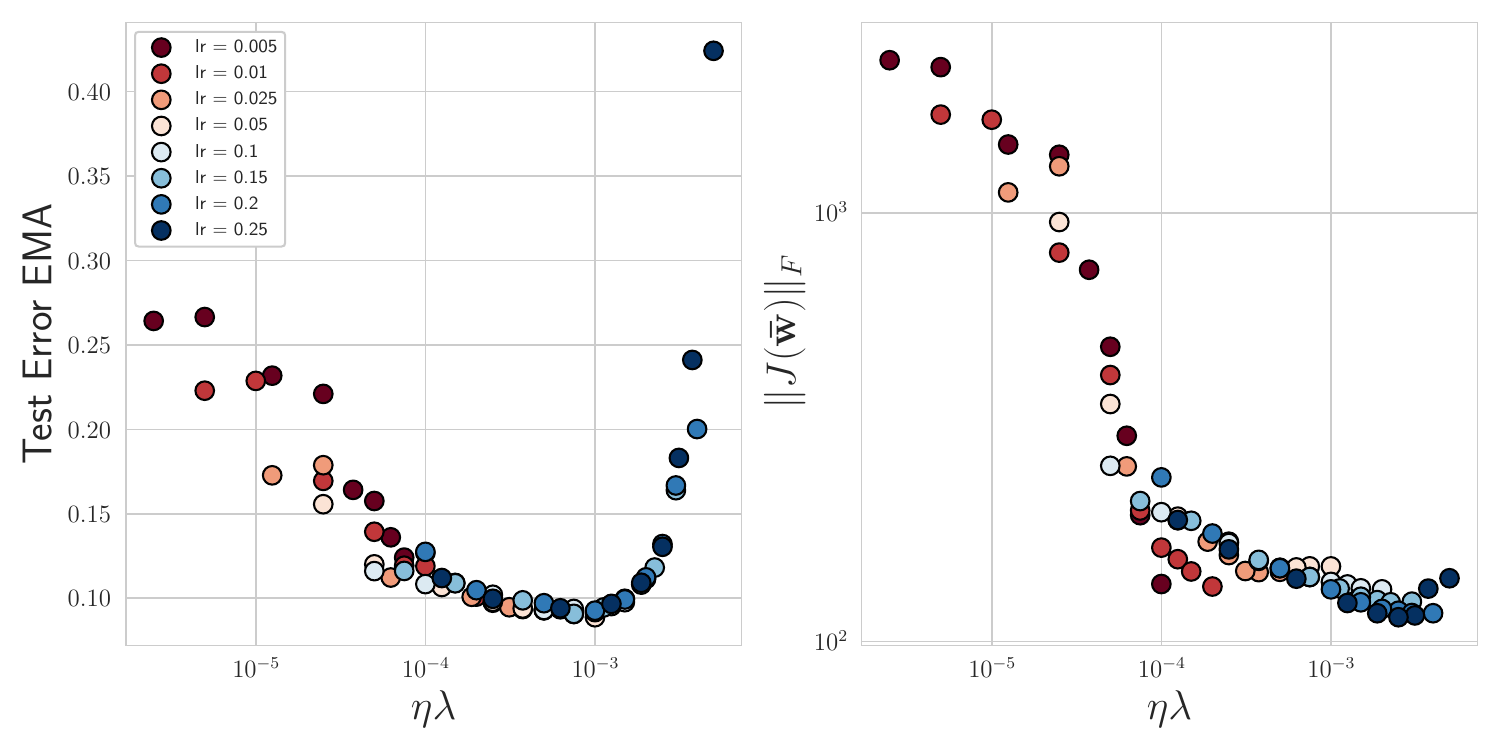}
                \caption{}
                \label{fig:app_scatter_product}
        \end{subfigure}
\caption{\textbf{Resnet18 on CIFAR10} We train Resnet18 on CIFAR10 for 100 epochs with different $\eta$ and $\lambda$. Fig.~\ref{fig:app_heatmap} reports a heatmap of the test error and Jacobian norm for the EMA for all the different combinations of parameters. The test error presents an optimal value of $\eta$ when $\lambda$ is fixed and, consistently with conjecture~\ref{conj:general}, the Jacobian norm decreases monotonically. More over, Fig.~\ref{fig:app_scatter_product} shows how the optimality might depend only on the product $\eta \lambda$ for which the test error has a U-shape and the Jacobian norm decreases monotonically.}
\label{fig:jac_resnet_cifar}  
\end{figure*} 

\begin{figure*}[h]
\centering
    \begin{subfigure}{0.32\textwidth}
    \includegraphics[width=\textwidth]{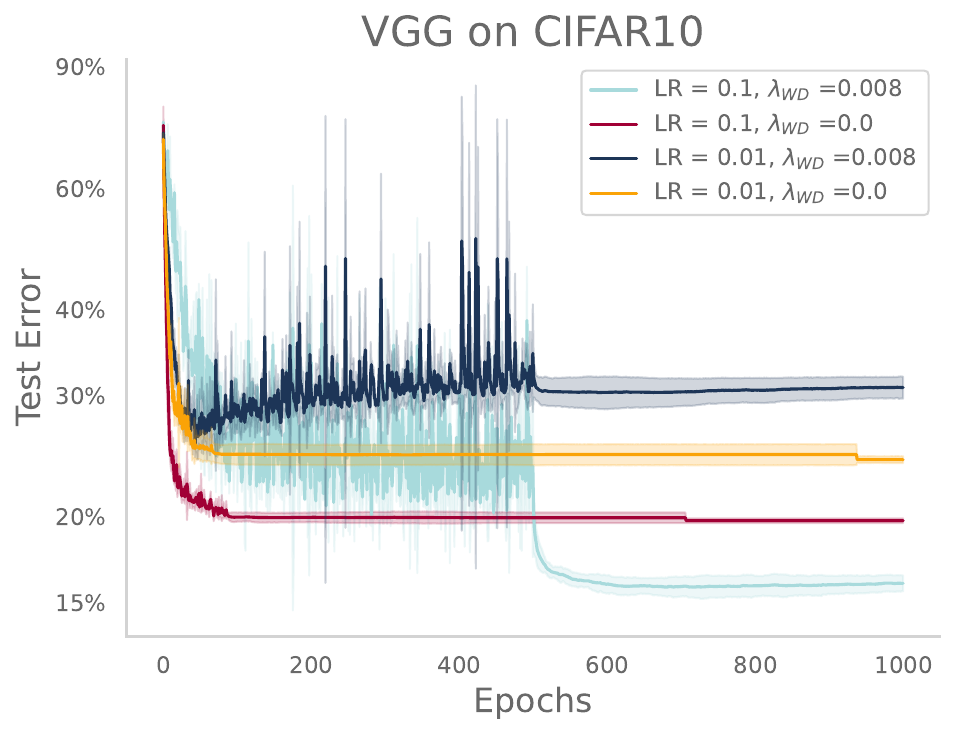}
    \caption{}
    \label{fig:vgg}
    \end{subfigure}
    \begin{subfigure}{0.32\textwidth}
    \includegraphics[width=\textwidth]{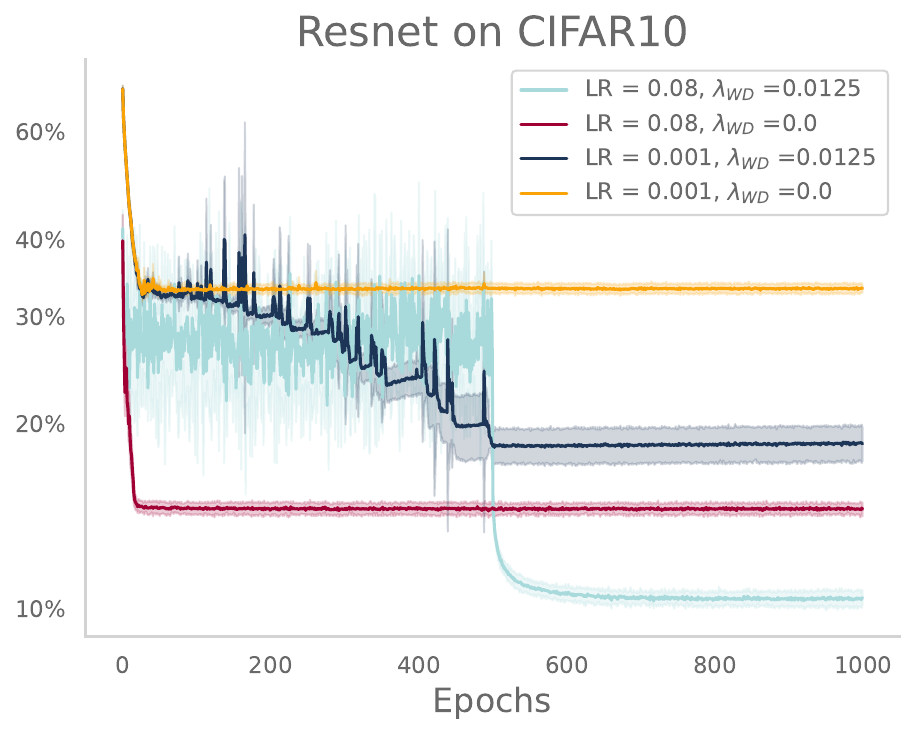}
    \caption{}
    \label{fig:res_cifar10}
    \end{subfigure}
    \begin{subfigure}{0.32\textwidth}
    \includegraphics[width=\textwidth]{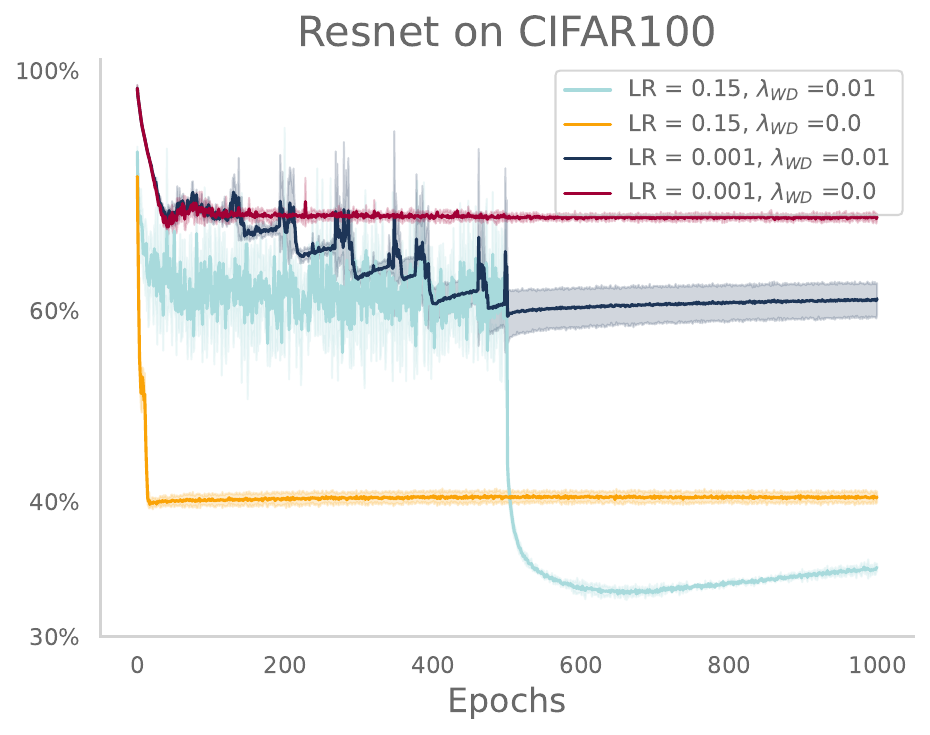}
    \caption{}
    \label{fig:res_cifar100}
    \end{subfigure}
\caption{\textbf{Training with and w/o weight decay.} We report the test error for VGG (\ref{fig:vgg}) and ResNet (\ref{fig:res_cifar10}, \ref{fig:res_cifar100})  trained on CIFAR-10/100 with and without weight decay and with small and large learning rates. After the first 500 epochs the learning rate is decayed to $\eta = 10^{-4}$ for all the curves. }
\label{fig:cifar-part-a-main}
\end{figure*}

\begin{figure}[h]
\centering
   \begin{subfigure}{0.32\textwidth}
   \includegraphics[width=\textwidth]{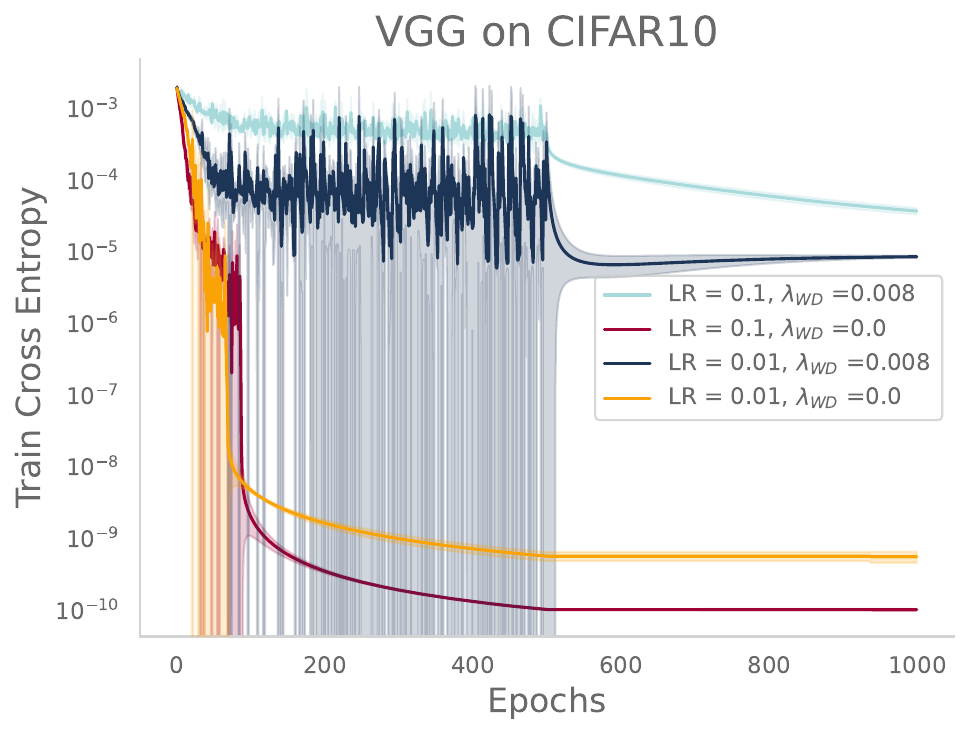}
   \caption{}
   \label{fig:vgg_CE}
   \end{subfigure}
   \begin{subfigure}{0.32\textwidth}
   \includegraphics[width=\textwidth]{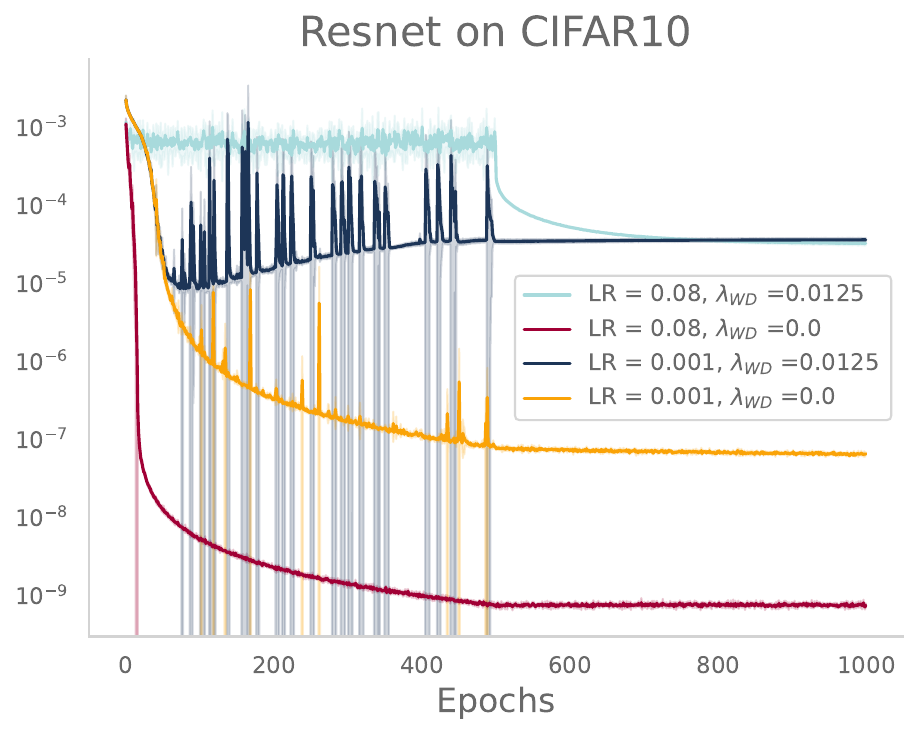}
   \caption{}
   \label{fig:res_cifar10_CE}
   \end{subfigure}
   \begin{subfigure}{0.32\textwidth}
   \includegraphics[width=\textwidth]{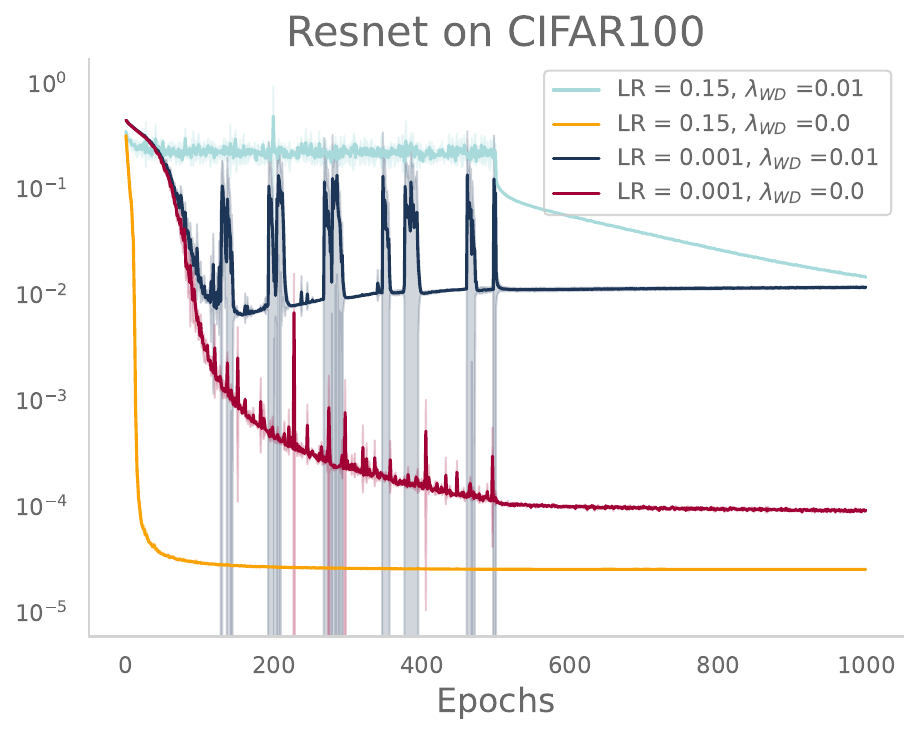}
   \caption{}
   \label{fig:res_cifar100_CE}
  \end{subfigure}
\caption{\textbf{Training with and w/o weight decay.} We report the train CE for VGG (\ref{fig:vgg_CE}) and ResNet (\ref{fig:res_cifar10_CE}, \ref{fig:res_cifar100_CE})  trained on CIFAR-10/100 with and without weight decay and with small and large learning rates. After the first 500 epochs the learning rate is decayed to $\eta = 10^{-4}$ for all the curves. }
\label{fig:app_train_CE}
\end{figure}


\myparagraph{Connection between SGD covariance and Hessian.} 
Much of the literature related to implicit bias relies on the assumption that the covariance of the noise of SGD is strictly related to the hessian of the loss function as discussed in Sec~\ref{sec:overparam_deep_learning}. Denoting the Hessian $\text{H}(\vw) := \nabla^2 \LL(\vw)$ we can write it as the so-called Gauss-Newton decomposition \citep{sagun2017empirical,papyan2018full} $\text{H}(\vw) = \text{G}(\vw) + \text{E}(\vw)$. To measure the cosine similarity (CS) between $\text{H}(\vw)$ and the covariance $\Sigma_{t}$ we compute
\begin{align*}
    CS = \mathbb{E}\left[ \text{cos}\left(\text{H}(\vw) \vv, \Sigma_{t} \vv\right) \right]
\end{align*}
where $v$ is sampled from the Gaussian distribution in $\mathbb{R}^{p}$ and $\text{cos}(\vu,\vv) = \nicefrac{\langle \vu, \vv \rangle}{\nor{\vu}\nor{\vv}}$. The results are reported in Fig.~\ref{fig:cosine_sim}.
\begin{figure}[h!]
    \centering
    \footnotesize
    \includegraphics[width=0.5\textwidth]{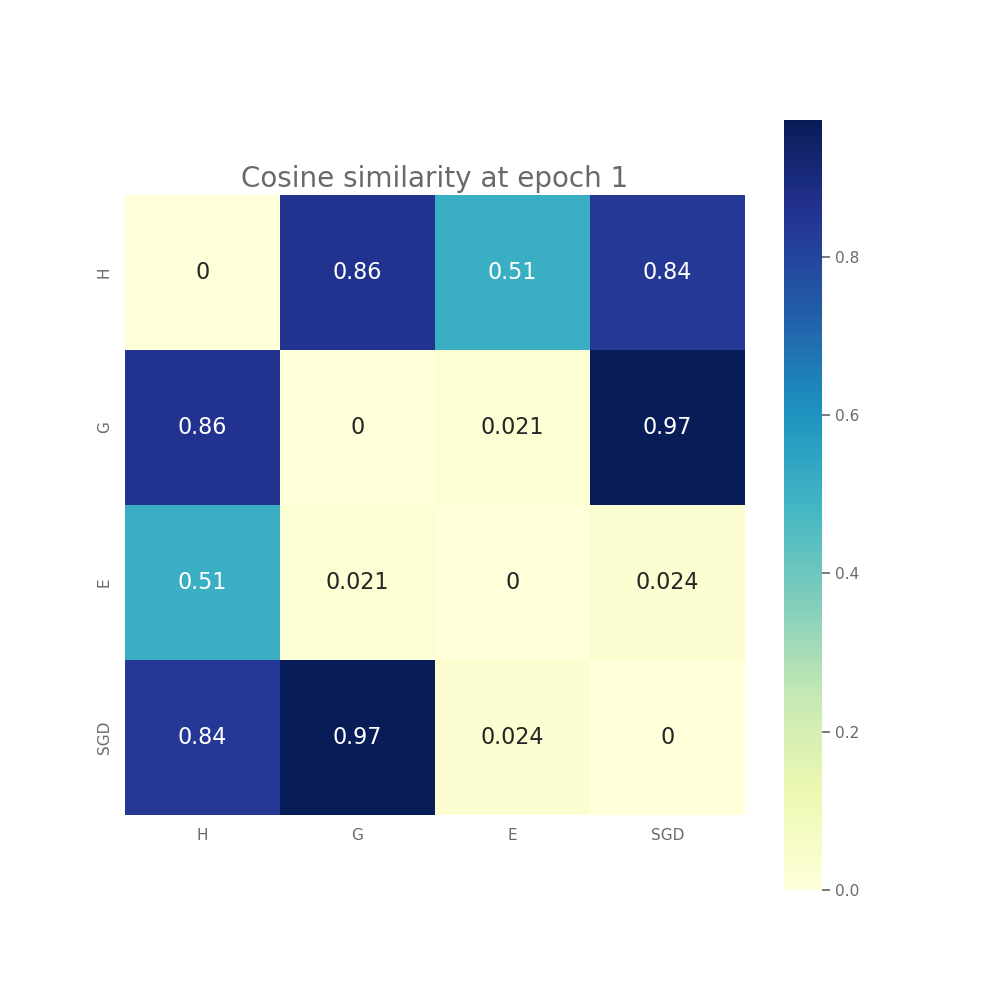}
    \caption{\textbf{Cosine similarity between hessian and Noise covariance:} we compute the cosine similarity between the hessian and the covariance of the SGD noise for a scale-invariant ResNet after one epoch with large lr $\eta = 0.005$. The results show how the two matrices are correlated and in particular how the SGD noise covariance is highly correlated with $\text{G}(\vw)$.}
    \label{fig:cosine_sim}
\end{figure}
\clearpage
\subsection{Trace of Hessian and Jacobian Norm}
\label{app:traceh_jnorm}
As reported in Table~\ref{tab:example} previous works related with label noise, theoretically derived the connection between the trajectory of SGD and a trajectory which regularizes either the trace of the Hessian or the Jacobian norm. The Hessian of a loss function $\LL$ can be decomposed as: 
\begin{equation*}
\nabla^2 \LL(\vw) =  \sum_{i=1}^N \bigg[ \underbrace{ \nabla h (x_i;\vw) \big[\nabla^2_h l(h (x_i;\vw))\big] \nabla h (x_i;\vw)^\top}_{G_i(\vw)} + \underbrace{\sum_{c=1}^K [\nabla_h l(h (x_i;\vw))]_c \nabla^2 h (x_i;\vw)}_{E_i(\vw)} \bigg] \, .
 \end{equation*}
 Many works demonstrated empirically that the $G_i$ is the dominant part of the Hessian decomposition and $\nabla^2 L(\vw) \sim \sum_{i} G_i $
\citep{sagun2017empirical}. 
 The Jacobian (J) norm instead is defined as: 
\begin{equation}\label{eqn:jac_norm}
\nor{J(\vw)}_F^2  = \frac{1}{N}\sum_{i=1}^{N} \mathrm{Tr} \left( \nabla h_{\vw}(x_i) \nabla h_{\vw}(x_i)^\top \right) \, ,
\end{equation}
 in the case of square loss,  $ \nabla^2_h l =  I $ where $I$  is the identity matrix. Hence, $\mathrm{Tr} \left( \nabla^2 L(\vw) \right) \sim  \nor{J}_F^2.$ 
The similarity is an exact equality at an interpolating solution since $ \nabla_h l(h (x_i;\vw)) = 0 $ and therefore not much ambiguity is left regarding which quantity to study.  However, in the case of classification, this fact does not hold. In particular, the trace of $\nabla^2_h l $ can significantly deviate from the identity matrix and varies depending on the value of the training loss. Consequently, although the two quantities seem closely related even when using the CE, we opt for analyzing $\nor{J}_{F}$. This choice is motivated by its lack of explicit dependence on the training loss, enabling straightforward comparisons between different solutions.In the following, we report an empirical comparison of the two quantities along the training trajectory. In Fig. (\ref{fig:TraceH_Jnorm}) we report a comparison between the trace of the Hessian and the Jacobian norm for both the fine-tuned iterates and the EMA. We can observe that for the fine-tuned iterates, both quantities display a decreasing trend; nevertheless, the ranking appears to be inverted to what is expected from previous theoretical works, i.e. larger LRs should lead to a stronger regularization effect. If we compare the EMA instead, we can see that both quantities are still decreasing along the iterations but for the Hessian we don't observe any meaningful ranking for the final solutions whereas the Jacobian norm is lower for higher LRs. 
\begin{figure*}[h!]
\centering
       \begin{subfigure}{0.24\textwidth}
               \includegraphics[width=\linewidth]{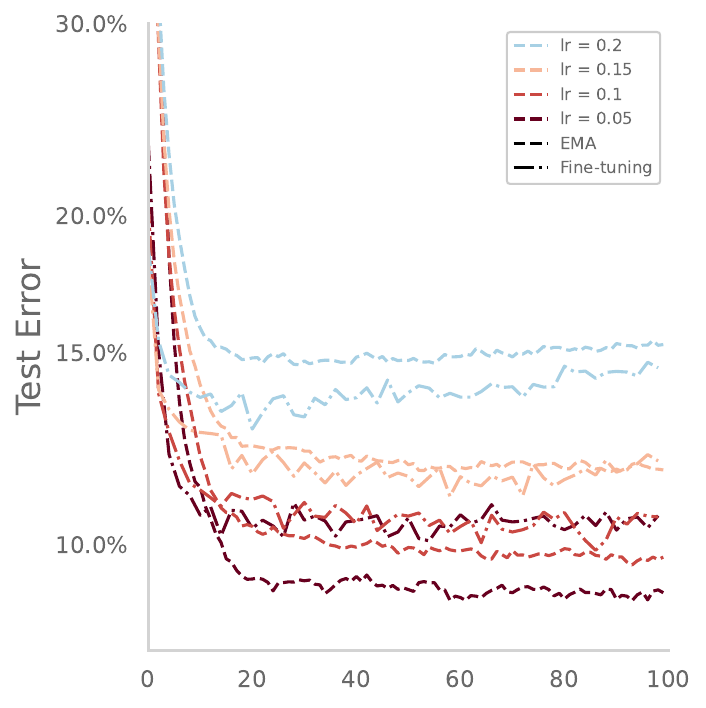}
               \caption{}
               \label{fig:test_e_ema_flow_standard_resnet}
       \end{subfigure}
       \begin{subfigure}{0.24\textwidth}
               \includegraphics[width=\linewidth]{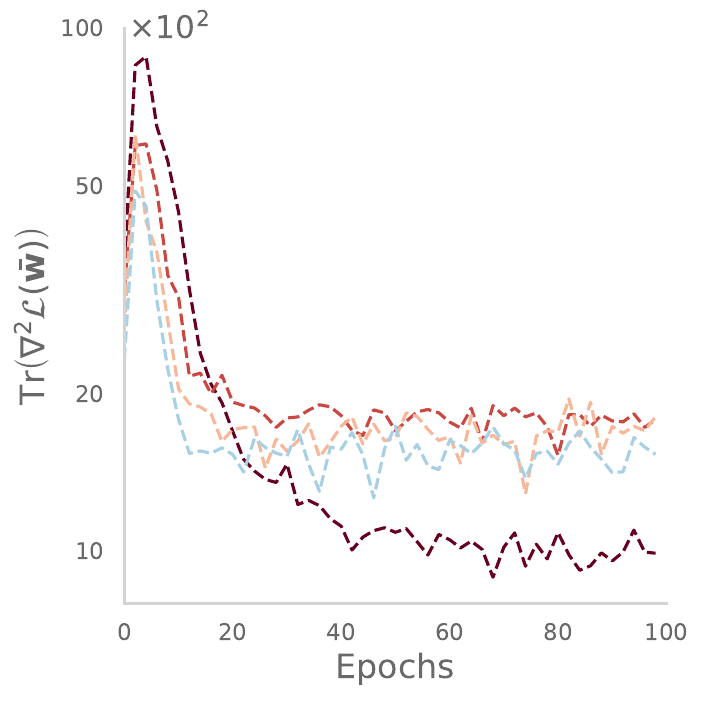}
               \caption{}
               \label{fig:trace_h_ema}
       \end{subfigure}
       \begin{subfigure}{0.24\textwidth}
               \includegraphics[width=\linewidth]{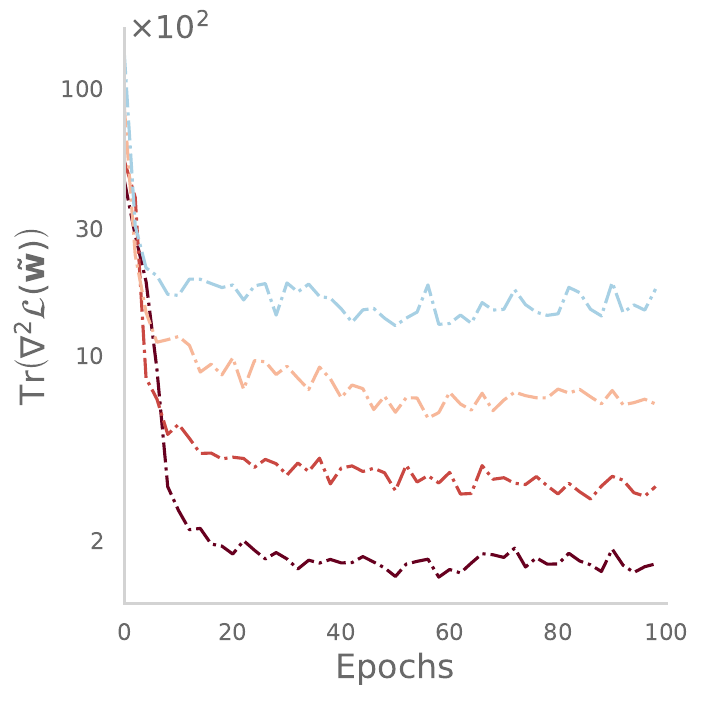}
               \caption{} 
               \label{fig:trace_h_flow}
       \end{subfigure}
      \begin{subfigure}{0.24\textwidth}
               \includegraphics[width=\linewidth]{images/new_figures/j_norm_standard_resnet.pdf}
               \caption{}
               \label{fig:j_norm_ema_flow}
       \end{subfigure}
\caption{\textbf{Trace of Hessian and Jacobian Norm} We train standard Resnet18 on CIFAR-10 for 100 epochs fixing $\lambda = 0.0125$ and varying the learning rate. We report the EMA and we finetune for 100 epochs every 3 with $\eta = 10^{-3}$.   }
\label{fig:TraceH_Jnorm}
   \vspace{-4mm}
\end{figure*}

\subsection{Experiments with scale-invariant Resnet on the sphere}
In order to isolate the implicit regularization mechanism from the large initial drop and even small fluctuations in the dynamics of the $\ell_2$ norm, we consider a simplified setting. We train scale-invariant networks~\citep{li2019exponential, li2020reconciling} with projected SGD on the unitary sphere $\mathbb{S}^{\left(p-1\right)}$.
We project the SGD iterates on the unitary sphere $\mathbb{S}^{\left(p-1\right)}$ within the context of scale-invariant ResNet architectures~\citep{li2019exponential, li2020reconciling}.
This setup is helpful for two reasons: (a) it simplifies the selection of the LR and hence tremendously reduces the experimental overhead (b) scale-invariant networks have been extensively studied in previous works \citet{li2019exponential, li2020reconciling, kodryan2022training}. 
Moreover, in the context of scale invariance, the optimization on the sphere is the natural object to study as the evolution of the direction is the only quantity that matters. 
The projected SGD update writes as \begin{align} \label{eq:sgd-sphere}
    \vw_{t+1} = \mathrm{\Pi}_{\mathbb{S}^{\left(p-1\right)}
    } \left( \vw_{t} - \eta \nabla_{\vw} \ell\left( y_{i_t}, h({\vw_t}, x_{i_t}) \right) \right) \, \\ 
    \text{where} \quad \mathrm{\Pi}_{\mathbb{S}^{\left(p-1\right)}}: \vw \mapsto \nicefrac{\vw}{\nor{\vw}_2}.
\end{align}  
The training framework still consists of two phases separated by a LR decay. The primary insight from our experiments on the sphere is depicted in Fig.~\ref{fig:resnet_cifar_sphere}: the test performance achieved in the fine-tuning phase depends on the LR used in the large-LR phase and, moreover, there is an optimal value. 
The work of \citet{kodryan2022training} reports a similar observation, where the best test loss is achieved at a LR where the loss neither converges too fast nor diverges but doesn't provide any explanation. 
Once again, our investigation reveals that the key to understand this behaviour and the dependence on the LR lies in the noisy dynamics in the large LR phase which closely tracks a regularized process. To summarize this idea we postulate a conjecture similar to the one reported in Section \ref{sec:noisy_process}. 
\begin{conjecture} \label{conj:sphere} Consider the algorithm Eq.~\ref{eq:sgd-sphere} with $\vw_{0}$ initialized from a distribution ${\mu_0\left(\mathbb{S}^{\left(p-1\right)}\right)}$. For any input $x$, let $\vw_{t}, h(\vw_{t},x)$ be the random variables that denote the iterate at time $t$ and its functional value. The stochastic process $ \left(h(\vw_{t},x)\right)_{t \in \mathbb{N}}$ will converge to a stationary distribution $\mu^{\infty}_{\eta}(x)$ with mean $\bar{\mu}_{\eta}(x)$ for which $\vw_{\eta}^{*}$ is a first-order stationary point of the following regularized loss:
\begin{align}
    \bar{\LL}(\vw) \coloneq \LL(\vw) + \eta\sigma^2_{\eta} \nor{J(\vw)}_F^2 .
\end{align}
\end{conjecture}

\begin{figure*}[h!]
\centering
\captionsetup{width=1\linewidth}
      \begin{subfigure}{0.32\textwidth}
               \includegraphics[width=\linewidth]{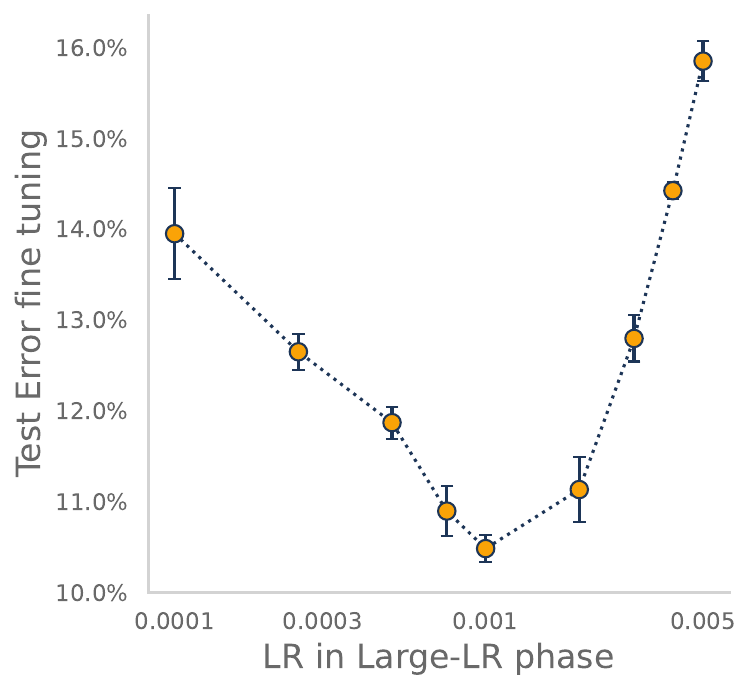}
               \caption{}
               \label{fig:u_shape_sphere}
       \end{subfigure}
       \begin{subfigure}{0.32\textwidth}
               \includegraphics[width=\linewidth]{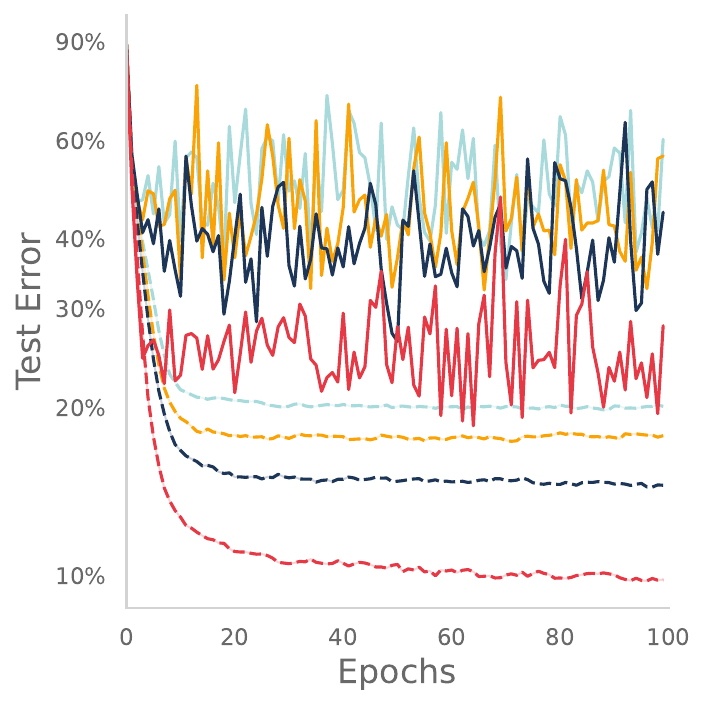}
               \caption{}
               \label{fig:test_e_sphere}
       \end{subfigure}
       \begin{subfigure}{0.32\textwidth}
               \includegraphics[width=\linewidth]{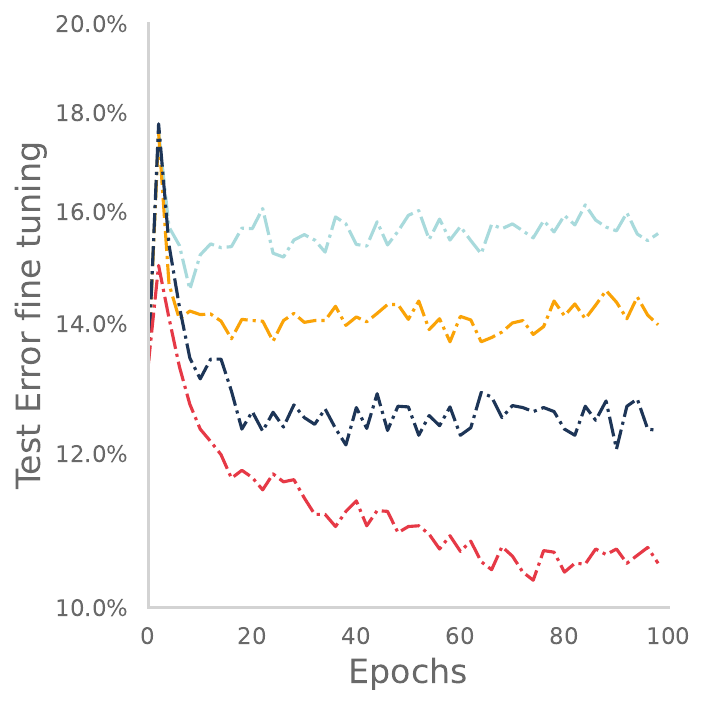}
               \caption{}
               \label{fig:test_e_flow_sphere}
       \end{subfigure}

       \begin{subfigure}{0.32\textwidth}
               \includegraphics[width=\linewidth]{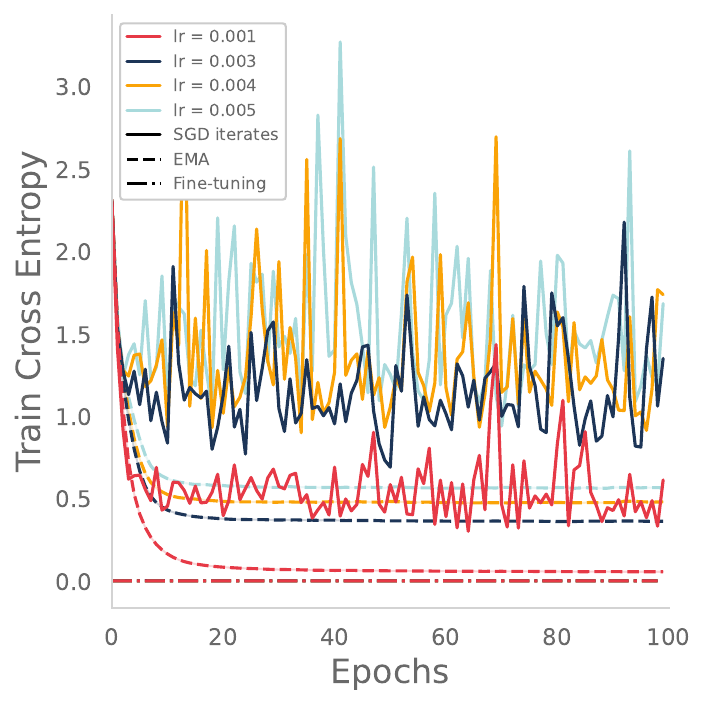}
               \caption{}
               \label{fig:train_ce_sphere}
       \end{subfigure}
        \begin{subfigure}{0.32\textwidth}
               \includegraphics[width=\linewidth]{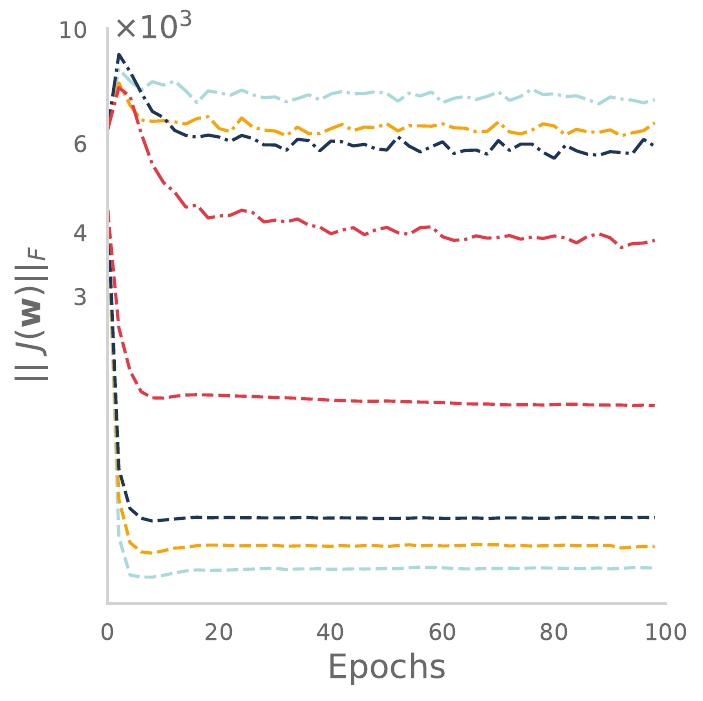}
               \caption{} 
               \label{fig:j_norm_sphere}
       \end{subfigure}
       \begin{subfigure}{0.32\textwidth}
               \includegraphics[width=\linewidth]{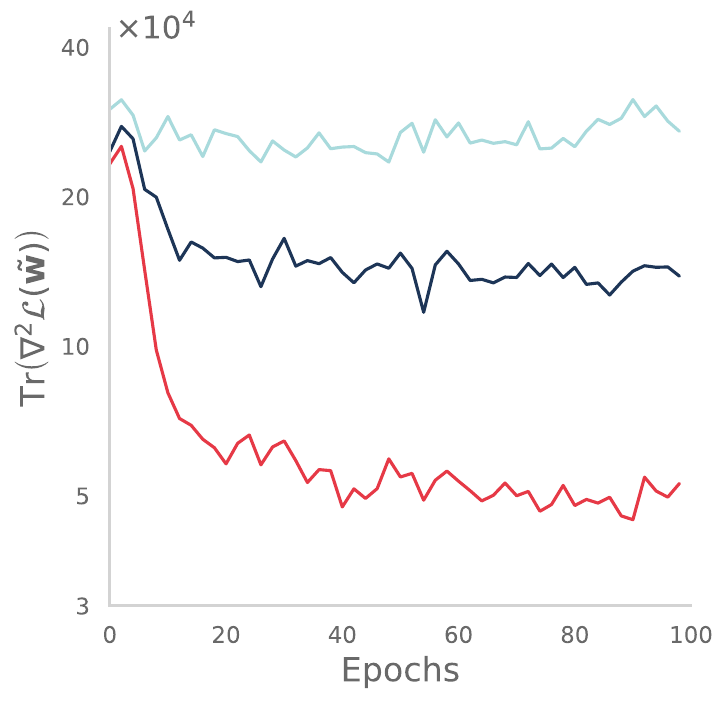}
               \caption{}
               \label{fig:trace_h_sphere}
       \end{subfigure}
\caption{\textbf{Training scale-invariant ResNets on the sphere.} We train on CIFAR-10 with different large LR for the first 100 epochs and decay it to $\eta= 10^{-4}$ afterwards. Fig. (\ref{fig:u_shape_sphere}) reports the test error with respect to different LRs in the first phase showing the existence of an optimal value. Fig. (\ref{fig:test_e_sphere}) reports the test error for the SGD iterates ($-$) and for the EMA ($--$). Figure (\ref{fig:test_e_flow_sphere}) reports the decreasing trend of the test error after fine-tuning for 100 epochs with $\eta= 10^{-4}$ every 2 epochs. Finally, Fig. (\ref{fig:j_norm_sphere}) reports the norm of the Jacobian for the EMA and the fine-tuned iterates and Figure (\ref{fig:trace_h_sphere}) reports a comparison with the trace of the Hessian for the fine-tuning iterates.}
\label{fig:resnet_cifar_sphere}
\end{figure*}
\clearpage
\newcommand{\createsubfigures}[1]{
  \begin{subfigure}{0.3\textwidth}
    \includegraphics[width=\linewidth]{final_plots_covariance/spectrum_lr=#1_wd=0.0005.pdf}
    \caption{lr=#1 wd=0.0005}
  \end{subfigure}
  \begin{subfigure}{0.3\textwidth}
    \includegraphics[width=\linewidth]{final_plots_covariance/spectrum_lr=#1_wd=0.001.pdf}
    \caption{lr=#1 wd=0.001}
  \end{subfigure}
  \begin{subfigure}{0.3\textwidth}
    \includegraphics[width=\linewidth]{final_plots_covariance/spectrum_lr=#1_wd=0.0025.pdf}
    \caption{lr=#1 wd=0.0025}
  \end{subfigure}
  \begin{subfigure}{0.3\textwidth}
    \includegraphics[width=\linewidth]{final_plots_covariance/spectrum_lr=#1_wd=0.005.pdf}
    \caption{lr=#1 wd=0.005}
  \end{subfigure}
  \begin{subfigure}{0.3\textwidth}
    \includegraphics[width=\linewidth]{final_plots_covariance/spectrum_lr=#1_wd=0.0075.pdf}
    \caption{lr=#1 wd=0.0075}
  \end{subfigure}
  \begin{subfigure}{0.3\textwidth}
    \includegraphics[width=\linewidth]{final_plots_covariance/spectrum_lr=#1_wd=0.01.pdf}
    \caption{lr=#1 wd=0.01}
  \end{subfigure}
}
\subsection{Empirical validation of Covariance approximation}
\label{app:cov_approx}
We empirically verify the validity of the "decoupling approximation" introduced in Sec.~\ref{sec:noisy_process} we use Stochastic Lanczos Quadrature \citep{yao2020pyhessian}  to estimate the empirical spectral density of the SGD covariance with and without the decoupling approximation, during the large-LR phase. The experiments are performed for ResNet-18 trained on the cifar10 dataset for different combinations of learning rate and weight decay which are used in the manuscript. The results in Figures~\ref{fig:plots_cov_approx1}, \ref{fig:plots_cov_approx2}, and \ref{fig:plots_cov_approx3} illustrate a substantial overlap in the two spectra which serves as a validation of the reliability of our approximation.

\begin{figure}[h]
  \centering
  \createsubfigures{0.001}
  \createsubfigures{0.005}
  \caption{\footnotesize We use Stochastic Lanczos Quadrature to estimate the empirical spectral density of the SGD covariance, with and without the decoupling approximation, during the large-LR phase. Experiments with ResNet-18 on the CIFAR-10 dataset, varying learning rate and weight decay, show a substantial overlap in the two spectra, validating our approximation.}
\label{fig:plots_cov_approx1}
\end{figure}
\begin{figure}[h]
  \centering
  \createsubfigures{0.01}
  \createsubfigures{0.025}
  \caption{\footnotesize We use Stochastic Lanczos Quadrature to estimate the empirical spectral density of the SGD covariance, with and without the decoupling approximation, during the large-LR phase. Experiments with ResNet-18 on the CIFAR-10 dataset, varying learning rate and weight decay, show a substantial overlap in the two spectra, validating our approximation.}
\label{fig:plots_cov_approx2}
\end{figure}
\begin{figure}[h]
  \centering
  \createsubfigures{0.05}
  \createsubfigures{0.1}
  \caption{\footnotesize We use Stochastic Lanczos Quadrature to estimate the empirical spectral density of the SGD covariance, with and without the decoupling approximation, during the large-LR phase. Experiments with ResNet-18 on the CIFAR-10 dataset, varying learning rate and weight decay, show a substantial overlap in the two spectra, validating our approximation.}
\label{fig:plots_cov_approx3}
\end{figure}

\clearpage

\subsection{Empirical verification of the conjecture through snapshot ensembles}
\label{app:ens_conj_ver}
To further validate our conjecture, we performed additional experiments. Specifically, within the same ResNet18 on CIFAR-10 setting as in our main experiments, we created snapshot ensembles \citep{huang2017snapshot} by averaging in function space along the SGD trajectory every $10$ epochs for the combinations of learning rate (LR) and weight decay (WD) considered in the paper.

To assess whether the mean of the stationary distribution in function space aligns closely with the EMA, where the Jacobian norm is regularized, we compared the performance of snapshot ensembles with that of the EMA. Additionally, we computed the Total Variation Distance $\mathcal{D}_{TV}$ between the softmax outputs of the ensemble and the EMA on the Test set
$$
\mathcal{D}_{TV} = \frac{1}{2N} \sum_{i=1}^{N} \sum_{j=1}^{C} \left| p^{(i)}_{\text{ensemble}, j} - p^{(i)}_{\text{EMA}, j} \right|.
$$
The results in Table~\ref{tab:test_error} show a strong alignment in test accuracies, while those in Table~\ref{tab:tvd} indicate a low Total Variation across all combinations. Together, these findings offer further validation for our conjecture.


\begin{table*}[h]
\vskip 0.15in
\begin{center}
\begin{small}
\caption{Test Error for Snapshot Ensemble and EMA for different values of learning rate (LR) and weight decay (WD).}
\label{tab:test_error}
\begin{sc}
\begin{adjustbox}{angle=0,origin=c,scale=0.8}
\begin{tabular}{lccccccccccccccc}
\toprule
WD & \multicolumn{2}{c}{\textbf{LR=0.001}} & \multicolumn{2}{c}{\textbf{LR=0.005}} & \multicolumn{2}{c}{\textbf{LR=0.01}} & \multicolumn{2}{c}{\textbf{LR=0.025}} & \multicolumn{2}{c}{\textbf{LR=0.05}} & \multicolumn{2}{c}{\textbf{LR=0.1}} & \multicolumn{2}{c}{\textbf{LR=0.15}} \\
\cmidrule(r){2-3} \cmidrule(r){4-5} \cmidrule(r){6-7} \cmidrule(r){8-9} \cmidrule(r){10-11} \cmidrule(r){12-13} \cmidrule(r){14-15}
    & ENS & EMA & ENS & EMA & ENS & EMA & ENS & EMA & ENS & EMA & ENS & EMA & ENS & EMA \\
\midrule
0.0000 & 0.32 & 0.33 & 0.27 & 0.26 & 0.24 & 0.25 & 0.17 & 0.17 & 0.17 & 0.17 & 0.13 & 0.13 & 0.13 & 0.13 \\
0.0005 & 0.32 & 0.32 & 0.29 & 0.29 & 0.24 & 0.24 & 0.18 & 0.17 & 0.13 & 0.16 & 0.13 & 0.13 & 0.11 & 0.13 \\
0.0010 & 0.32 & 0.33 & 0.25 & 0.27 & 0.21 & 0.21 & 0.13 & 0.19 & 0.10 & 0.13 & 0.10 & 0.11 & 0.11 & 0.11 \\
0.0015 & 0.32 & 0.34 & 0.23 & 0.22 & 0.22 & 0.25 & 0.11 & 0.14 & 0.10 & 0.12 & 0.10 & 0.10 & 0.09 & 0.09 \\
0.0025 & 0.30 & 0.30 & 0.22 & 0.22 & 0.19 & 0.20 & 0.09 & 0.10 & 0.10 & 0.11 & 0.10 & 0.10 & 0.10 & 0.11 \\
0.0050 & 0.33 & 0.34 & 0.21 & 0.20 & 0.12 & 0.16 & 0.10 & 0.10 & 0.10 & 0.10 & 0.09 & 0.09 & 0.10 & 0.09 \\
0.0075 & 0.35 & 0.37 & 0.15 & 0.16 & 0.10 & 0.11 & 0.11 & 0.11 & 0.09 & 0.08 & 0.10 & 0.09 & 0.12 & 0.10 \\
0.0100 & 0.31 & 0.34 & 0.13 & 0.15 & 0.11 & 0.11 & 0.10 & 0.10 & 0.11 & 0.10 & 0.11 & 0.09 & 0.13 & 0.13 \\
\bottomrule
\end{tabular}
\end{adjustbox}
\end{sc}
\end{small}
\end{center}
\end{table*}

\begin{table*}[h]
\vskip 0.15in
\begin{center}
\begin{small}
\caption{Total Variation Distance between softmax output of Ensemble and EMA.}
\label{tab:tvd}
\begin{sc}
\begin{tabular}{lccccccc}
\toprule
WD & \textbf{LR=0.001} & \textbf{LR=0.005} & \textbf{LR=0.01} & \textbf{LR=0.025} & \textbf{LR=0.05} & \textbf{LR=0.1} & \textbf{LR=0.15} \\
\midrule
0.0000 & 0.03 & 0.02 & 0.01 & 0.01 & 0.01 & 0.01 & 0.01 \\
0.0005 & 0.04 & 0.02 & 0.02 & 0.03 & 0.09 & 0.08 & 0.07 \\
0.0010 & 0.04 & 0.05 & 0.04 & 0.10 & 0.09 & 0.07 & 0.07 \\
0.0015 & 0.04 & 0.07 & 0.07 & 0.08 & 0.08 & 0.07 & 0.07 \\
0.0025 & 0.04 & 0.11 & 0.10 & 0.06 & 0.08 & 0.08 & 0.09 \\
0.0050 & 0.06 & 0.15 & 0.11 & 0.09 & 0.08 & 0.10 & 0.11 \\
0.0075 & 0.08 & 0.12 & 0.10 & 0.10 & 0.10 & 0.11 & 0.12 \\
0.0100 & 0.10 & 0.09 & 0.10 & 0.10 & 0.11 & 0.12 & 0.13 \\
\bottomrule
\end{tabular}
\end{sc}
\end{small}
\end{center}
\end{table*}

\clearpage

\section{Weight decay for large language models: additional figures and details}

We present the following additional figures related to the LLM experiments.
We show that the validation loss of a GPT-2-124M model is determined by the training loss and not influenced by $\lambda$ in Fig.~\ref{fig:owt_gpt2small256_correlation_train_val}. We also show that the generalization gap stays close to zero throughout training for different $\lambda$ for both 124M and 774M parameter models. 
We show the results for models trained \textit{weight decay on LayerNorm weights} in Fig.~\ref{fig:owt_gpt2small256_penalize_layernorm}. We see that penalizing all parameters in weight decay (i.e., including the LayerNorm parameters) leads to the same effect for smaller $\lambda$ (like $0.1$) but underperforms on larger $\lambda$ (like $0.3$). Note that when WD is applied on all weights, this changes the optimal value of the objective.
In Fig.~\ref{fig:owt_gpt2small256_l2_reg}, we train models with $\ell_2$ regularization instead of decoupled weight decay as in AdamW \citep{loshchilov2017decoupled}. We observe that $\ell_2$ regularization instead of weight decay leads to the same effect as decoupled weight decay \citep{loshchilov2017decoupled}. 
We train models using SGD with momentum and show the results in Fig.~\ref{fig:owt_gpt2small256_sgdm}. We see that weight decay leads to a similar improvement in training loss for SGD with momentum as well. 
We show multiple metrics in Fig.~\ref{fig:owt_gpt2small256_grad_related_metrics} for the models shown in Fig.~\ref{fig:owt_gpt2small256}: gradient variance, gradient norm, and weight norm plots that complement Fig.~\ref{fig:owt_gpt2small256_elr} in the main part.
In Fig.~\ref{fig:owt_gpt2small256_with_averaging}, we show results of weight averaging that suggests the suboptimality gap between runs with different $\lambda$ is much smaller than what the loss at $w_t$ suggests. However, weight averaging is still less effective than fine-tuning with a tiny LR as in Fig.~\ref{fig:owt_gpt2small256}.
%
%
Finally, in Fig.~\ref{fig:owt_gpt2small_bfloat16_diff_lr_weight_norms}, we show results of models trained context length 1024. 
We see that the training loss over iterations for models trained with a range of LR and WD (all are \texttt{bfloat16}). All runs with LR smaller than $0.001$ successfully converge but the final training loss is higher than for LR $0.001$. In addition, we observe that lower learning rates prevent the weights from growing too much. 

\begin{figure}[h!]
    \centering
    \includegraphics[width=0.4\textwidth]{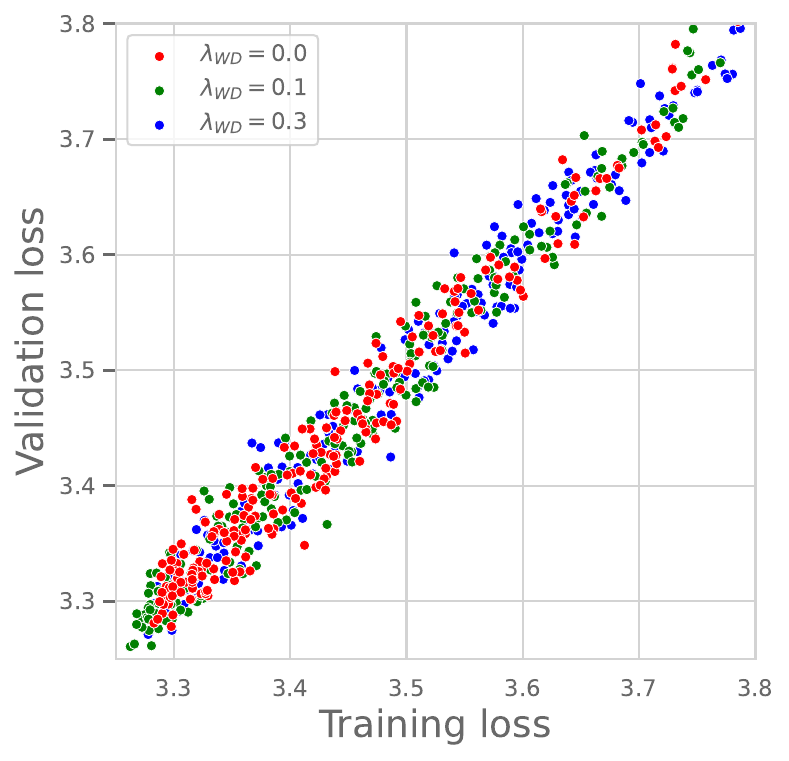}
    \includegraphics[width=0.42\textwidth]{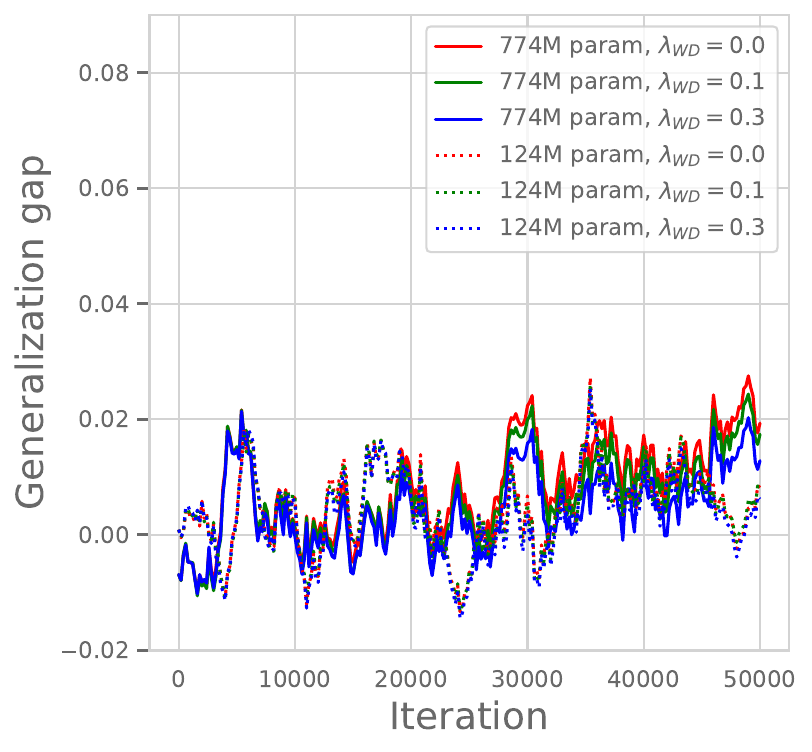}
    \caption{
        \textbf{Left}: The validation loss of a GPT-2-124M model is determined by the training loss and not influenced by $\lambda$. 
        \textbf{Right}: The generalization gap stays close to zero throughout training for different $\lambda$ for both 124M and 774M parameter models.
    }
    \label{fig:owt_gpt2small256_correlation_train_val}
\end{figure}

\begin{figure}[h!]
    \centering
    \footnotesize
    \includegraphics[width=0.5\textwidth]{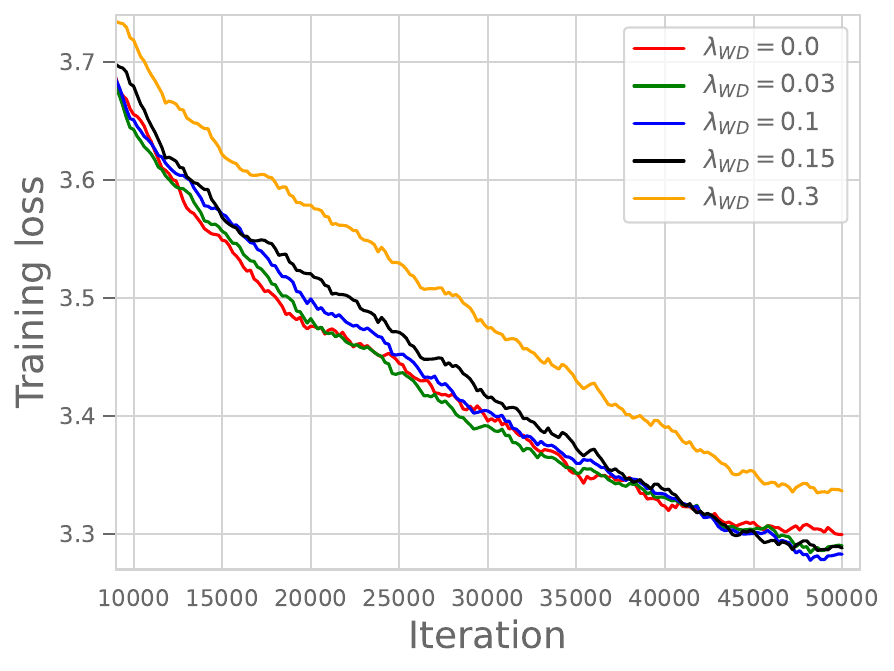}
    \vspace{-2mm}
    \caption{\textbf{GPT-2-124M on OpenWebText with weight decay on LayerNorm weights.} Penalizing all parameters in weight decay (i.e., including the LayerNorm parameters) leads to the same effect for smaller $\lambda$ (like $0.1$) but underperforms on larger $\lambda$ (like $0.3$). Note that when WD is applied on all weights, this changes the optimal value of the objective.}
    \label{fig:owt_gpt2small256_penalize_layernorm}
\end{figure}

\begin{figure}[h!]
    \centering
    \footnotesize
    \includegraphics[width=0.5\textwidth]{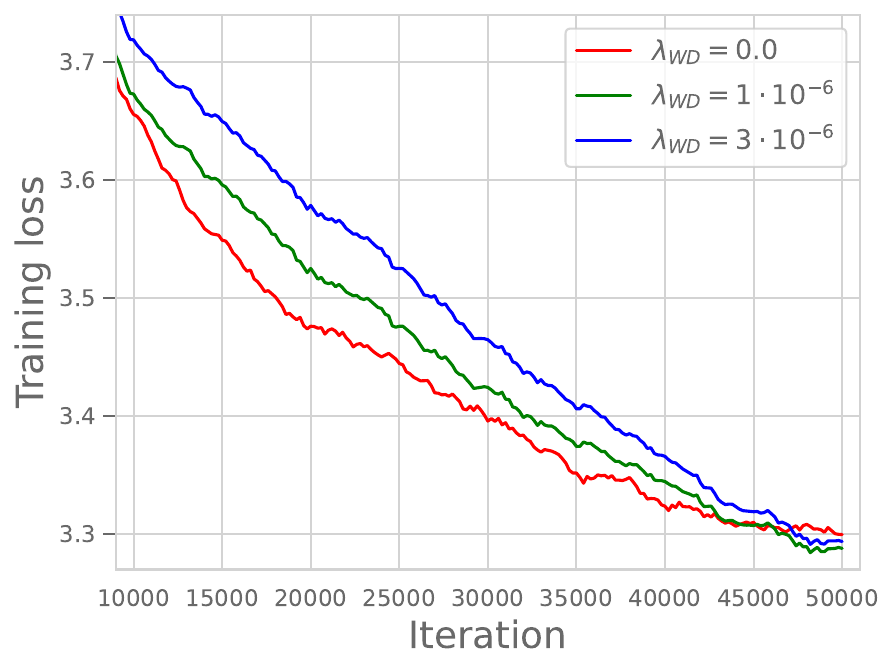}
    \vspace{-2mm}
    \caption{\textbf{GPT-2-124M on OpenWebText with $\ell_2$ regularization.} We observe that $\ell_2$ regularization instead of weight decay leads to the same effect as decoupled weight decay \citep{loshchilov2017decoupled}.}
    \label{fig:owt_gpt2small256_l2_reg}
\end{figure}

\begin{figure}[h!]
    \centering
    \footnotesize
    \includegraphics[width=0.5\textwidth]{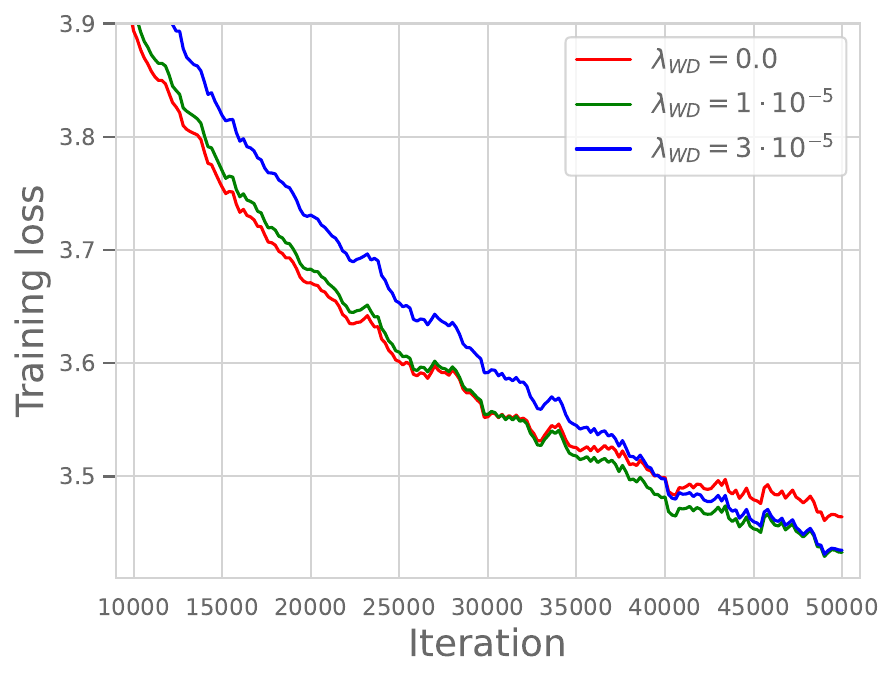}
    \vspace{-2mm}
    \caption{\textbf{GPT-2-124M on OpenWebText trained with SGD with momentum.} Weight decay leads to a similar improvement in training loss for \textit{SGD with momentum} as well (all other experiments are done with AdamW).}
    \label{fig:owt_gpt2small256_sgdm}
\end{figure}


\begin{figure}[h!]
    \centering
    \footnotesize
    \scriptsize
    \begin{tabular}{ccc}
        \textbf{AdamW, \boldsymbol{$10\times$} cosine LR decay} & \textbf{AdamW, constant LR} & \textbf{SGD+M, \boldsymbol{$10\times$} cosine LR decay} \\
        \includegraphics[width=0.32\textwidth]{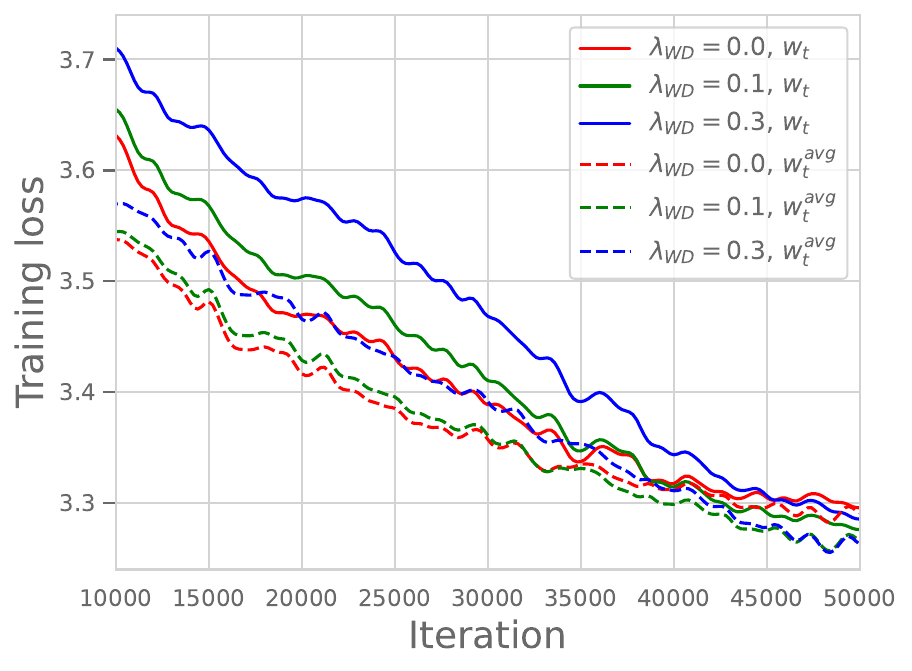} &
        \includegraphics[width=0.32\textwidth]{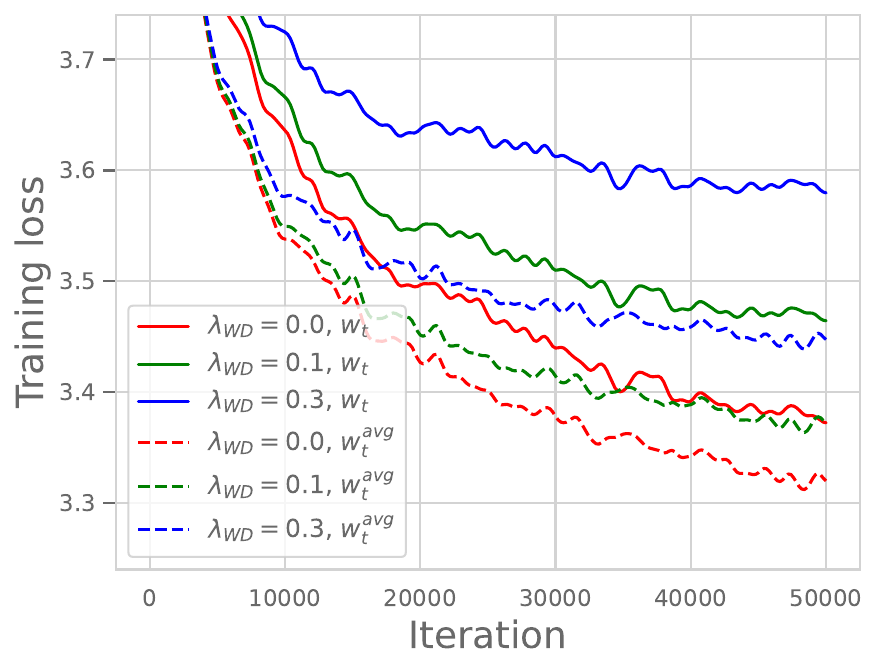} &
        \includegraphics[width=0.32\textwidth]{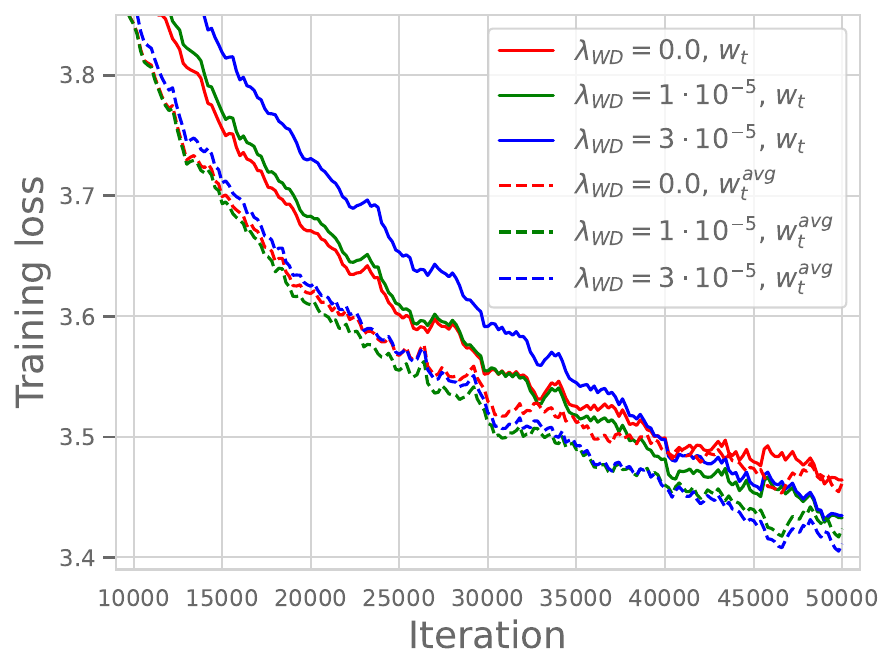}
    \end{tabular}
    \vspace{-2mm}
    \caption{\textbf{Weight averaging for GPT-2-124M on OpenWebText.} Weight averaging ($w_t^{avg}$) shows that the suboptimality gap between runs with different $\lambda$ is much smaller than what the loss at $w_t$ suggests. However, weight averaging is still less effective than fine-tuning with a tiny LR as in Fig.~\ref{fig:owt_gpt2small256}.}
    \label{fig:owt_gpt2small256_with_averaging}
\end{figure}


\begin{figure}[t]
    \centering
    \includegraphics[width=0.4\textwidth]{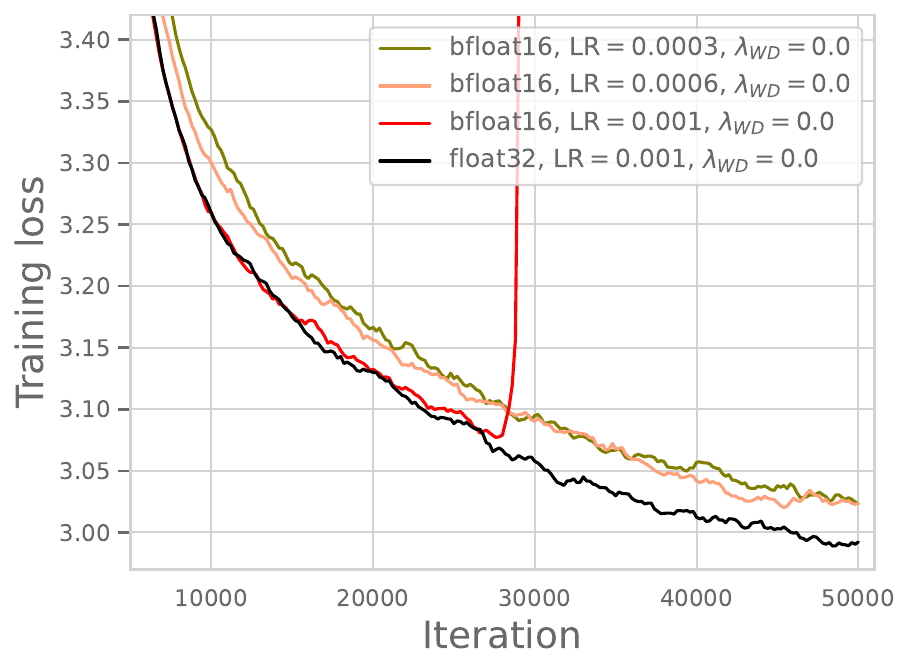}
    \vspace{-1mm}
    \caption{\textbf{GPT-2-124M on OpenWebText with context length 1024.} 
    The model trained with a moderate LR $0.001$ diverges for \texttt{bfloat16} but not for \texttt{float32}; lowering the LR prevents the divergence but leads to a worse loss.
    }
    \vspace{-4mm}
    \label{fig:owt_gpt2small_bfloat16_diff_lr}
\end{figure}

\begin{figure}[h!]
    \centering
    \footnotesize
    \includegraphics[width=0.45\textwidth]{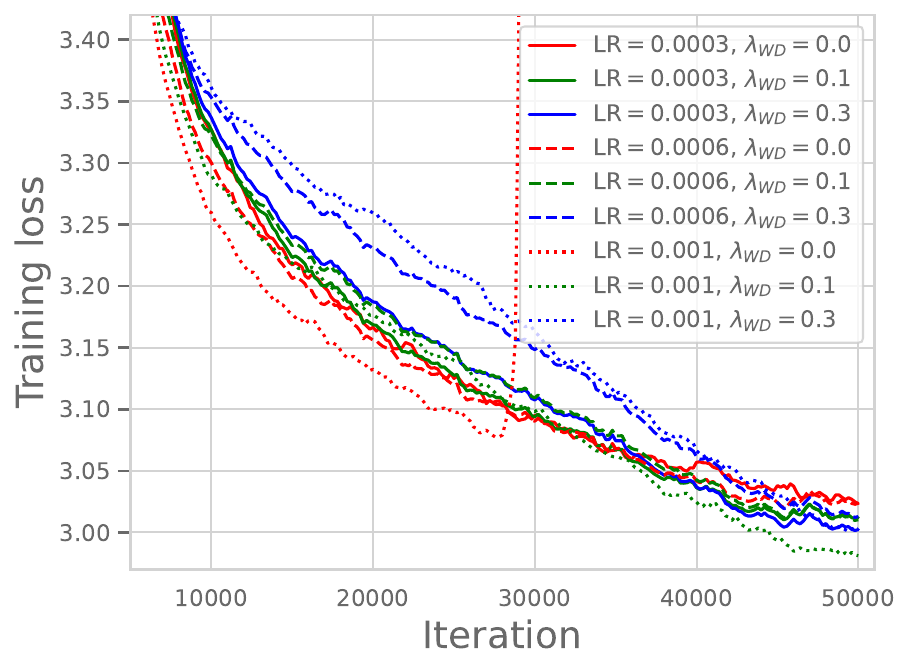}
    \includegraphics[width=0.45\textwidth]{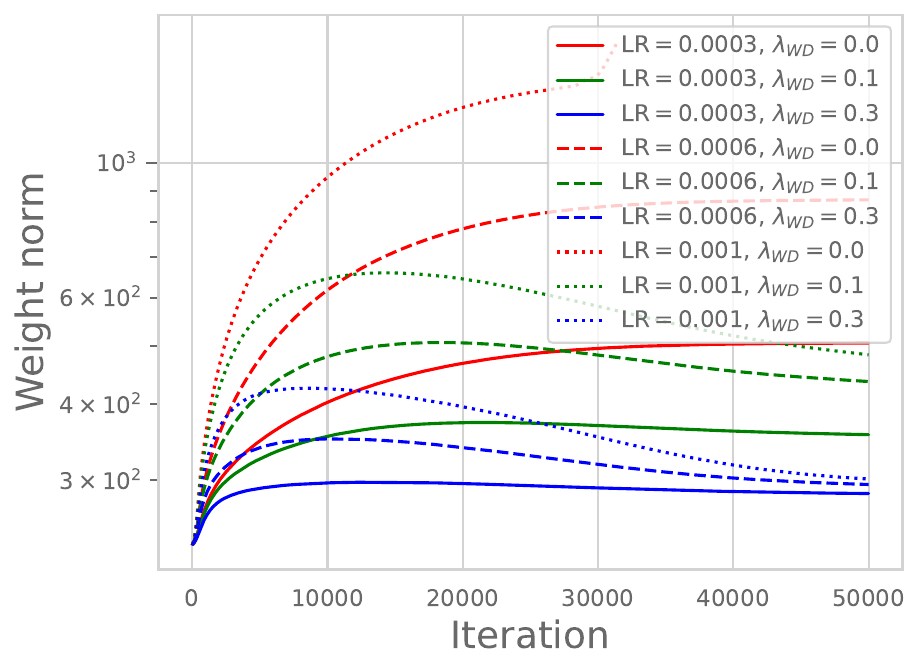}
    \vspace{-2mm}
    \caption{\textbf{GPT-2-124M on OpenWebText with context length 1024.} 
    (\textit{Left}) The training loss over iterations for models trained with a range of LR and WD (all are \texttt{bfloat16}). All runs with LR smaller than $0.001$ successfully converge but the final training loss is higher than for LR $0.001$. 
    (\textit{Right}) Weight norms for LR in $0.0003, 0.0006, 0.001$ for $\lambda=0.1$ which does not diverge. Lower learning rates prevent the weights from growing too much.}
    \label{fig:owt_gpt2small_bfloat16_diff_lr_weight_norms}
\end{figure}

\begin{figure}[h!]
    \centering
    \footnotesize
    \includegraphics[width=0.45\textwidth]{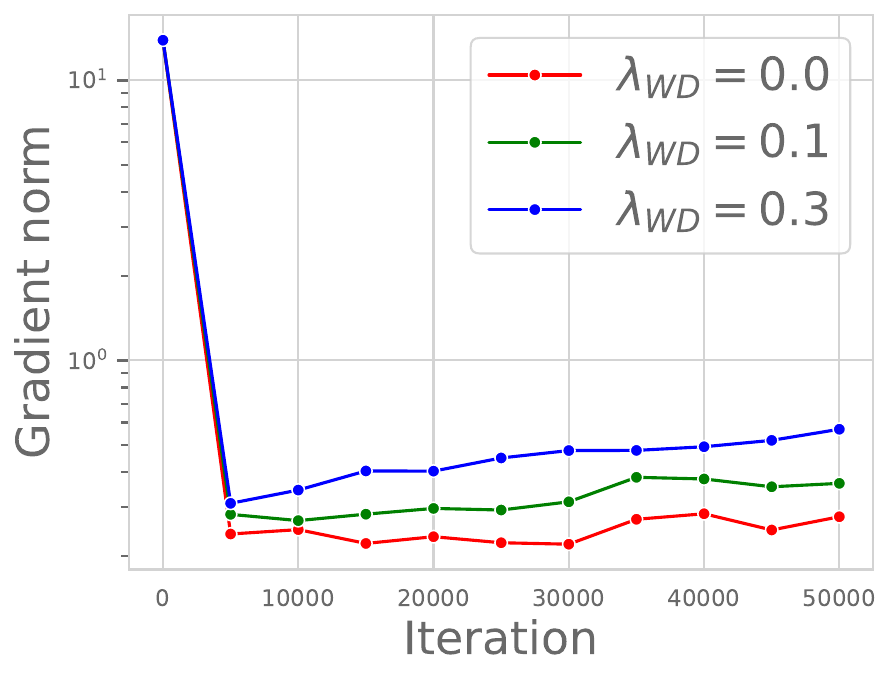}
    \includegraphics[width=0.45\textwidth]{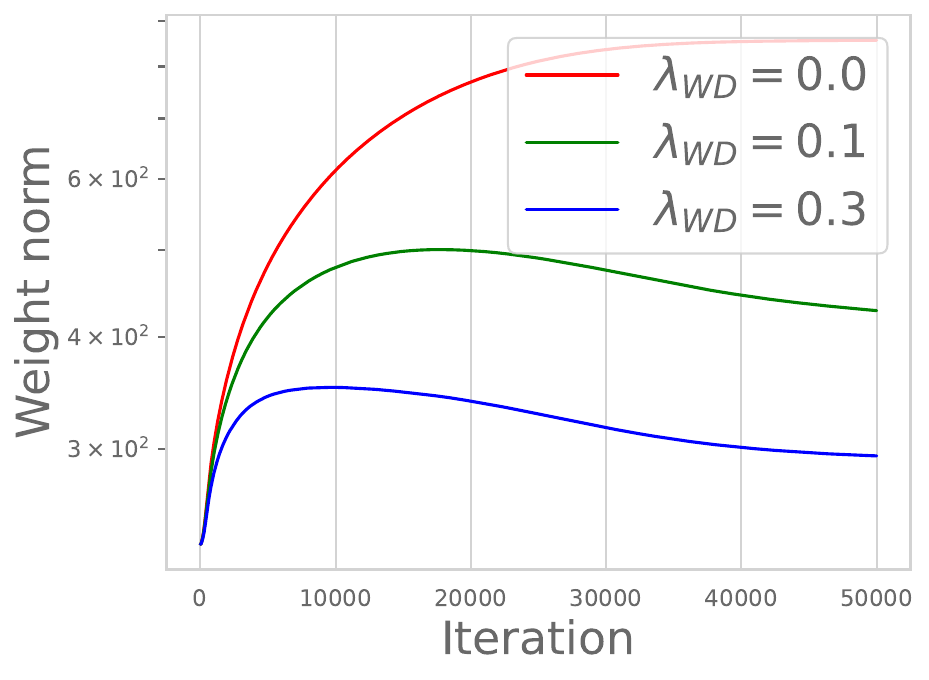}
    \caption{
        \textbf{GPT-2-124M on OpenWebText.} The gradient norm and weight norm plots for the models reported in Fig.~\ref{fig:owt_gpt2small256_elr}. 
    }
    \label{fig:owt_gpt2small256_grad_related_metrics}
\end{figure}

\clearpage
\section*{NeurIPS Paper Checklist}

\begin{enumerate}

\item {\bf Claims}
    \item[] Question: Do the main claims made in the abstract and introduction accurately reflect the paper's contributions and scope?
    \item[] Answer: \answerYes{} 
    \item[] Justification: The main claims in the abstract and introduction accurately reflect all our contributions presented in the paper.
    \item[] Guidelines:
    \begin{itemize}
        \item The answer NA means that the abstract and introduction do not include the claims made in the paper.
        \item The abstract and/or introduction should clearly state the claims made, including the contributions made in the paper and important assumptions and limitations. A No or NA answer to this question will not be perceived well by the reviewers. 
        \item The claims made should match theoretical and experimental results, and reflect how much the results can be expected to generalize to other settings. 
        \item It is fine to include aspirational goals as motivation as long as it is clear that these goals are not attained by the paper. 
    \end{itemize}

\item {\bf Limitations}
    \item[] Question: Does the paper discuss the limitations of the work performed by the authors?
    \item[] Answer: \answerYes{} 
    \item[] Justification: We mention the limitations of our work at the end of \Cref{sec:conclusions}.
    \item[] Guidelines:
    \begin{itemize}
        \item The answer NA means that the paper has no limitation while the answer No means that the paper has limitations, but those are not discussed in the paper. 
        \item The authors are encouraged to create a separate "Limitations" section in their paper.
        \item The paper should point out any strong assumptions and how robust the results are to violations of these assumptions (e.g., independence assumptions, noiseless settings, model well-specification, asymptotic approximations only holding locally). The authors should reflect on how these assumptions might be violated in practice and what the implications would be.
        \item The authors should reflect on the scope of the claims made, e.g., if the approach was only tested on a few datasets or with a few runs. In general, empirical results often depend on implicit assumptions, which should be articulated.
        \item The authors should reflect on the factors that influence the performance of the approach. For example, a facial recognition algorithm may perform poorly when image resolution is low or images are taken in low lighting. Or a speech-to-text system might not be used reliably to provide closed captions for online lectures because it fails to handle technical jargon.
        \item The authors should discuss the computational efficiency of the proposed algorithms and how they scale with dataset size.
        \item If applicable, the authors should discuss possible limitations of their approach to address problems of privacy and fairness.
        \item While the authors might fear that complete honesty about limitations might be used by reviewers as grounds for rejection, a worse outcome might be that reviewers discover limitations that aren't acknowledged in the paper. The authors should use their best judgment and recognize that individual actions in favor of transparency play an important role in developing norms that preserve the integrity of the community. Reviewers will be specifically instructed to not penalize honesty concerning limitations.
    \end{itemize}

\item {\bf Theory Assumptions and Proofs}
    \item[] Question: For each theoretical result, does the paper provide the full set of assumptions and a complete (and correct) proof?
    \item[] Answer: \answerYes{} 
    \item[] Justification: We do not have new standalone theoretical results like theorems or lemma, but we provide short derivations presented in \Cref{sec:noisy_process}, \Cref{sec:llm}, and \Cref{sec:supporting_derivations} that support our experimental results.
    \item[] Guidelines:
    \begin{itemize}
        \item The answer NA means that the paper does not include theoretical results. 
        \item All the theorems, formulas, and proofs in the paper should be numbered and cross-referenced.
        \item All assumptions should be clearly stated or referenced in the statement of any theorems.
        \item The proofs can either appear in the main paper or the supplemental material, but if they appear in the supplemental material, the authors are encouraged to provide a short proof sketch to provide intuition. 
        \item Inversely, any informal proof provided in the core of the paper should be complemented by formal proofs provided in appendix or supplemental material.
        \item Theorems and Lemmas that the proof relies upon should be properly referenced. 
    \end{itemize}

    \item {\bf Experimental Result Reproducibility}
    \item[] Question: Does the paper fully disclose all the information needed to reproduce the main experimental results of the paper to the extent that it affects the main claims and/or conclusions of the paper (regardless of whether the code and data are provided or not)?
    \item[] Answer: \answerYes{} 
    \item[] Justification: We provide detailed settings of our experiments in \Cref{sec:app_train_details}.
    \item[] Guidelines:
    \begin{itemize}
        \item The answer NA means that the paper does not include experiments.
        \item If the paper includes experiments, a No answer to this question will not be perceived well by the reviewers: Making the paper reproducible is important, regardless of whether the code and data are provided or not.
        \item If the contribution is a dataset and/or model, the authors should describe the steps taken to make their results reproducible or verifiable. 
        \item Depending on the contribution, reproducibility can be accomplished in various ways. For example, if the contribution is a novel architecture, describing the architecture fully might suffice, or if the contribution is a specific model and empirical evaluation, it may be necessary to either make it possible for others to replicate the model with the same dataset, or provide access to the model. In general. releasing code and data is often one good way to accomplish this, but reproducibility can also be provided via detailed instructions for how to replicate the results, access to a hosted model (e.g., in the case of a large language model), releasing of a model checkpoint, or other means that are appropriate to the research performed.
        \item While NeurIPS does not require releasing code, the conference does require all submissions to provide some reasonable avenue for reproducibility, which may depend on the nature of the contribution. For example
        \begin{enumerate}
            \item If the contribution is primarily a new algorithm, the paper should make it clear how to reproduce that algorithm.
            \item If the contribution is primarily a new model architecture, the paper should describe the architecture clearly and fully.
            \item If the contribution is a new model (e.g., a large language model), then there should either be a way to access this model for reproducing the results or a way to reproduce the model (e.g., with an open-source dataset or instructions for how to construct the dataset).
            \item We recognize that reproducibility may be tricky in some cases, in which case authors are welcome to describe the particular way they provide for reproducibility. In the case of closed-source models, it may be that access to the model is limited in some way (e.g., to registered users), but it should be possible for other researchers to have some path to reproducing or verifying the results.
        \end{enumerate}
    \end{itemize}

\item {\bf Open access to data and code}
    \item[] Question: Does the paper provide open access to the data and code, with sufficient instructions to faithfully reproduce the main experimental results, as described in supplemental material?
    \item[] Answer: \answerYes{} 
    \item[] Justification: The code is uploaded in supplemental material.
    \item[] Guidelines:
    \begin{itemize}
        \item The answer NA means that paper does not include experiments requiring code.
        \item Please see the NeurIPS code and data submission guidelines (\url{https://nips.cc/public/guides/CodeSubmissionPolicy}) for more details.
        \item While we encourage the release of code and data, we understand that this might not be possible, so “No” is an acceptable answer. Papers cannot be rejected simply for not including code, unless this is central to the contribution (e.g., for a new open-source benchmark).
        \item The instructions should contain the exact command and environment needed to run to reproduce the results. See the NeurIPS code and data submission guidelines (\url{https://nips.cc/public/guides/CodeSubmissionPolicy}) for more details.
        \item The authors should provide instructions on data access and preparation, including how to access the raw data, preprocessed data, intermediate data, and generated data, etc.
        \item The authors should provide scripts to reproduce all experimental results for the new proposed method and baselines. If only a subset of experiments are reproducible, they should state which ones are omitted from the script and why.
        \item At submission time, to preserve anonymity, the authors should release anonymized versions (if applicable).
        \item Providing as much information as possible in supplemental material (appended to the paper) is recommended, but including URLs to data and code is permitted.
    \end{itemize}

\item {\bf Experimental Setting/Details}
    \item[] Question: Does the paper specify all the training and test details (e.g., data splits, hyperparameters, how they were chosen, type of optimizer, etc.) necessary to understand the results?
    \item[] Answer: \answerYes{} 
    \item[] Justification: We describe them in the corresponding sections (\Cref{sec:overparam_deep_learning} and \Cref{sec:llm}) and provide further details in \Cref{sec:app_train_details}.
    \item[] Guidelines:
    \begin{itemize}
        \item The answer NA means that the paper does not include experiments.
        \item The experimental setting should be presented in the core of the paper to a level of detail that is necessary to appreciate the results and make sense of them.
        \item The full details can be provided either with the code, in appendix, or as supplemental material.
    \end{itemize}

\item {\bf Experiment Statistical Significance}
    \item[] Question: Does the paper report error bars suitably and correctly defined or other appropriate information about the statistical significance of the experiments?
    \item[] Answer: \answerYes{} 
    \item[] Justification: We provide error bars in figures wherever it is feasible (i.e., most experiments in \Cref{sec:overparam_deep_learning}), but we omit them in \Cref{sec:llm} since LLM experiments are much more computationally demanding.
    \item[] Guidelines:
    \begin{itemize}
        \item The answer NA means that the paper does not include experiments.
        \item The authors should answer "Yes" if the results are accompanied by error bars, confidence intervals, or statistical significance tests, at least for the experiments that support the main claims of the paper.
        \item The factors of variability that the error bars are capturing should be clearly stated (for example, train/test split, initialization, random drawing of some parameter, or overall run with given experimental conditions).
        \item The method for calculating the error bars should be explained (closed form formula, call to a library function, bootstrap, etc.)
        \item The assumptions made should be given (e.g., Normally distributed errors).
        \item It should be clear whether the error bar is the standard deviation or the standard error of the mean.
        \item It is OK to report 1-sigma error bars, but one should state it. The authors should preferably report a 2-sigma error bar than state that they have a 96\% CI, if the hypothesis of Normality of errors is not verified.
        \item For asymmetric distributions, the authors should be careful not to show in tables or figures symmetric error bars that would yield results that are out of range (e.g. negative error rates).
        \item If error bars are reported in tables or plots, The authors should explain in the text how they were calculated and reference the corresponding figures or tables in the text.
    \end{itemize}

\item {\bf Experiments Compute Resources}
    \item[] Question: For each experiment, does the paper provide sufficient information on the computer resources (type of compute workers, memory, time of execution) needed to reproduce the experiments?
    \item[] Answer: \answerYes{} 
    \item[] Justification: We provide these details in \Cref{sec:app_train_details}.
    \item[] Guidelines:
    \begin{itemize}
        \item The answer NA means that the paper does not include experiments.
        \item The paper should indicate the type of compute workers CPU or GPU, internal cluster, or cloud provider, including relevant memory and storage.
        \item The paper should provide the amount of compute required for each of the individual experimental runs as well as estimate the total compute. 
        \item The paper should disclose whether the full research project required more compute than the experiments reported in the paper (e.g., preliminary or failed experiments that didn't make it into the paper). 
    \end{itemize}
    
\item {\bf Code Of Ethics}
    \item[] Question: Does the research conducted in the paper conform, in every respect, with the NeurIPS Code of Ethics \url{https://neurips.cc/public/EthicsGuidelines}?
    \item[] Answer: \answerYes{} 
    \item[] Justification: We confirm that our submissions complies with the Code of Ethics.
    \item[] Guidelines:
    \begin{itemize}
        \item The answer NA means that the authors have not reviewed the NeurIPS Code of Ethics.
        \item If the authors answer No, they should explain the special circumstances that require a deviation from the Code of Ethics.
        \item The authors should make sure to preserve anonymity (e.g., if there is a special consideration due to laws or regulations in their jurisdiction).
    \end{itemize}

\item {\bf Broader Impacts}
    \item[] Question: Does the paper discuss both potential positive societal impacts and negative societal impacts of the work performed?
    \item[] Answer: \answerNA{} 
    \item[] Justification: In our opinion, this work does not directly lead to societal impacts since it focuses on a general improvement of our understanding of deep learning optimization methods.
    \item[] Guidelines:
    \begin{itemize}
        \item The answer NA means that there is no societal impact of the work performed.
        \item If the authors answer NA or No, they should explain why their work has no societal impact or why the paper does not address societal impact.
        \item Examples of negative societal impacts include potential malicious or unintended uses (e.g., disinformation, generating fake profiles, surveillance), fairness considerations (e.g., deployment of technologies that could make decisions that unfairly impact specific groups), privacy considerations, and security considerations.
        \item The conference expects that many papers will be foundational research and not tied to particular applications, let alone deployments. However, if there is a direct path to any negative applications, the authors should point it out. For example, it is legitimate to point out that an improvement in the quality of generative models could be used to generate deepfakes for disinformation. On the other hand, it is not needed to point out that a generic algorithm for optimizing neural networks could enable people to train models that generate Deepfakes faster.
        \item The authors should consider possible harms that could arise when the technology is being used as intended and functioning correctly, harms that could arise when the technology is being used as intended but gives incorrect results, and harms following from (intentional or unintentional) misuse of the technology.
        \item If there are negative societal impacts, the authors could also discuss possible mitigation strategies (e.g., gated release of models, providing defenses in addition to attacks, mechanisms for monitoring misuse, mechanisms to monitor how a system learns from feedback over time, improving the efficiency and accessibility of ML).
    \end{itemize}
    
\item {\bf Safeguards}
    \item[] Question: Does the paper describe safeguards that have been put in place for responsible release of data or models that have a high risk for misuse (e.g., pretrained language models, image generators, or scraped datasets)?
    \item[] Answer: \answerNA{} 
    \item[] Justification: We do not release data or models.
    \item[] Guidelines:
    \begin{itemize}
        \item The answer NA means that the paper poses no such risks.
        \item Released models that have a high risk for misuse or dual-use should be released with necessary safeguards to allow for controlled use of the model, for example by requiring that users adhere to usage guidelines or restrictions to access the model or implementing safety filters. 
        \item Datasets that have been scraped from the Internet could pose safety risks. The authors should describe how they avoided releasing unsafe images.
        \item We recognize that providing effective safeguards is challenging, and many papers do not require this, but we encourage authors to take this into account and make a best faith effort.
    \end{itemize}

\item {\bf Licenses for existing assets}
    \item[] Question: Are the creators or original owners of assets (e.g., code, data, models), used in the paper, properly credited and are the license and terms of use explicitly mentioned and properly respected?
    \item[] Answer: \answerYes{} 
    \item[] Justification: We cite the code, data, and models we use in our work, most of them directly when we discuss the corresponding experimental results (\Cref{sec:overparam_deep_learning} and \Cref{sec:llm}) and some of them in the appendix (\Cref{sec:app_train_details}).
    \item[] Guidelines:
    \begin{itemize}
        \item The answer NA means that the paper does not use existing assets.
        \item The authors should cite the original paper that produced the code package or dataset.
        \item The authors should state which version of the asset is used and, if possible, include a URL.
        \item The name of the license (e.g., CC-BY 4.0) should be included for each asset.
        \item For scraped data from a particular source (e.g., website), the copyright and terms of service of that source should be provided.
        \item If assets are released, the license, copyright information, and terms of use in the package should be provided. For popular datasets, \url{paperswithcode.com/datasets} has curated licenses for some datasets. Their licensing guide can help determine the license of a dataset.
        \item For existing datasets that are re-packaged, both the original license and the license of the derived asset (if it has changed) should be provided.
        \item If this information is not available online, the authors are encouraged to reach out to the asset's creators.
    \end{itemize}

\item {\bf New Assets}
    \item[] Question: Are new assets introduced in the paper well documented and is the documentation provided alongside the assets?
    \item[] Answer: \answerNA{} 
    \item[] Justification: We do not introduce new assets.
    \item[] Guidelines:
    \begin{itemize}
        \item The answer NA means that the paper does not release new assets.
        \item Researchers should communicate the details of the dataset/code/model as part of their submissions via structured templates. This includes details about training, license, limitations, etc. 
        \item The paper should discuss whether and how consent was obtained from people whose asset is used.
        \item At submission time, remember to anonymize your assets (if applicable). You can either create an anonymized URL or include an anonymized zip file.
    \end{itemize}

\item {\bf Crowdsourcing and Research with Human Subjects}
    \item[] Question: For crowdsourcing experiments and research with human subjects, does the paper include the full text of instructions given to participants and screenshots, if applicable, as well as details about compensation (if any)? 
    \item[] Answer: \answerNA{} 
    \item[] Justification: We do not conduct studies with human subjects.
    \item[] Guidelines:
    \begin{itemize}
        \item The answer NA means that the paper does not involve crowdsourcing nor research with human subjects.
        \item Including this information in the supplemental material is fine, but if the main contribution of the paper involves human subjects, then as much detail as possible should be included in the main paper. 
        \item According to the NeurIPS Code of Ethics, workers involved in data collection, curation, or other labor should be paid at least the minimum wage in the country of the data collector. 
    \end{itemize}

\item {\bf Institutional Review Board (IRB) Approvals or Equivalent for Research with Human Subjects}
    \item[] Question: Does the paper describe potential risks incurred by study participants, whether such risks were disclosed to the subjects, and whether Institutional Review Board (IRB) approvals (or an equivalent approval/review based on the requirements of your country or institution) were obtained?
    \item[] Answer: \answerNA{} 
    \item[] Justification: Our work does not require an IRB approval.
    \item[] Guidelines:
    \begin{itemize}
        \item The answer NA means that the paper does not involve crowdsourcing nor research with human subjects.
        \item Depending on the country in which research is conducted, IRB approval (or equivalent) may be required for any human subjects research. If you obtained IRB approval, you should clearly state this in the paper. 
        \item We recognize that the procedures for this may vary significantly between institutions and locations, and we expect authors to adhere to the NeurIPS Code of Ethics and the guidelines for their institution. 
        \item For initial submissions, do not include any information that would break anonymity (if applicable), such as the institution conducting the review.
    \end{itemize}

\end{enumerate}

\end{document}